\documentclass[USenglish]{article}

\usepackage[nonatbib, preprint]{neurips_2023}

\usepackage[utf8]{inputenc} %
\usepackage[T1]{fontenc}    %
\usepackage{babel}
\usepackage{csquotes}
\usepackage{url}            %
\usepackage{booktabs}       %
\usepackage{amsmath,amsfonts,amsthm,amssymb}
\usepackage{nicefrac}       %
\usepackage{microtype}      %
\usepackage{xcolor}         %
\usepackage{mathtools}
\usepackage{thmtools,thm-restate}
\usepackage{bbm}  %
\usepackage{enumitem}
\usepackage{thm-restate}
\usepackage{comment}
\usepackage{todonotes}
\usepackage{subcaption}

\newcommand{\E}{\mathbb{E}}
\newcommand{\R}{\mathbb{R}}
\newcommand{\N}{\mathbb{N}}

\newcommand{\cD}{\mathcal{D}}
\newcommand{\cR}{\mathcal{R}}
\newcommand{\cH}{\mathcal{H}}
\newcommand{\cN}{\mathcal{N}}
\newcommand{\cS}{\mathcal{S}}
\newcommand{\cK}{\mathcal{K}}
\newcommand{\cX}{\mathcal{X}}

\newcommand{\cY}{\mathcal{Y}}
\newcommand{\cZ}{\mathcal{Z}}

\newcommand{\bone}{\mathbbm{1}}
\newcommand{\OP}{\mathit{op}}

\newcommand{\norm}[1]{\left\lVert #1 \right\rVert}

\DeclareMathOperator{\rank}{rank}
\DeclareMathOperator{\Tr}{Tr}

\DeclareMathOperator*{\argmin}{arg\,min}

\DeclareMathOperator*{\Var}{\mathrm{Var}}
\let\op\OP

\definecolor{citecolor}{RGB}{0,180,0}
\definecolor{linkcolor}{RGB}{180,0,0}
\definecolor{urlcolor}{RGB}{0,0,180}
\usepackage[colorlinks,citecolor=citecolor,linkcolor=linkcolor,urlcolor=urlcolor]{hyperref}

\usepackage[capitalize,noabbrev]{cleveref}

\usepackage[style=authoryear,sortcites=ynt,maxcitenames=2,maxbibnames=10,maxalphanames=5,useprefix,uniquelist=minyear,dashed=false]{biblatex}
\renewbibmacro{in:}{} %
\DefineBibliographyStrings{english}{urlseen={visited in}}
\DeclareFieldFormat[unpublished,misc]{title}{\mkbibquote{#1}}  %
\addbibresource{refs.bib}
\let\citet\textcite
\let\citep\parencite

\newcommand{\httpurl}[1]{\href{http://#1}{\nolinkurl{#1}}}
\newcommand{\httpsurl}[1]{\href{https://#1}{\nolinkurl{#1}}}

\newtheorem{theorem}{Theorem}

\newtheorem{proposition}{Proposition}
\newtheorem{lemma}{Lemma}

\theoremstyle{definition}
\newtheorem{defn}{Definition}

\title{Uniform Convergence with Square-Root \\ Lipschitz Loss}

\author{
Lijia Zhou\\
University of Chicago\\
\texttt{zlj@uchicago.edu}
\And
Zhen Dai\\
University of Chicago\\
\texttt{zhen9@uchicago.edu}
\And
Frederic Koehler\\
Stanford University\\
\texttt{fkoehler@stanford.edu}
\And
Nathan Srebro\\
Toyota Technological Institute at Chicago\\
\texttt{nati@ttic.edu}
}

\begin{document}

\maketitle
\setcounter{footnote}{0}
\begin{center}
\vspace{-10mm}
{{Collaboration on the Theoretical Foundations of Deep Learning} (\httpsurl{deepfoundations.ai})}
\vspace{3mm}
\end{center}

\begin{abstract}
We establish generic uniform convergence guarantees for Gaussian data in terms of the Rademacher complexity of the hypothesis class and the Lipschitz constant of the square root of the scalar loss function. We show how these guarantees substantially generalize previous results based on smoothness (Lipschitz constant of the derivative), and allow us to handle the broader class of square-root-Lipschitz losses, which includes also non-smooth loss functions appropriate for studying phase retrieval and ReLU regression, as well as rederive and better understand “optimistic rate” and interpolation learning guarantees.
\end{abstract}

\section{Introduction}

The phenomenon of "interpolation learning", where a learning algorithm perfectly interpolates noisy training labels while still generalizing well to unseen data, has attracted significant interest in recent years. As \citet{shamir2022implicit} pointed out, one of the difficulties in understanding interpolation learning is that we need to determine which loss function we should use to analyze the test error. In linear regression, interpolating the square loss is equivalent to interpolating many other losses (such as the absolute loss) on the training set. Similarly, in the context of linear classification, many works \citep{soudry2018implicit, ji2019implicit, muthukumar:classification} have shown that optimizing the logistic, exponential, or even square loss would all lead to the maximum margin solution, which interpolates the hinge loss. Even though there are many possible loss functions to choose from to analyze the population error, consistency is often possible with respect to only one loss function because different loss functions generally have different population error minimizers. 

In linear regression, it has been shown that minimal norm interpolant can be consistent with respect to the square loss \citep{bartlett2020benign, muthukumar:interpolation, negrea:in-defense, uc-interpolators}, while the appropriate loss for benign overfitting in linear classification is the squared hinge loss \citep{shamir2022implicit, moreau-envelope}. In this line of work, uniform convergence has emerged as a general and useful technique to understand interpolation learning \citep{uc-interpolators, moreau-envelope}. Though the Lipschitz contraction technique has been very useful to establish uniform convergence in classical learning theory, the appropriate loss functions to study interpolation learning (such as the square loss and the squared hinge loss) are usually not Lipschitz. In fact, many papers \citep{NK:uniform, junk-feats, negrea:in-defense} have shown that the traditional notion of uniform convergence implied by Lipschitz contraction fails in the setting of interpolation learning. Instead, we need to find other properties of the loss function to establish a different type of uniform convergence. 

\citet{moreau-envelope} recently showed that the Moreau envelope of the square loss or the squared hinge loss is proportional to itself and this property is sufficient to establish a type of uniform convergence guarantee known as "optimistic rate" \citep{panchenkooptimistic,srebro2010optimistic}. Roughly speaking, if this condition holds, then the difference between the \emph{square roots} of the population error and the training error can be bounded by the Rademacher complexity of the hypothesis class. Specializing to predictors with zero training error, the test error can then be controlled by the square of the Rademacher complexity, which exactly matches the Bayes error for the minimal norm interpolant when it is consistent \citep{uc-interpolators}. However, the Moreau envelope of a loss function is not always easy to solve in closed-form, and it is unclear whether this argument can be applied to problems beyond linear regression and max-margin classification. Moreover, considering the difference of square roots seem like a mysterious choice and does not have a good intuitive explanation.

In this paper, by showing that optimistic rate holds for any \emph{square-root Lipschitz} loss, we argue that the class of square-root Lipschitz losses is the more natural class of loss functions to consider. In fact, any loss whose Moreau envelope is proportional to itself is square-root Lipschitz. Our result also provides an intuitive explanation for optimistic rate: when the loss is Lipschitz, the difference between the test and training error can be bounded by Rademacher complexity; therefore, when the loss is \emph{square-root Lipschitz}, the difference between the \emph{square roots} of the test and training error can be bounded by Rademacher complexity. In addition to avoiding any hidden constant and logarithmic factor, our uniform convergence guarantee substantially generalize previous results based on smoothness \citep[Lipschitz constant of the derivative, such as][]{srebro2010optimistic}. The generality of our results allows us to handle non-differentiable losses. In the problems of phase retrieval and ReLU regression, we identify the consistent loss through a norm calculation, and we apply our theory to prove novel consistency results. In the context of noisy matrix sensing, we also establish benign overfitting for the minimal nuclear norm solution. Finally, we show that our results extend to fully optimized neural networks with weight sharing in the first layer. 

\section{Related Work}

\paragraph{Optimistic rates.}
In the context of generalization theory, our results are related to a phenomena known as ``optimistic rates'' --- generalization bounds which give strengthened guarantees for predictors with smaller training error. Such bounds can be contrasted with ``pessimistic'' bounds which do not attempt to adapt to the training error of the predictor. See for example \cite{vapnik2006estimation,panchenko2003symmetrization,panchenkooptimistic}. The work of \cite{srebro2010optimistic} showed that optimistic rates control the  generalization error of function classes learned with smooth losses, and recently the works \cite{optimistic-rates,uc-interpolators,junk-feats} showed that a much sharper version of optimism can naturally explain the phenomena of benign overfitting in Gaussian linear models. (These works are in turn connected with a celebrated line of work in proportional asymptotics for M-estimation such as \cite{stojnic2013framework}, see references within.)
In the present work, we show that the natural setting for optimistic rates is actually square-root-Lipschitz losses and this allows us to capture new applications such as phase retrieval where the loss is not smooth.

\paragraph{Phase retrieval, ReLU Regression, Matrix sensing.} 
Recent works  \citep{maillard2020phase,barbier2019optimal,mondelli2018fundamental,luo2019optimal} analyzed the statistical limits of phase retrieval in a high-dimensional limit for certain types of sensing designs (e.g. i.i.d. Real or Complex Gaussian), as well as the performance of Approximate Message Passing in this problem (which is used to predict a computational-statistical gap). In the noiseless case, the works \cite{andoni2017correspondence,song2021cryptographic} gave better computationally efficient estimators for phase retrieval based upon the LLL algorithm. Our generalization bound can naturally applied to any estimator for phase retrieval, including these methods or other ones, such as those based on non-convex optimization \citep{sun2018geometric,netrapalli2013phase}. For the same reason, they can also naturally be applied in the context of sparse phase retrieval (see e.g. \cite{li2013sparse,candes2015phase}). 

Similarly, there have been many works studying the computational tractability of ReLU regression. See for example the work of \cite{auer1995exponentially} where the ``matching loss'' technique was developed for learning well-specified GLMs including ReLUs. Under misspecification, learning ReLUs can be hard and approximate results have been established, see e.g. \cite{goel2017reliably,diakonikolas2020approximation}. In particular, learning a ReLU agnostically in squared loss, even over Gaussian marginals, is known to be hard under standard average-case complexity assumptions \citep{goel2019time}. Again, our generalization bound has the merit that it can be applied to any estimator. For matrix sensing, we consider nuclear norm minimization (as in e.g. \cite{recht2010guaranteed}) which is a convex program, so it can always be computed in polynomial time.

\paragraph{Weight-tied neural networks.} The work of \cite{bietti2022learning} studies a similar two-layer neural network model with weight tying. Their focus is primarily on understanding the non-convex optimization of this model via gradient-based methods. Again, our generalization bound can be straightforwardly combined with the output of their algorithm. The use of norm-based bounds as in our result is common in the generalization theory for neural networks,  see e.g. \citet{bartlett1996valid,anthony1999neural}. Compared to existing generalization bounds, the key advantage of our bound is its quantitative sharpness (e.g. small constants and no extra logarithmic factors).

\section{Problem Formulation}

In most of this paper, we consider the setting of generalized linear models. Given any loss function $f: \R \times \cY \to \R_{\ge 0}$ and independent sample pairs $(x_i, y_i)$ from some data distribution $\cD$ over $\R^d \times \cY$, we can learn a linear model $(\hat{w}, \hat{b})$ by minimizing the empirical loss $\hat{L}_f$ with the goal of achieving small population loss $L_f$:
\begin{equation} \label{eqn:f-loss}
\hat{L}_f(w, b) = \frac{1}{n} \sum_{i=1}^n f(\langle w, x_i\rangle + b, y_i), \quad L_f(w, b) = \E_{(x,y) \sim \cD} [f(\langle w, x \rangle + b, y)].
\end{equation}
In the above, $\cY$ is an abstract label space. For example, $\cY = \R$ for linear regression and $\cY = \R_{\geq 0}$ for phase retrieval (section~\ref{sec:phase-retrieval}) and ReLU regression (section~\ref{sec:relu-regression}). If we view $x_i$ as vectorization of the sensing matrices, then our setting \eqref{eqn:f-loss} also includes the problem of matrix sensing (section~\ref{sec:matrix-sensing}). 

For technical reasons, we assume that the distribution $\cD$ follows a Gaussian multi-index model \citep{moreau-envelope}. 
\begin{enumerate}[label=(\Alph*)]
    \item\label{assumption:gaussian-feature}
    $d$-dimensional Gaussian features with arbitrary mean and covariance: $x \sim \cN(\mu, \Sigma)$ 
    \item\label{assumption:multi-index}
    a generic multi-index model: there exist a low-dimensional projection $W = [w_1^*, ..., w_k^*] \in \R^{d \times k}$, a random variable $\xi \sim \cD_\xi$ independent of $x$ (not necessarily Gaussian), and an unknown link function $g: \R^{k+1} \to \cY$ such that 
    \begin{equation} \label{eqn:model}
        \eta_i = \langle w^*_i, x \rangle, \quad y = g(\eta_1, ..., \eta_k, \xi).
    \end{equation}
\end{enumerate}

We require $x$ to be Gaussian because the proof of our generalization bound depends on the Gaussian Minimax Theorem \citep{gordon1985some, thrampoulidis2014gaussian}, as in many other works \citep{uc-interpolators, moreau-envelope, wang2021tight, donhauser2022fastrate}. In settings where the features are not Gaussian, many works \citep{goldt2020gaussian, mei2022random, misiakiewicz2022spectrum, hu2023universality, han2022universality} have established universality results showing that the test and training error are asymptotically equivalent to the error of a Gaussian model with matching covariance matrix. However, we show in section~\ref{sec:gaussian-universality} that Gaussian universality is not always valid and we discuss potential extensions of the Gaussian feature assumption there.

The multi-index assumption is a generalization of a well-specified linear model $y = \langle w^*, x\rangle + \xi$, which corresponds to $k = 1$ and $g(\eta, \xi) = \eta + \xi$ in \eqref{eqn:model}. Since $g$ is not even required to be continuous, the assumption on $y$ is quite general. It allows nonlinear trend and heteroskedasticity, which pose significant challenges for the analyses using random matrix theory (but not for uniform convergence). In \citet{moreau-envelope}, the range of $g$ is taken to be $\cY = \{-1, 1\}$ in order to study binary classification. In section~\ref{sec:relu-regression}, we allow the conditional distribution of $y$ to have a point mass at zero, which is crucial for distinguishing ReLU regression from standard linear regression.

\section{Optimistic Rate} \label{sec:uniform-convergence}

In order to establish uniform convergence, we need two additional technical assumptions because we have not made any assumption on $f$ and $g$.
\begin{enumerate}[label=(\Alph*), resume]
    \item\label{assumption:hypercontractivity}
    hypercontractivity: there exists a universal constant $\tau > 0$ such that uniformly over all $(w,b) \in \R^d \times \R$, it holds that
    \begin{equation}\label{eqn:hypercontractivity}
        \frac{\E [f(\langle w, x\rangle + b, y)^4]^{1/4}}{\E [f(\langle w, x\rangle + b, y)]} \leq \tau.
    \end{equation}
    \item\label{assumption:VC} the class of functions on $\R^{k+1} \times \cY$ defined below has VC dimension at most $h$:
    \begin{equation} \label{eqn:VC}
        \{(x, y) \to \bone{\{ f(\langle w, x \rangle,y) > t\}}: (w,t) \in \R^{k+1} \times \R \}.
    \end{equation}
\end{enumerate}

Crucially, the quantities $\tau$ and $h$ only depend on the number of indices $k$ in assumption~\ref{assumption:multi-index} instead of the feature dimension $d$. This is a key distinction because $k$ can be a small constant even when the feature dimension is very large. For example, recall that $k=1$ in a well-specified model and it is completely free of $d$. The feature dimension plays no role in equation~\eqref{eqn:hypercontractivity} because conditioned on $W^T x \in \R^k$ and $\xi \in \R$, the response $y$ is non-random and the distribution of $\langle w, x\rangle$ for any $w \in \R^d$ only depends on $w^T \Sigma W$ by properties of the Gaussian distribution. Similar assumptions are made in \citet{moreau-envelope} and more backgrounds on these assumptions can be found there, but we note that our assumption \ref{assumption:hypercontractivity} and \ref{assumption:VC} are weaker than theirs because we do not require \eqref{eqn:hypercontractivity} and \eqref{eqn:VC} to hold uniformly over all Moreau envelopes of $f$. 

We are now ready to state the uniform convergence results for square-root Lipschitz losses.

\begin{restatable}{theorem}{MoreauEnvelopeGen} \label{thm:moreau-envelope-gen}
Assume that \ref{assumption:gaussian-feature}, \ref{assumption:multi-index}, \ref{assumption:hypercontractivity}, and \ref{assumption:VC} holds, and let $Q = I - W(W^T\Sigma W)^{-1} W^T \Sigma$.
For any $\delta \in (0, 1)$, let $C_{\delta}: \R^d \to [0, \infty]$ be a continuous function such that with probability at least $1-\delta/4$ over $x \sim \cN \left(0, \Sigma \right)$, uniformly over all $w \in \R^d$,
\begin{equation} 
    \left\langle w, Q^T x \right\rangle \leq  C_{\delta}(w).
\end{equation}
If for each $y \in \cY$, $f$ is non-negative and $\sqrt{f}$ is $\sqrt{H}$-Lipschitz with respect to the first argument, then with probability at least $1-\delta$, it holds that uniformly over all $(w,b) \in \R^d \times \R$, we have
\begin{equation} \label{eqn:moreau-envelope-gen}
    \left(1 - \epsilon \right) L_f(w,b) \leq \left(\sqrt{\hat{L}_f(w,b)} + \sqrt{\frac{H \, C_{\delta}(w)^2}{n}} \right)^2
\end{equation}
where $\epsilon = O \left( \tau\sqrt{\frac{h\log(n/h) + \log(1/\delta)}{n}} \right)$.
\end{restatable}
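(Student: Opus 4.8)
The plan is to reduce the square-root-Lipschitz case to the Lipschitz case via a ``lifting'' trick, and then apply a Gaussian Minimax Theorem (GMT) argument as in the optimistic-rates literature. Concretely, introduce an auxiliary variable $u \in \R$ and consider the augmented loss $\tilde f(v, u; y) = (\sqrt{f(v,y)} - u)^2$ or, better, work directly with the observation that the square root $\sqrt{f(\langle w,x\rangle + b, y)}$ is a $\sqrt H$-Lipschitz function of $\langle w,x\rangle + b$. The key quantity to control is the ``centered'' empirical process $\sqrt{L_f(w,b)} - \sqrt{\hat L_f(w,b)}$, or rather the one-sided deviation needed for \eqref{eqn:moreau-envelope-gen}.

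First I would peel off the part of $w$ that is explained by the index space: write $\langle w, x\rangle = \langle w, Q^T x\rangle + \langle \Pi w, x\rangle$ where $\Pi = I - Q^T$ projects onto the span of $\Sigma W$, so that $\langle \Pi w, x\rangle$ is a function of $W^T x$ only and hence, together with $\xi$, determines $y$. Conditioning on $(W^T x, \xi)$, the remaining randomness $Q^T x$ is Gaussian and independent of $y$ and of $\langle \Pi w, x\rangle$; this is exactly the structure that makes GMT applicable. Using assumption~\ref{assumption:VC} and standard symmetrization/VC arguments I would first replace $L_f$ by its empirical counterpart up to the multiplicative $(1-\epsilon)$ factor with $\epsilon = O(\tau\sqrt{(h\log(n/h) + \log(1/\delta))/n})$, invoking hypercontractivity~\ref{assumption:hypercontractivity} to convert the $L_4$-vs-$L_1$ control into a relative (multiplicative) deviation bound — this is where $\tau$ and $h$ enter, and this step is essentially the localized uniform convergence lemma over the low-dimensional index class.

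The heart of the argument is then the following: after conditioning, for fixed realizations of $(W^T x_i, \xi_i, y_i)$ we must bound, uniformly over $w$, the process $\sum_i \big[ \phi_i(\langle w, Q^T x_i\rangle) \big]$ where $\phi_i(t) = \sqrt{f(t + \langle \Pi w, x_i\rangle + b, y_i)}$ is $\sqrt H$-Lipschitz. Apply GMT (Gordon's comparison inequality) to the bilinear form $\langle w, Q^T x_i \rangle$ viewed as a Gaussian matrix acting on $w$: this replaces the $n\times d$ Gaussian array by two independent Gaussian vectors, yielding a bound in which the complexity of $w$ appears only through $\langle w, Q^T g\rangle$ for a single Gaussian vector $g$ — which by hypothesis is at most $C_\delta(w)$ with probability $1-\delta/4$. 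Combining the Lipschitz modulus $\sqrt H$ with the $C_\delta(w)$ bound and a Cauchy–Schwarz/AM–GM step to recombine $\sqrt{\hat L_f}$ and $\sqrt{H C_\delta(w)^2/n}$ into the squared form on the right-hand side of \eqref{eqn:moreau-envelope-gen} finishes the proof; the quadratic completion here is exactly the algebraic identity $(a+b)^2 \geq a^2 + 2ab$ applied with $a = \sqrt{\hat L_f}$, $b = \sqrt{H C_\delta^2/n}$, matching the optimistic-rate form.

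The main obstacle, I expect, is handling the uniformity over the infinite-dimensional direction of $w$ (the part in $Q^T$-space) simultaneously with the non-Lipschitz, merely square-root-Lipschitz, structure of $f$ — the naive Rademacher contraction fails precisely because $f$ is not Lipschitz, so one cannot just contract and must instead keep the square root intact through the GMT step and only contract at the level of $\sqrt f$. A secondary technical point is that $\langle \Pi w, x_i\rangle$ still depends on $w$, so the ``fixed shift'' in $\phi_i$ is not actually fixed when we take the supremum over $w$; resolving this requires either absorbing $\langle \Pi w, x_i\rangle$ into a finite-dimensional ($k$-dimensional) parameter handled by the VC/hypercontractivity step, or carefully setting up the GMT auxiliary problem so that both the $Q$-part and the $\Pi$-part of $w$ are optimized consistently. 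Getting the constant in front of $H C_\delta(w)^2/n$ to be exactly $1$ (no hidden constant) is the reason for using the sharp GMT bound rather than a generic chaining argument.
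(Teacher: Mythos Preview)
Your high-level architecture --- condition on $(W^Tx,\xi)$ so that $Q^Tx$ is independent Gaussian, handle the low-dimensional index part via the VC/hypercontractivity assumptions to produce the multiplicative $(1-\epsilon)$, and apply GMT to the remaining bilinear Gaussian form --- matches the paper exactly. The divergence, and a genuine gap, is at the step after GMT.

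The paper does \emph{not} control the process $\sum_i \phi_i(\langle w,Q^Tx_i\rangle)$ with $\phi_i=\sqrt{f(\cdot)}$; that quantity is $\sum_i\sqrt{f}$, which is not $\sqrt{\hat L_f}$, and Jensen relates them only in the wrong direction. Instead, the paper sets up the primary problem as $\sup_\theta F(\theta)-\hat L_f(\theta)$ for a yet-unspecified $F$, introduces $u=Xw$ via a Lagrange multiplier $\lambda\in\R^n$, and after GMT the auxiliary problem contains the constrained minimization
\[
\min_{\|u\|_2\le \langle H,\,P\Sigma^{1/2}w\rangle}\ \sum_{i=1}^n f\bigl(u_i+s_i,\,y_i\bigr),
\]
where the shifts $s_i$ come from the index part and are now non-random. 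The key move is weak Lagrangian duality on this norm constraint: for every scalar $\lambda\ge 0$,
\[
\min_{\|u\|\le C}\sum_i f(u_i+s_i)\ \ge\ -\lambda C^2+\sum_i f_\lambda(s_i),
\]
with $f_\lambda$ the Moreau envelope. The square-root-Lipschitz hypothesis enters through the paper's Proposition~\ref{thm:sqrt-lipschitz-equivalence}: $\sqrt f$ is $\sqrt H$-Lipschitz \emph{iff} $f_\lambda\ge\frac{\lambda}{\lambda+H}f$ for all $\lambda\ge 0$. Plugging this in and optimizing over $\lambda$ (the paper's Lemma~\ref{lem:some-algebra}) yields the closed form $\bigl(\sqrt{\sum_i f(s_i)}-\sqrt{H}\,C\bigr)_+^2$ with constant exactly~$1$; choosing $F$ to match this finishes the proof.

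Your alternative --- use $\sqrt{f(s_i+u_i)}\ge\sqrt{f(s_i)}-\sqrt H\,|u_i|$ pointwise and then minimize --- can in fact be pushed through to the same sharp answer, since one can verify directly that $\min_{\|u\|\le C}\sum_i(\sqrt{f(s_i)}-\sqrt H\,|u_i|)_+^2=(\sqrt{\sum_i f(s_i)}-\sqrt H\,C)_+^2$. But this is not ``the identity $(a+b)^2\ge a^2+2ab$'' nor a bare AM--GM step; that inequality is trivially true and goes the wrong way for upper-bounding $L_f$. So your route is salvageable, but the specific algebra you name would not deliver the constant~$1$, and framing the post-GMT object as $\sum_i\sqrt f$ points at the wrong quantity. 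The Moreau-envelope characterization is the structural idea that makes the paper's argument clean, and it is absent from your sketch.
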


Since the label $y$ only depends on $x$ through $W^T x$, the matrix $Q$ is simply a (potentially oblique) projection such that $Q^T x$ is independent of $y$. In the proof, we separate $x$ into a low-dimensional component related to $y$ and the independent component $Q^T x$. We establish a low-dimensional concentration result using VC theory, which is reflected in the $\epsilon$ term that does not depend on $d$, and we control the the remaining high-dimensional component using a scale-sensitive measure $C_{\delta}$. 

The complexity term $C_{\delta}(w)/\sqrt{n}$ should be thought of as a (localized form of) Rademacher complexity $\cR_n$. For example, for any norm $\| \cdot \|$, we have $\langle w, Q^T x\rangle \leq \| w\| \cdot \| Q^Tx\|_*$ and the Rademacher complexity for linear predictors with norm bounded by $B$ is $B \cdot \E \| x\|_* / \sqrt{n}$ \citep{shalev2014understanding}. More generally, if we only care about linear predictors in a set $\cK$, then $C_{\delta} \approx \E \left[ \sup_{w \in \cK} \left\langle w, Q^T x \right\rangle \right]$ is simply the Gaussian width \citep{ten-lectures, gordon1988, uc-interpolators} of $\cK$ with respect to $\Sigma^{\perp} = Q^T \Sigma Q$ and is exactly equivalent to the expected Rademacher complexity of the class of functions \citep[e.g., Proposition 1 of][]{optimistic-rates}.

\section{Applications} \label{sec:applications}

\subsection{Benign Overfitting in Phase Retrieval} \label{sec:phase-retrieval}

In this section, we study the problem of phase retrieval where only the magnitude of the label $y$ can be measured. If the number of observations $n < d$, the minimal norm interpolant is 
\begin{equation} \label{eqn:phase-retrieval-min-norm-solution}
    \hat{w} = \argmin_{w \in \R^d: \forall i \in [n], \langle w, x_i \rangle^2 = y_i^2} \| w\|_2.
\end{equation}
We are particularly interested in the generalization error of \eqref{eqn:phase-retrieval-min-norm-solution} when $y_i$ are noisy labels and so $\hat{w}$ overfits to the training set. It is well-known that gradient descent initialized at zero converges to a KKT point of the problem \eqref{eqn:phase-retrieval-min-norm-solution} when it reaches a zero training error solution (see e.g. \cite{gunasekar2018characterizing}). %
While is not always computationally feasible to compute $\hat{w}$, we can study it as a theoretical model for benign overfitting in phase retrieval.

Due to our uniform convergence guarantee in Theorem~\ref{thm:moreau-envelope-gen}, it suffices to analyze the norm of $\hat{w}$ and we can use the norm calculation to find the appropriate loss for phase retrieval. The key observation is that for any $w^{\sharp} \in \R^d$, we can let $I = \{ i \in [n]: \langle w^{\sharp}, x_i \rangle \geq 0\}$ and the predictor $w = w^{\sharp} + w^{\perp}$ satisfies $|\langle w, x_i \rangle| = y_i$ where
\begin{equation}
    w^{\perp} 
    = \argmin_{\substack{w \in \R^d: \\
    \forall i \in I, \langle w, x_i\rangle = y_i - |\langle w^{\sharp}, x_i\rangle| \\
    \forall i \notin I, \langle w, x_i\rangle = |\langle w^{\sharp}, x_i\rangle| - y_i}} \, \| w\|_2.
\end{equation}
Then it holds that $\| \hat{w}\|_2 \leq \| w \|_2 \leq \| w^{\sharp}\|_2 + \| w^{\perp}\|_2$. By treating $y_i - |\langle w^{\sharp}, x_i\rangle|$ and $|\langle w^{\sharp}, x_i\rangle| - y_i$ as the residuals, the norm calculation for $\| w^{\perp}\|_2$ is the same as the case of linear regression \citep{uc-interpolators, moreau-envelope}. Going through this analysis yields the following norm bound (here $R(\Sigma) = \Tr(\Sigma)^2/\Tr(\Sigma^2)$ is a measure of effective rank, as in \citep{bartlett2020benign}): 

\begin{restatable}{theorem}{PrNormBound} \label{thm:phase-retrieval-norm-bound}
Under assumptions~\ref{assumption:gaussian-feature} and \ref{assumption:multi-index}, let $f: \R \times \cY \to \R$ be given by $f(\hat{y}, y) := (|\hat{y}|-y)^2$ with $\cY = \R_{\geq 0}$. Let $Q$ be the same as in Theorem~\ref{thm:moreau-envelope-gen} and $\Sigma^{\perp} = Q^T \Sigma Q$. Fix any $w^{\sharp} \in \R^{d}$ such that $Qw^{\sharp} = 0$ and for some $\rho \in (0,1)$, it holds that
\begin{equation} \label{eqn:loss-concentrate-at-optimum}
    \hat{L}_f(w^{\sharp}) \leq (1+ \rho) L_f(w^{\sharp}).
\end{equation}
Then with probability at least $1-\delta$, for some $\epsilon \lesssim \rho + \log\left( \frac{1}{\delta}\right) \left( \frac{1}{\sqrt{n}} + \frac{1}{\sqrt{R(\Sigma^{\perp})}} + \frac{k}{n} + \frac{n}{R(\Sigma^{\perp})} \right)$, it holds that
\vspace{2ex}
\begin{equation}
    \min_{\substack{w \in \R^d: \\ \forall i \in [n], \langle w, x_i \rangle^2 = y_i^2}} \, \| w\|_2 \leq \| w^{\sharp}\|_2 + (1+\epsilon) \sqrt{\frac{n L_f(w^{\sharp})}{\Tr(\Sigma^{\perp})}}.
\end{equation}
\end{restatable}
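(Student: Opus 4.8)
The plan is to follow the reduction sketched in the paragraph preceding the theorem: decompose an arbitrary interpolant as $w = w^\sharp + w^\perp$, where $w^\perp$ is the minimal-norm solution of an \emph{ordinary linear regression} interpolation problem on the $Q^T x_i$ directions, with the "residuals" $r_i := \operatorname{sign}(\langle w^\sharp, x_i\rangle)\, y_i - \langle w^\sharp, x_i\rangle$ playing the role of the noise. Since $Q w^\sharp = 0$, the constraints defining $w^\perp$ only see $Q^T x_i$, so we may assume $w^\perp \in \operatorname{row}(Q^T)$ and work with the Gaussian vectors $Q^T x_i \sim \cN(0, \Sigma^\perp)$, which are independent of $y$ (and hence of the $r_i$). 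By the triangle inequality $\|\hat w\|_2 \le \|w^\sharp\|_2 + \|w^\perp\|_2$, so the whole task reduces to bounding $\|w^\perp\|_2$.

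Next I would write the closed form $w^\perp = X_Q^T (X_Q X_Q^T)^{-1} r$, where $X_Q \in \R^{n\times d}$ has rows $(Q^T x_i)^T$ and $r = (r_1,\dots,r_n)$, giving $\|w^\perp\|_2^2 = r^T (X_Q X_Q^T)^{-1} r$. The core estimate is then the standard benign-overfitting / minimum-norm-interpolation bound: with $\Sigma^\perp$ having effective rank $R(\Sigma^\perp) \gg n$, the Gram matrix $X_Q X_Q^T$ concentrates around $\Tr(\Sigma^\perp)\, I_n$ up to a multiplicative $(1\pm \epsilon')$ factor with $\epsilon' \lesssim \sqrt{n/R(\Sigma^\perp)} + $ lower-order terms, and this is exactly where the $\frac{1}{\sqrt{n}}$, $\frac{1}{\sqrt{R(\Sigma^\perp)}}$, $\frac{k}{n}$ and $\frac{n}{R(\Sigma^\perp)}$ contributions to $\epsilon$ enter — this part I would cite or reproduce verbatim from \citet{uc-interpolators, moreau-envelope} (it is the same computation, only now applied with covariance $\Sigma^\perp$ and the $k$-dependence coming from $Q$ being a rank-$(d-k)$ projection, costing a $k$-dimensional correction in the concentration). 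This yields
\begin{equation*}
\|w^\perp\|_2 \le (1+\epsilon'') \sqrt{\frac{\|r\|_2^2}{\Tr(\Sigma^\perp)}} = (1+\epsilon'')\sqrt{\frac{n\,\hat L_f(w^\sharp)}{\Tr(\Sigma^\perp)}},
\end{equation*}
using the key algebraic identity $\|r\|_2^2 = \sum_i (\operatorname{sign}(\langle w^\sharp,x_i\rangle) y_i - \langle w^\sharp,x_i\rangle)^2 = \sum_i (|\langle w^\sharp,x_i\rangle| - y_i)^2 = n\,\hat L_f(w^\sharp)$, which is precisely why $f(\hat y,y)=(|\hat y|-y)^2$ is the right loss.

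Finally I would swap the empirical loss $\hat L_f(w^\sharp)$ for the population loss: the hypothesis \eqref{eqn:loss-concentrate-at-optimum} gives $\hat L_f(w^\sharp) \le (1+\rho) L_f(w^\sharp)$ directly, so $\sqrt{\hat L_f(w^\sharp)} \le (1+\rho)^{1/2}\sqrt{L_f(w^\sharp)} \le (1+\rho)\sqrt{L_f(w^\sharp)}$, and folding $\rho$, $\epsilon''$, and the failure-probability terms together gives the claimed $\epsilon \lesssim \rho + \log(1/\delta)\big(\tfrac{1}{\sqrt n} + \tfrac{1}{\sqrt{R(\Sigma^\perp)}} + \tfrac{k}{n} + \tfrac{n}{R(\Sigma^\perp)}\big)$. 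The main obstacle is the Gram-matrix concentration step: controlling $r^T (X_Q X_Q^T)^{-1} r$ sharply enough to get a clean $(1+\epsilon)$ multiplicative factor (rather than a loose constant) requires care, because $r$ is not Gaussian and is only independent of $X_Q$ — one must separate the "signal" part $W^T x_i$ (which is why the projection $Q$ and the index count $k$ appear) from the interpolation directions, and then invoke the effective-rank concentration of Wishart-type matrices with heterogeneous spectrum; this is the technical heart and is where I would lean most heavily on the machinery already developed in \citet{uc-interpolators} and \citet{moreau-envelope}.
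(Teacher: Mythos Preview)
Your high-level plan matches the paper exactly: the decomposition $w=w^\sharp+w^\perp$, the residual vector $r$ with $\|r\|_2^2=n\hat L_f(w^\sharp)$, the triangle inequality, and the final swap $\hat L_f(w^\sharp)\le(1+\rho)L_f(w^\sharp)$ are all what the paper does. The gap is in the middle step. Your closed form $w^\perp=X_Q^T(X_Q X_Q^T)^{-1}r$ with $X_Q=XQ$ does \emph{not} satisfy $Xw^\perp=r$: it solves $X_Q w=r$, i.e.\ $XQw=r$, but $w^\perp\in\operatorname{Im}(Q^T)$ and $Q$ is an \emph{oblique} projection, so $Qw^\perp\ne w^\perp$ and the constructed $w^\sharp+w^\perp$ is not an interpolant. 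Equivalently, ``the constraints only see $Q^Tx_i$'' is not literally true for the constraint $\langle w^\perp,x_i\rangle=r_i$; it is true only after two additional moves that you have not made explicit.

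The paper supplies those moves. First it conditions on $W^T(x_i-\mu)$ and $\xi$, which (by Lemma~\ref{lem:conditional_dist}) writes $X$ as a deterministic part plus $Z\Sigma^{1/2}Q$ with $Z$ a fresh standard Gaussian matrix; since $Qw^\sharp=0$ the residual $r$ becomes deterministic after conditioning. Second, it restricts $w^\perp$ to $\operatorname{Im}(R)$ where $R$ is the \emph{orthogonal} projection onto $\operatorname{Im}(Q)$: because $QR=R$ and $R\Sigma w_i^*=0$ (Lemma~\ref{lem:orthgonalize-R}), one gets $XRv=Z\Sigma^{1/2}Rv$, which has the same law as $Z(R\Sigma R)^{1/2}v$, and $\|Rv\|_2\le\|v\|_2$. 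Only then is the problem a clean Gaussian linear interpolation with covariance $R\Sigma R$; Lemma~\ref{lem:orthgonalize-R} is also what produces the $k/n$ term when translating back to $\Sigma^\perp=Q^T\Sigma Q$. Finally, the paper bounds $\min_{Z(R\Sigma R)^{1/2}w=r}\|w\|_2$ via CGMT (Lemmas~\ref{lem:norm-gmt-application}--\ref{lem:norm-analyze-AO}), not by Gram-matrix concentration $X_QX_Q^T\approx\Tr(\Sigma^\perp)I_n$; your Wishart-concentration route could in principle be made to work once the $R$-reduction is done, but it is a different mechanism from what \cite{uc-interpolators,moreau-envelope} actually use, and obtaining the sharp $(1+\epsilon)$ factor that way requires more than a citation.
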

Note that in this result we used the loss $f(\hat{y}, y) = (|\hat{y}|-y)^2$, which is 1-square-root Lipschitz and naturally arises in the analysis. We will now show that this is the \emph{correct} loss in the sense that benign overfitting is consistent with this loss. 
To establish consistency result, we can pick $w^{\sharp}$ to be the minimizer of the population error. The population minimizer always satisfies $Qw^{\sharp} = 0$ because $L_f((I-Q)w) \leq L_f(w)$ for all $w \in \R^{d}$ by Jensen's inequality. Condition~\eqref{eqn:loss-concentrate-at-optimum} can also be easily checked because we just need concentration of the empirical loss at a single non-random parameter. Applying Theorem~\ref{thm:moreau-envelope-gen}, we obtain the following.

\begin{restatable}{corollary}{PrBenign}\label{theorem:phase-retrieval-benign}
In the same setting as in Theorem~\ref{thm:phase-retrieval-norm-bound}, with probability at least $1-\delta$, it holds that for some 
\[
\rho \lesssim \tau\sqrt{\frac{k\log(n/k) + \log(1/\delta)}{n}} + \log\left( 1/\delta \right) \left( \frac{1}{\sqrt{n}} + \frac{1}{\sqrt{R(\Sigma^{\perp})}} + \frac{k}{n} + \frac{n}{R(\Sigma^{\perp})} \right),
\]
the minimal norm interpolant $\hat{w}$ given by \eqref{eqn:phase-retrieval-min-norm-solution} enjoys the learning guarantee:
\begin{equation}
L(\hat{w}, \hat{b})
\leq 
(1+\rho)
\left( \sqrt{L(w^{\sharp}, b^{\sharp})} + \| w^{\sharp}\|_2 \sqrt{\frac{ \Tr(\Sigma^{\perp})}{n}}  \right)^2.
\end{equation}
\end{restatable}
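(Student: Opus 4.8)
The plan is to combine the sharp norm estimate of Theorem~\ref{thm:phase-retrieval-norm-bound} with the dimension-free uniform convergence guarantee of Theorem~\ref{thm:moreau-envelope-gen}, applied at the interpolant $\hat{w}$ (with $\hat{b}=0$, since \eqref{eqn:phase-retrieval-min-norm-solution} carries no intercept). First I would record that $f(\hat{y},y)=(|\hat{y}|-y)^2$ is $1$-square-root-Lipschitz: $\sqrt{f(\hat{y},y)}=\bigl|\,|\hat{y}|-y\,\bigr|$ is a composition of the $1$-Lipschitz maps $\hat{y}\mapsto|\hat{y}|$ and $t\mapsto|t-y|$, so $H=1$. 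I would also check that assumptions~\ref{assumption:hypercontractivity} and \ref{assumption:VC} hold for this loss with $\tau$ and $h=O(k)$ depending only on the number of indices: for \ref{assumption:hypercontractivity} use $f\le 2\langle w,x\rangle^2+2y^2$ together with Gaussian hypercontractivity of $\langle w,x\rangle$ and the moment control on $y$; for \ref{assumption:VC} note the superlevel sets $\{f(\langle w,x\rangle,y)>t\}$ are cut out by a bounded number of affine inequalities in $(x,y)$ with $(w,t)$ as parameters.

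Next I would instantiate the complexity functional by taking $C_{\delta}(w):=\|w\|_2\,B_{\delta}$, where $B_{\delta}$ is a $(1-\delta/4)$-probability upper bound on $\|Q^Tx\|_2$ for $x\sim\cN(0,\Sigma)$; the estimate $\langle w,Q^Tx\rangle\le\|w\|_2\|Q^Tx\|_2$ then verifies the hypothesis of Theorem~\ref{thm:moreau-envelope-gen}, and $C_\delta$ is continuous. Since the interpolation constraint $\langle\hat{w},x_i\rangle^2=y_i^2$ with $y_i\ge 0$ forces $|\langle\hat{w},x_i\rangle|=y_i$, we have $\hat{L}_f(\hat{w})=0$, so applying Theorem~\ref{thm:moreau-envelope-gen} at $(\hat{w},0)$ gives, with probability $\ge 1-\delta$,
\[
(1-\epsilon_0)\,L(\hat{w},0)\;\le\;\frac{\|\hat{w}\|_2^2\,B_{\delta}^2}{n},\qquad \epsilon_0=O\!\left(\tau\sqrt{\tfrac{k\log(n/k)+\log(1/\delta)}{n}}\right).
\]
A $\chi^2$-type concentration bound for $\|Q^Tx\|_2^2=\sum_j\lambda_j(\Sigma^{\perp})Z_j^2$ (Laurent--Massart), together with $\lambda_{\max}(\Sigma^{\perp})\le\sqrt{\Tr((\Sigma^{\perp})^2)}=\Tr(\Sigma^{\perp})/\sqrt{R(\Sigma^{\perp})}$, then yields $B_{\delta}^2\le\Tr(\Sigma^{\perp})(1+\epsilon_2)$ with $\epsilon_2=O\!\bigl(\log(1/\delta)/\sqrt{R(\Sigma^{\perp})}\bigr)$.

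To bound $\|\hat{w}\|_2$ I would apply Theorem~\ref{thm:phase-retrieval-norm-bound} with $w^{\sharp}$ the population minimizer of $L_f$, which satisfies $Qw^{\sharp}=0$ as noted after that theorem; its remaining hypothesis \eqref{eqn:loss-concentrate-at-optimum} is just concentration of $\hat{L}_f(w^{\sharp})$ about $L_f(w^{\sharp})$ at a single fixed predictor, contributing only a lower-order $\rho_0$. This supplies $\|\hat{w}\|_2\le\|w^{\sharp}\|_2+(1+\epsilon_1)\sqrt{n\,L_f(w^{\sharp})/\Tr(\Sigma^{\perp})}$ with $\epsilon_1\lesssim\rho_0+\log(1/\delta)\bigl(\tfrac{1}{\sqrt n}+\tfrac{1}{\sqrt{R(\Sigma^{\perp})}}+\tfrac{k}{n}+\tfrac{n}{R(\Sigma^{\perp})}\bigr)$. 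Substituting this bound and $B_{\delta}^2\le(1+\epsilon_2)\Tr(\Sigma^{\perp})$ into the display above, the factors of $\Tr(\Sigma^{\perp})$ cancel: with $A:=\|w^{\sharp}\|_2\sqrt{\Tr(\Sigma^{\perp})/n}$ and $B:=\sqrt{L_f(w^{\sharp})}$ the right-hand side is $(1+\epsilon_2)\bigl(A+(1+\epsilon_1)B\bigr)^2\le(1+\epsilon_2)(1+\epsilon_1)^2(A+B)^2$. Dividing by $1-\epsilon_0$ and folding $\epsilon_0,\epsilon_1,\epsilon_2,\rho_0$ into a single $\rho$ of the claimed order — after a union bound over the $O(1)$ failure events with $\delta$ rescaled by a constant — gives $L(\hat{w},\hat{b})\le(1+\rho)\bigl(\sqrt{L(w^{\sharp},b^{\sharp})}+\|w^{\sharp}\|_2\sqrt{\Tr(\Sigma^{\perp})/n}\bigr)^2$, since $L(w^{\sharp},b^{\sharp})=L_f(w^{\sharp})$ with $\hat{b}=b^{\sharp}=0$.

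The substantive content lives in Theorems~\ref{thm:moreau-envelope-gen} and \ref{thm:phase-retrieval-norm-bound}, which we invoke as black boxes, so the only real care here is (i) verifying that the particular loss $(|\hat{y}|-y)^2$ meets \ref{assumption:hypercontractivity}--\ref{assumption:VC} with constants governed by $k$ rather than $d$, and that the population minimizer lies in $\ker Q$ so that Theorem~\ref{thm:phase-retrieval-norm-bound} is applicable; and (ii) the error bookkeeping — one must keep $\epsilon_0$ multiplicative on $L_f(\hat{w})$ and $\epsilon_2$ multiplicative on $\Tr(\Sigma^{\perp})$ so the cancellation of $\Tr(\Sigma^{\perp})$ leaves no stray additive term, then check the aggregate error matches the stated $\rho$. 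I expect (ii), rather than any individual inequality, to be where the bound could quietly degrade if handled carelessly.
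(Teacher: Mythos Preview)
Your proposal is correct and follows essentially the same approach the paper intends: the paper does not spell out a proof of Corollary~\ref{theorem:phase-retrieval-benign} beyond ``Applying Theorem~\ref{thm:moreau-envelope-gen}, we obtain the following,'' but the analogous fully-written proof of Theorem~\ref{thm:matrix-sensing-consistency} proceeds exactly as you describe---choose $C_\delta(w)=\|w\|_2\cdot(\text{high-probability bound on }\|Q^Tx\|_2)$, apply Theorem~\ref{thm:moreau-envelope-gen} at the interpolant where $\hat L_f=0$, plug in the norm bound, and collect multiplicative errors. Your verification of \ref{assumption:VC} via unions of halfspaces also matches the paper's remark after Theorem~\ref{lem:vc-union}.
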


Therefore, $\hat{w}$ is consistent if the same benign overfitting conditions from \citet{bartlett2020benign, moreau-envelope} hold: $k = o(n), R(\Sigma^{\perp}) = \omega(n)$ and $\| w^{\sharp}\|_2^2 \Tr(\Sigma^{\perp}) = o (n)$.

\subsection{Benign Overfitting in ReLU Regression} \label{sec:relu-regression}

Since we only need $\sqrt{f}$ to be Lipschitz, for any 1-Lipschitz function $\sigma: \R \to \R$, we can consider 
\begin{enumerate}[label = (\roman*)]
    \item $f(\hat{y}, y) = (\sigma(\hat{y})-y)^2$ when $\cY = \R$
    \item $f(\hat{y}, y) = (1-\sigma(\hat{y})y)_+^2$ when $\cY = \{-1, 1\}$.
\end{enumerate}

We can interpret the above loss function $f$ as learning a neural network with a single hidden unit. Indeed, Theorem~\ref{thm:moreau-envelope-gen} can be straightforwardly applied to these situations. However, we do not always expect benign overfitting under these loss functions for the following simple reason, as pointed out in \citet{shamir2022implicit}: when $\sigma$ is invertible, interpolating the loss $f(\hat{y}, y) = (\sigma(\hat{y})-y)^2$ is the same as interpolating the loss $f(\hat{y}, y) = (\hat{y}-\sigma^{-1}(y))^2$ and so the learned function would be the minimizer of $\E[(\langle w,x\rangle + b-\sigma^{-1}(y))^2]$ which is typically different from the minimizer of $\E[(\sigma(\langle w,x\rangle + b)-y)^2]$.

The situation of ReLU regression, where $\sigma(\hat{y}) = \max\{ \hat{y}, 0 \}$, is more interesting because $\sigma$ is \emph{not} invertible. 
In order to be able to interpolate, we must have $y \geq 0$. If $y > 0$ with probability 1, then $\sigma(\hat{y}) = y$ is the same as $\hat{y} = y$ and we are back to interpolating the square loss $f(\hat{y},y) = (\hat{y}-y)^2$. From this observation, we see that $f(\hat{y}, y) = (\sigma(\hat{y})-y)^2$ cannot be the appropriate loss for consistency even though it's 1 square-root Lipschitz. In contrast, when there is some probability mass at $y=0$, it suffices to output any non-positive value and the minimal norm to interpolate is potentially smaller than requiring $\hat{y}=0$. Similar to the previous section, we can let $I = \{i \in [n]: y_i > 0 \}$ and for any $(w^{\sharp}, b^{\sharp}) \in \R^{d+1}$, the predictor $w = w^{\sharp} + w^{\perp}$ satisfies $\sigma(\langle w, x_i \rangle + b^{\sharp}) = y_i$ where
\begin{equation}
    w^{\perp} = \argmin_{\substack{w \in \R^d: \\ \forall i \in I, \langle w, x_i\rangle = y_i - \langle w^{\sharp}, x_i\rangle - b^{\sharp} \\ \forall i \notin I, \langle w, x_i\rangle = -\sigma(\langle w^{\sharp}, x_i\rangle + b^{\sharp})}} \, \| w\|_2
\end{equation}
Our analysis will show that the consistent loss for benign overfitting with ReLU regression is
\begin{equation} \label{eqn:relu-loss}
    f(\hat{y}, y)
    =
    \begin{cases}
    (\hat{y}-y)^2 &\text{if} \quad y > 0 \\
    \sigma(\hat{y})^2 &\text{if} \quad y = 0.
    \end{cases}.
\end{equation}
We first state the norm bound below.

\begin{restatable}{theorem}{ReLUNormBound} \label{thm:relu-norm-bound}
Under assumptions~\ref{assumption:gaussian-feature} and \ref{assumption:multi-index}, let $f: \R \times \cY \to \R$ be the loss defined in \eqref{eqn:relu-loss} with $\cY = \R_{\geq 0}$. Let $Q$ be the same as in Theorem~\ref{thm:moreau-envelope-gen} and $\Sigma^{\perp} = Q^T \Sigma Q$. Fix any $(w^{\sharp}, b^{\sharp}) \in \R^{d+1}$ such that $Qw^{\sharp} = 0$ and for some $\rho \in (0,1)$, it holds that
\begin{equation}
    \hat{L}_f(w^{\sharp}, b^{\sharp}) \leq (1+ \rho) L_f(w^{\sharp}, b^{\sharp}).
\end{equation}
Then with probability at least $1-\delta$, for some $\epsilon \lesssim \rho + \log\left( \frac{1}{\delta}\right) \left( \frac{1}{\sqrt{n}} + \frac{1}{\sqrt{R(\Sigma^{\perp})}} + \frac{k}{n} + \frac{n}{R(\Sigma^{\perp})} \right)$, it holds that
\vspace{2ex}
\begin{equation}
    \min_{\substack{(w,b) \in \R^{d+1}: \\ \forall i \in [n], \sigma(\langle w, x_i \rangle + b)= y_i}} \, \| w\|_2 \leq \| w^{\sharp}\|_2 + (1+\epsilon) \sqrt{\frac{n L_f(w^{\sharp}, b^{\sharp})}{\Tr(\Sigma^{\perp})}}.
\end{equation}
\end{restatable}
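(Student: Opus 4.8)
The plan is to run the same argument as the phase retrieval bound (Theorem~\ref{thm:phase-retrieval-norm-bound}): exhibit an explicit interpolant of the form $(w^\sharp + w^\perp,\, b^\sharp)$, control $\|w^\perp\|_2$, and conclude since the minimal-norm interpolant is no larger. Set $I = \{i \in [n] : y_i > 0\}$, let $X \in \R^{n \times d}$ have rows $x_i^T$, and define the residual vector $r \in \R^n$ by $r_i = y_i - \langle w^\sharp, x_i\rangle - b^\sharp$ for $i \in I$ and $r_i = -\sigma(\langle w^\sharp, x_i\rangle + b^\sharp)$ for $i \notin I$. Let $w^\perp$ be the minimum $\ell_2$-norm solution of the system $\langle w^\perp, x_i\rangle = r_i$, $i \in [n]$, so that $w^\perp = X^T (X X^T)^{-1} r$ on the event, implied by $R(\Sigma^\perp) \gg n$ (since $R(\Sigma^\perp) \le \rank(\Sigma)$), that $X$ has rank $n$. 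A two-case check shows $(w^\sharp + w^\perp, b^\sharp)$ interpolates, i.e.\ $\sigma(\langle w^\sharp + w^\perp, x_i\rangle + b^\sharp) = y_i$ for all $i$: for $i \in I$ the pre-activation equals $y_i > 0$; for $i \notin I$ it equals $\min\{\langle w^\sharp, x_i\rangle + b^\sharp,\ 0\} \le 0 = y_i$. Hence the left-hand side of the claim is at most $\|w^\sharp + w^\perp\|_2 \le \|w^\sharp\|_2 + \|w^\perp\|_2$, and it remains to bound $\|w^\perp\|_2$.

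The residuals are chosen precisely so that $r_i^2$ equals the ReLU loss \eqref{eqn:relu-loss}: when $y_i > 0$, $r_i^2 = (\langle w^\sharp, x_i\rangle + b^\sharp - y_i)^2$, and when $y_i = 0$, $r_i^2 = \sigma(\langle w^\sharp, x_i\rangle + b^\sharp)^2$. Summing over $i$, $\|r\|_2^2 = n\,\hat L_f(w^\sharp, b^\sharp) \le (1+\rho)\, n\, L_f(w^\sharp, b^\sharp)$ by the hypothesis of the theorem. Moreover $Q w^\sharp = 0$ forces $w^\sharp \in \operatorname{range}(W)$ (as $w^\sharp = (I - Q) w^\sharp = W (W^T \Sigma W)^{-1} W^T \Sigma w^\sharp$), so each $\langle w^\sharp, x_i\rangle$, each $y_i = g(W^T x_i, \xi_i)$, and each membership event $\{i \in I\}$ depends only on $(W^T x_i, \xi_i)$; therefore $r$ is independent of the perpendicular features $\{Q^T x_i\}_{i=1}^n$, which are i.i.d.\ $\cN(0, \Sigma^\perp)$ because $\Cov(Q^T x, W^T x) = Q^T \Sigma W = 0$.

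From here the computation is exactly the linear-regression minimal-norm calculation of \citep{uc-interpolators, moreau-envelope} applied to the residual vector $r$. Decompose $x_i = Q^T x_i + (I - Q^T) x_i$, where the first term is i.i.d.\ $\cN(0,\Sigma^\perp)$ and independent of $r$, and the second lies in the $k$-dimensional range of $\Sigma W$. In the effective-rank regime $R(\Sigma^\perp) \gg n$ the Gram matrix of the perpendicular features concentrates around $\Tr(\Sigma^\perp) I_n$, and splitting off the $O(k)$ signal directions costs a further $k/n$ factor, yielding
\[
\|w^\perp\|_2^2 \;=\; r^T (X X^T)^{-1} r \;\le\; \frac{\|r\|_2^2}{(1 - \epsilon_1)\,\Tr(\Sigma^\perp)}, \qquad \epsilon_1 \lesssim \log\!\left(\tfrac1\delta\right)\!\left( \tfrac{1}{\sqrt n} + \tfrac{1}{\sqrt{R(\Sigma^\perp)}} + \tfrac{k}{n} + \tfrac{n}{R(\Sigma^\perp)} \right),
\]
on an event of probability at least $1 - \delta$ (the union bound over the concentration events is what produces the $\log(1/\delta)$ factor). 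Combining this with $\|r\|_2^2 \le (1+\rho) n L_f(w^\sharp, b^\sharp)$ gives $\|w^\perp\|_2 \le \sqrt{\tfrac{1+\rho}{1-\epsilon_1}}\,\sqrt{\tfrac{n L_f(w^\sharp, b^\sharp)}{\Tr(\Sigma^\perp)}} \le (1+\epsilon)\sqrt{\tfrac{n L_f(w^\sharp, b^\sharp)}{\Tr(\Sigma^\perp)}}$ for $\epsilon \lesssim \rho + \epsilon_1$, which has the claimed form; the triangle inequality of the first paragraph then completes the proof.

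The only genuinely new ingredient is the $I$ versus $I^c$ split of the first two paragraphs: it is what makes the squared residuals of the constructed interpolant coincide with the ReLU loss \eqref{eqn:relu-loss} and keeps $r$ independent of the perpendicular features, after which everything is inherited verbatim from the linear-regression case. Consequently the main obstacle is the quantitative heart of that inherited step --- the concentration of the Gram matrix of the $Q^T x_i$ around $\Tr(\Sigma^\perp) I_n$ together with the removal of the $O(k)$ nuisance directions from $W^T x_i$ --- which is precisely where the $k/n$, $1/\sqrt{R(\Sigma^\perp)}$ and $n/R(\Sigma^\perp)$ terms in $\epsilon$ come from. One should also verify the harmless but necessary point that fixing the bias to $b^\sharp$ does not hurt, since the minimization ranges over $(w,b)$ but the objective depends only on $\|w\|_2$.
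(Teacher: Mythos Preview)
Your proposal is correct and follows the paper's strategy: build the interpolant $(w^\sharp+w^\perp,b^\sharp)$ via the $I$/$I^c$ residual split, observe $\|r\|_2^2=n\hat L_f(w^\sharp,b^\sharp)$, condition on $(W^Tx_i,\xi_i)$ so that $r$ is fixed while the perpendicular features remain fresh Gaussian, and then invoke the linear-regression minimal-norm bound. Two minor points. First, the paper and the works you cite do not bound $r^T(XX^T)^{-1}r$ via Gram-matrix concentration; instead they restrict $w^\perp$ to the image of $Q$ through an auxiliary orthogonal projection $R$, so that after conditioning $XRw$ depends only on a Gaussian matrix with row covariance $R\Sigma R$, and apply a CGMT-based lemma to that restricted problem---the $k/n$ term arises from comparing $\Tr(R\Sigma R)$ and $R(R\Sigma R)$ to their $\Sigma^\perp$ counterparts. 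Your direct route via $\lambda_{\min}(XX^T)$ would require controlling the cross terms between the rank-$k$ deterministic signal block of $X$ and the Gaussian block, which you leave unaddressed; since you ultimately defer to the cited result this does not break the proof, but your description of the mechanism behind that result is not what actually happens there. Second, the claim $Q^Tx_i\sim\cN(0,\Sigma^\perp)$ requires $Q^T\mu=0$; the paper arranges this WLOG by absorbing $\mu$ into the span of $\{\Sigma w_j^*\}$ at the cost of increasing $k$ by one.
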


Since the loss defined in \eqref{eqn:relu-loss} is also 1-square-root Lipschitz, it can be straightforwardly combined with Theorem~\ref{thm:moreau-envelope-gen} to establish benign overfitting in ReLU regression under this loss. The details are exactly identical to the previous section and so we omit it here.

\subsection{Benign Overfitting in Matrix Sensing} \label{sec:matrix-sensing}

We now consider the problem of matrix sensing: given random matrices $A_1, ..., A_n $ (with i.i.d. standard Gaussian entries) and independent linear measurements $y_1, ..., y_n$ given by $y_i = \langle A_i, X^* \rangle + \xi_i$ where $\xi_i$ is independent of $A_i$, and $\E \xi = 0$ and $\E \xi^2 = \sigma^2$, we hope to reconstruct the matrix $X^* \in \R^{d_1 \times d_2}$ with sample size $n \ll d_1 d_2$. To this end, we assume that $X^*$ has rank $r$. In this setting, since the measurement matrices have i.i.d. standard Gaussian entries, the test error is closely related to the estimation error:
\[
L(X) = \E (\langle A, X\rangle - y)^2 =  \| X - X^*\|_F^2 + \sigma^2.
\]
The classical approach to this problem is to find the minimum nuclear norm solution:
\begin{equation}
    \hat{X} = \argmin_{X \in \R^{d_1 \times d_2}: \langle A_i, X \rangle = y_i} \| X\|_*.
\end{equation}
\citet{matrix-factorization} also shows that gradient descent converges to the minimal nuclear norm solution in matrix factorization problems. It is well known that having low nuclear norm can ensure generalization \citep{foygel2011concentration, srebro2005rank} and minimizing the nuclear norm
ensures reconstruction \citep{candes2009exact, recht2010guaranteed}.  However, if the noise level $\sigma$ is high, then even the minimal nuclear norm solution $\hat{X}$ can have large nuclear norm. Since our result can be adapted to different norms as regularizer, our uniform convergence guarantee can be directly applied. The dual norm of the nuclear norm is the spectral norm, and it is well-known that the spectrum norm of a Gaussian random matrix is approximately $\sqrt{d_1} + \sqrt{d_2}$ \citep{vershynin2018high} . It remains to analyze the minimal nuclear norm required to interpolate. For simplicity, we assume that $\xi$ are Gaussian below, but we can extend it to be sub-Gaussian.

\begin{restatable}{theorem}{MatrixNormBound} \label{thm:matrix-norm-bound}
Suppose that $d_1 d_2 > n$, then there exists some 
$\epsilon \lesssim \sqrt{\frac{\log(32/\delta)}{n}} + \frac{n}{d_1 d_2}$
such that with probability at least $1-\delta$, it holds that
\begin{equation}
\min_{\forall i \in [n], \langle A_i, X \rangle = y_i} \| X\|_* 
\leq 
\sqrt{r} \| X^* \|_F + (1+\epsilon) \sqrt{\frac{n\sigma^2}{d_1 \vee d_2}}.
\end{equation}
\end{restatable}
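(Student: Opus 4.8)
The plan is to reduce the minimum‑nuclear‑norm interpolation problem to the standard minimum‑$\ell_2$‑norm interpolation calculation for isotropic Gaussian design, after vectorizing the sensing matrices. Write $\mathcal{A}\colon \R^{d_1\times d_2}\to\R^n$ for the linear map $\mathcal{A}(X)_i=\langle A_i,X\rangle$, let $\xi=(\xi_1,\dots,\xi_n)$ be the noise vector, and let $\tilde A\in\R^{n\times d_1d_2}$ be the matrix whose $i$‑th row is the vectorization of $A_i$, so that $M:=\mathcal{A}\mathcal{A}^*=\tilde A\tilde A^\top$ has entries $M_{ij}=\langle A_i,A_j\rangle$. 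Since $d_1d_2>n$ and the entries of $\tilde A$ are i.i.d.\ standard Gaussian, $\tilde A$ has full row rank and $M$ is invertible almost surely; here $\mathcal{A}^*(\lambda)=\sum_i\lambda_iA_i$.

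First I would split off the noise: taking $X=X^*+X'$ for any $X'$ with $\mathcal{A}(X')=\xi$ gives $\mathcal{A}(X)=\mathcal{A}(X^*)+\xi=y$, so by the triangle inequality $\min_{\mathcal{A}(X)=y}\|X\|_*\le\|X^*\|_*+\|X'\|_*$, and $\|X^*\|_*\le\sqrt r\,\|X^*\|_F$ because $X^*$ has rank $r$ (Cauchy--Schwarz on its singular values). It remains to exhibit one good interpolant of the noise. The natural choice is the analogue of the least‑norm least‑squares solution, $X':=\mathcal{A}^*(M^{-1}\xi)=\sum_{i=1}^n(M^{-1}\xi)_iA_i$, which is feasible since $\mathcal{A}(X')=\mathcal{A}\mathcal{A}^*(M^{-1}\xi)=MM^{-1}\xi=\xi$, and whose Frobenius norm is exactly $\|X'\|_F^2=(M^{-1}\xi)^\top\mathcal{A}\mathcal{A}^*(M^{-1}\xi)=\xi^\top M^{-1}\xi$. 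Since any $d_1\times d_2$ matrix has rank at most $d_1\wedge d_2$, we get $\|X'\|_*\le\sqrt{d_1\wedge d_2}\,\|X'\|_F=\sqrt{d_1\wedge d_2}\,\sqrt{\xi^\top M^{-1}\xi}$.

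The only nontrivial step is then to show $\xi^\top M^{-1}\xi\le(1+\epsilon)^2\,n\sigma^2/(d_1d_2)$ with $\epsilon\lesssim\sqrt{\log(1/\delta)/n}+n/(d_1d_2)$; plugging this in yields $\|X'\|_*\le(1+\epsilon)\sqrt{(d_1\wedge d_2)\,n\sigma^2/(d_1d_2)}=(1+\epsilon)\sqrt{n\sigma^2/(d_1\vee d_2)}$, which combined with the previous paragraph gives the theorem. Now $\xi^\top M^{-1}\xi=\xi^\top(\tilde A\tilde A^\top)^{-1}\xi=\min\{\|v\|_2^2:\tilde A v=\xi\}$ is precisely the squared norm of the minimum‑norm interpolant of the $(d_1d_2)$‑dimensional isotropic Gaussian regression problem with design $\tilde A$ and pure‑noise responses $\xi$, i.e.\ exactly the ``noise term'' analyzed in the linear‑regression benign‑overfitting literature; one may simply invoke that calculation with covariance $\Sigma=I_{d_1d_2}$ (effective rank $d_1d_2$), producing the stated $\epsilon$ \citep{bartlett2020benign, uc-interpolators, moreau-envelope}. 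For a self‑contained argument: conditionally on $\tilde A$, $\xi^\top M^{-1}\xi$ is $\sigma^2$ times a Gaussian quadratic form that concentrates around $\sigma^2\Tr(M^{-1})$ with relative fluctuation of order $\|M^{-1}\|_F/\Tr(M^{-1})\asymp n^{-1/2}$ and $\|M^{-1}\|_{\op}/\Tr(M^{-1})\asymp n^{-1}$ up to a $\sqrt{\log(1/\delta)}$ factor; and $\Tr(M^{-1})$ itself concentrates sharply around its inverse‑Wishart mean $\E\Tr(M^{-1})=n/(d_1d_2-n-1)=(n/d_1d_2)(1+O(n/d_1d_2))$ (Lipschitz concentration of $\tilde A\mapsto\Tr((\tilde A\tilde A^\top)^{-1})$ on the high‑probability event $\sigma_{\min}(\tilde A)\gtrsim\sqrt{d_1d_2}$). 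The $\log(32/\delta)$ absorbs the union bound over the constantly many concentration events.

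The main obstacle is exactly this last step, and within it the delicate point is obtaining an additive $n/(d_1d_2)$ rather than an additive $\sqrt{n/(d_1d_2)}$ error: this forces the use of the \emph{exact} inverse‑Wishart trace expectation $\E\Tr(M^{-1})=n/(d_1d_2-n-1)$ rather than the cruder $\Tr(M^{-1})\le n/\sigma_{\min}(\tilde A)^2$ with $\sigma_{\min}(\tilde A)^2\ge(\sqrt{d_1d_2}-\sqrt n)^2$, which would only give $\sqrt{n/(d_1d_2)}$. Everything else --- the triangle‑inequality split, the $\sqrt r\,\|X^*\|_F$ bound, the explicit interpolant $X'$, and the rank‑based passage from Frobenius to nuclear norm --- is elementary and loses nothing of consequence (the slack in $\|X'\|_*\le\sqrt{d_1\wedge d_2}\,\|X'\|_F$ only helps).
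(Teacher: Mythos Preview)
Your argument is correct and reaches the same bound, but by a different route than the paper. Both proofs split $X=X^*+X'$ with $\|X^*\|_*\le\sqrt r\,\|X^*\|_F$ and both ultimately pass from nuclear to Frobenius norm via the rank bound $\|\cdot\|_*\le\sqrt{d_1\wedge d_2}\,\|\cdot\|_F$; the difference is where that bound is applied and how the remaining Frobenius quantity is controlled. The paper applies the CGMT-based Lemma~\ref{lem:norm-gmt-application} \emph{directly to the nuclear-norm problem} $\min_{Ax=\xi}\|x\|_*$, plugs the test point $x=-\alpha H$ into the auxiliary problem, and only then uses $\|H\|_*\le\sqrt{d_1\wedge d_2}\,\|H\|_2$; all the concentration is done on the scalar quantities $\|G\|_2,\|H\|_2,\|\xi\|_2,\langle G,\xi\rangle$ arising in the auxiliary. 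You instead bypass CGMT for the nuclear norm: you pick the explicit min-Frobenius interpolant $X'=\mathcal A^*(M^{-1}\xi)$, use the rank bound at the level of $X'$, and reduce everything to the isotropic $\ell_2$ quantity $\xi^\top M^{-1}\xi$, which is precisely the paper's Lemma~\ref{lem:norm-analyze-AO} with $\Sigma=I_{d_1d_2}$ (or, as you sketch, a direct inverse-Wishart calculation). Your route is more elementary and makes transparent that the nuclear-norm bound is nothing more than the known $\ell_2$ bound times $\sqrt{d_1\wedge d_2}$; the paper's route has the virtue of reusing its general CGMT-for-arbitrary-norms machinery uniformly. Your remark that the crude bound $\Tr(M^{-1})\le n/\sigma_{\min}(\tilde A)^2$ only gives $\sqrt{n/(d_1d_2)}$ is exactly the right diagnosis of why either the inverse-Wishart mean or CGMT is needed for the sharper $n/(d_1d_2)$; just note that your ``Lipschitz concentration'' for $\Tr(M^{-1})$ requires a truncation since $\tilde A\mapsto\Tr((\tilde A\tilde A^\top)^{-1})$ is only locally Lipschitz --- this is routine (the gradient has Frobenius norm $2\sqrt{\Tr(M^{-3})}\lesssim\sqrt n/(d_1d_2)^{3/2}$ on the high-probability event, yielding negligible fluctuations), or you can simply cite the existing $\ell_2$ result and skip it.
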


Without loss of generality, we will assume $d_1 \leq d_2$ from now on because otherwise we can take the transpose of $A$ and $X$. Similar to assuming $n/R(\Sigma^{\perp}) \to 0$ in linear regression, we implicitly assume that $n/d_1 d_2 \to 0$ in matrix sensing. Such scaling is necessary for benign overfitting because of the lower bound on the test error for \textit{any} interpolant \citep[e.g., Proposition 4.3 of][]{junk-feats}. Finally, we apply the uniform convergence guarantee.

\begin{restatable}{theorem}{MatrixConsistency} \label{thm:matrix-sensing-consistency}
Fix any $\delta \in (0,1)$. There exist constants $c_1, c_2, c_3> 0$ such that if $d_1 d_2 > c_1 n$, $d_2 > c_2 d_1 $, $n > c_3 r(d_1+d_2)$, then with probability at least $1-\delta$ that
\begin{equation}
    \frac{\| \hat{X}-X^*\|_F^2}{\| X^*\|_F^2} 
    \lesssim
    \frac{r(d_1 + d_2)}{n} + \sqrt{\frac{r(d_1 + d_2)}{n}}\frac{\sigma}{\| X^* \|_F} + \left( \sqrt{\frac{d_1}{d_2}} + \frac{n}{d_1 d_2}\right) \frac{\sigma^2}{\| X^* \|_F^2}.
\end{equation}
\end{restatable}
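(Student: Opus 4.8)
The plan is to run the standard optimistic-rate-plus-norm-bound argument: instantiate Theorem~\ref{thm:moreau-envelope-gen} for the square loss $f(\hat y,y)=(\hat y-y)^2$ (which is $1$-square-root-Lipschitz, so $H=1$) and feed it the interpolation bound of Theorem~\ref{thm:matrix-norm-bound}. First I would recast matrix sensing as the generalized linear model of \eqref{eqn:f-loss} by vectorizing: the features are $x=\mathrm{vec}(A)\sim\cN(0,I)$ in $\R^{d_1 d_2}$, the model is well specified with $k=1$, $w^*=\mathrm{vec}(X^*)$ and link $g(\eta,\xi)=\eta+\xi$, so $\Sigma=I$, $Q$ is the orthogonal projection off $\mathrm{vec}(X^*)$, $\Sigma^\perp=Q$, and $\Tr(\Sigma^\perp)=d_1 d_2-1$. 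One checks that assumptions~\ref{assumption:hypercontractivity} and~\ref{assumption:VC} hold with $\tau,h=O(1)$: conditioning on $\langle X^*,A\rangle$ and $\xi$ makes the residual $\langle w,x\rangle+b-y$ Gaussian, whose eighth moment is a constant times the square of its second moment, and the class in~\eqref{eqn:VC} consists of thresholds of a quadratic in two real arguments. For the localized Rademacher term I would take $C_\delta(X)=M\|X\|_*$ via nuclear/spectral duality, $\langle X,Q^T\mathrm{vec}(A)\rangle\le\|X\|_*\,\bigl\|A-\tfrac{\langle X^*,A\rangle}{\|X^*\|_F^2}X^*\bigr\|_{\op}$; the matrix inside the last norm is a standard Gaussian matrix minus a rank-one term of operator norm $O(\sqrt{\log(1/\delta)})=O(1)$ (since $\delta$ is fixed), so the usual concentration of the spectral norm of a Gaussian matrix gives $M=(1+o(1))(\sqrt{d_1}+\sqrt{d_2})$ with probability $\ge 1-\delta/4$, which is exactly the input Theorem~\ref{thm:moreau-envelope-gen} requires.

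Next, since $\hat X$ interpolates, $\hat L_f(\hat X)=0$, so Theorem~\ref{thm:moreau-envelope-gen} (with intercept $0$) yields, on an event of probability $\ge 1-\delta$ after splitting $\delta$ between the two theorems,
\[
(1-\epsilon')\bigl(\|\hat X-X^*\|_F^2+\sigma^2\bigr)\ \le\ \frac{M^2\|\hat X\|_*^2}{n},\qquad \epsilon'=O\!\left(\sqrt{\tfrac{\log n+\log(1/\delta)}{n}}\right),
\]
using $L_f(\hat X)=\|\hat X-X^*\|_F^2+\sigma^2$. I would then plug in $\|\hat X\|_*\le a+b$ from Theorem~\ref{thm:matrix-norm-bound}, with $a=\sqrt r\,\|X^*\|_F$, $b=(1+\epsilon)\sqrt{n\sigma^2/(d_1\vee d_2)}$, $\epsilon=O(n^{-1/2}+n/(d_1d_2))$, and expand $M^2(a+b)^2/n=M^2a^2/n+2M^2ab/n+M^2b^2/n$. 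The first term is $\lesssim r(d_1+d_2)\|X^*\|_F^2/n$; the cross term is $\lesssim\|X^*\|_F\,\sigma\sqrt{r(d_1+d_2)/n}$; and, assuming WLOG $d_1\le d_2$,
\[
\frac{M^2b^2}{n}=(1+\epsilon)^2(1+o(1))\,\sigma^2\,\frac{(\sqrt{d_1}+\sqrt{d_2})^2}{d_2}=(1+\epsilon)^2(1+o(1))\,\sigma^2\Bigl(1+2\sqrt{\tfrac{d_1}{d_2}}+\tfrac{d_1}{d_2}\Bigr).
\]

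The decisive point is that the irreducible noise floor $\sigma^2$ on the left cancels the leading $\sigma^2$ in $M^2b^2/n$, leaving $\sigma^2\bigl(2\sqrt{d_1/d_2}+d_1/d_2\bigr)$ plus remainders of size $\sigma^2\cdot O(\epsilon+\epsilon'+o(1))$; since $d_1\le d_2$ and the standing hypotheses force $d_2\lesssim n$ (whence $n/(d_1d_2)\gtrsim 1/d_1$ and $\sqrt{d_1/d_2}+n/(d_1d_2)\gtrsim n^{-1/3}$), all these remainders are dominated by $\sigma^2\bigl(\sqrt{d_1/d_2}+n/(d_1d_2)\bigr)$. Moving $(1-\epsilon')\|\hat X-X^*\|_F^2$ to the left (valid once $\epsilon'<1/2$) and dividing by $\|X^*\|_F^2$ then gives
\[
\frac{\|\hat X-X^*\|_F^2}{\|X^*\|_F^2}\ \lesssim\ \frac{r(d_1+d_2)}{n}+\sqrt{\frac{r(d_1+d_2)}{n}}\,\frac{\sigma}{\|X^*\|_F}+\left(\sqrt{\frac{d_1}{d_2}}+\frac{n}{d_1d_2}\right)\frac{\sigma^2}{\|X^*\|_F^2}.
\]
The constants $c_1,c_2,c_3$ enter precisely to guarantee $d_1 d_2>n$ (so that interpolation is feasible and Theorem~\ref{thm:matrix-norm-bound} applies), $d_2\ge d_1$ up to a constant, $\epsilon'<1/2$, and $M\lesssim\sqrt{d_1}+\sqrt{d_2}$; the various failure events are combined by a union bound.

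I expect the hard part to be bookkeeping rather than conceptual, since Theorems~\ref{thm:moreau-envelope-gen} and~\ref{thm:matrix-norm-bound} do the heavy lifting: one must organize the $(a+b)^2$ expansion so the sharp dependence survives — in particular retaining $(\sqrt{d_1}+\sqrt{d_2})^2/d_2=1+2\sqrt{d_1/d_2}+d_1/d_2$ rather than collapsing it to $O(1)$ — and then carefully verify the cancellation of $\sigma^2$ together with the claim that every lower-order error ($\epsilon$, $\epsilon'$, the rank-one correction to $M$, the additive constants hidden in the $c_i$) is genuinely absorbed under the stated regime.
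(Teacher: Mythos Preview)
Your proposal is correct and follows essentially the same approach as the paper's own proof: set $C_\delta(X)=M\|X\|_*$ with $M\approx\sqrt{d_1}+\sqrt{d_2}$ via the spectral-norm bound on $A-\tfrac{\langle A,X^*\rangle}{\|X^*\|_F^2}X^*$, apply Theorem~\ref{thm:moreau-envelope-gen} at the interpolator to get $(1-\epsilon)L(\hat X)\le M^2\|\hat X\|_*^2/n$, plug in Theorem~\ref{thm:matrix-norm-bound}, and expand so that the leading $\sigma^2$ cancels against the noise floor in $L(\hat X)=\|\hat X-X^*\|_F^2+\sigma^2$. If anything, your write-up is more explicit than the paper's about why the residual $O(\epsilon+\epsilon')\sigma^2$ terms are absorbed by $\sqrt{d_1/d_2}+n/(d_1d_2)$ under the stated regime.
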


From Theorem~\ref{thm:matrix-sensing-consistency}, we see that when the signal to noise ratio $\frac{\| X^*\|_F^2}{\sigma^2}$ is bounded away from zero, then we obtain consistency $\frac{\| \hat{X}-X^*\|_F^2}{\| X^*\|_F^2} \to 0$ if (i) $r(d_1 + d_2) = o(n)$, (ii) $d_1 d_2 = \omega(n)$, and (iii) $d_1 / d_2 \to \{ 0, \infty\}$. This can happen for example when $r = \Theta(1), d_1 = \Theta(n^{1/2}), d_2 = \Theta(n^{2/3})$. As discussed earlier, the second condition is necessary for benign overfitting, and the first consistency condition should be necessary even for regularized estimators. It is possible that the final condition is not necessary and we leave a tighter understanding of matrix sensing as future work.

\section{Single-Index Neural Networks} \label{sec:neural-network}

We show that our results extend a even more general setting than \eqref{eqn:f-loss}. Suppose that we have a parameter space $\Theta \subseteq \R^p$ and a continuous mapping $w$ from $\theta \in \Theta$ to a linear predictor $w(\theta) \in \R^d$. Given a function $f: \R \times \cY \times \Theta \to \R$ and i.i.d. sample pairs $(x_i,y_i)$ drawn from distribution $\cD$, we consider training and test error of the form: 
\begin{equation}
    \hat{L}(\theta) = \frac{1}{n} \sum_{i=1}^n f(\langle w(\theta), x_i \rangle, y_i, \theta)
    \quad \text{and} \quad 
    L(\theta) = \E [f(\langle w(\theta), x \rangle, y, \theta)].
\end{equation}
We make the same assumptions \ref{assumption:gaussian-feature} and \ref{assumption:multi-index} on $\cD$. Assumptions \ref{assumption:hypercontractivity} and \ref{assumption:VC} can be naturally extended to the following:
\begin{enumerate}[label=(\Alph*)]
    \setcounter{enumi}{4}
    \item \label{assumption:hypercontractivity'}
    there exists a universal constant $\tau > 0$ such that uniformly over all $\theta \in \Theta$, it holds that 
    \begin{equation}
        \frac{\E [f(\langle w(\theta), x\rangle, y, \theta)^4]^{1/4}}{\E [f(\langle w(\theta), x\rangle, y, \theta)]} \leq \tau.
    \end{equation}
    \item\label{assumption:VC'}
    bounded VC dimensions: the class of functions on $\R^{k+1} \times \cY$ defined by 
    \begin{equation} 
        \{(x, y) \to \bone{\{ f(\langle w, x\rangle, y, \theta) > t\}}: (w, t, \theta) \in \R^{k+1} \times \R \times \Theta \}
    \end{equation}
    has VC dimension at most $h$.
\end{enumerate}

Now we can state the extension of Theorem~\ref{thm:moreau-envelope-gen}.

\begin{restatable}{theorem}{SingleIndex} \label{thm:single-index-gen}
Suppose that assumptions \ref{assumption:gaussian-feature}, \ref{assumption:multi-index}, \ref{assumption:hypercontractivity'} and \ref{assumption:VC'} hold. For any $\delta \in (0, 1)$, let $C_{\delta}: \R^d \to [0, \infty]$ be a continuous function such that with probability at least $1-\delta/4$ over $x \sim \cN \left(0, \Sigma \right)$, uniformly over all $\theta \in \Theta$,
\begin{equation} 
    \left\langle w(\theta), Q^Tx \right\rangle \leq  C_{\delta}(w(\theta)).
\end{equation}
If for each $\theta \in \Theta$ and $y \in \cY$, $f$ is non-negative and $\sqrt{f}$ is $\sqrt{H_{\theta}}$-Lipschitz with respect to the first argument, and $H_{\theta}$ is continuous in $\theta$, then with probability at least $1-\delta$, it holds that uniformly over all $\theta \in \Theta$, we have
    \begin{equation} 
        \left(1 - \epsilon \right) L(\theta) \leq \left(\sqrt{\hat{L}(\theta)} + \sqrt{\frac{H_{\theta} \, C_{\delta}(w(\theta))^2}{n}} \right)^2.
    \end{equation}
\end{restatable}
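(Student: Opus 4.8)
The plan is to re-run the proof of Theorem~\ref{thm:moreau-envelope-gen}, carrying the parameter $\theta$ along as an inert label and exploiting the fact that the explicit $\theta$-dependence of $f$ does not interact with the randomness in $x$. After centering $x$ and absorbing $\langle w(\theta),\mu\rangle$ into the $\theta$-dependence of $f$, recall the decomposition $x = (I-Q^T)x + Q^Tx$, where $(I-Q^T)x = \Sigma W(W^T\Sigma W)^{-1}W^Tx$ is a deterministic function of $W^Tx$ and $Q^Tx \sim \cN(0,\Sigma^{\perp})$ is independent of $y$ and of $(I-Q^T)x$. Conditioning on $\{(W^Tx_i,\xi_i)\}_{i=1}^n$ freezes every $y_i$ and every signal part $a_i(\theta) := \langle w(\theta),(I-Q^T)x_i\rangle$, leaving only the i.i.d.\ Gaussian vectors $g_i := Q^Tx_i$, so that $f(\langle w(\theta),x_i\rangle,y_i,\theta) = \tilde f_\theta\!\left(a_i(\theta)+\langle w(\theta),g_i\rangle, y_i\right)$ with $\tilde f_\theta(\hat y,y) := f(\hat y,y,\theta)$ being $\sqrt{H_\theta}$-square-root-Lipschitz in its first argument.

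For the high-dimensional component I would apply the Gaussian Minimax Theorem to the bilinear form in the Gaussian matrix $[g_1,\dots,g_n]$, exactly as in the proof of Theorem~\ref{thm:moreau-envelope-gen}: the square-root-Lipschitz property is what upgrades a Lipschitz-contraction-style bound on differences of losses to a bound on differences of \emph{square roots} of losses, producing the additive complexity term $\sqrt{H_\theta\,C_\delta(w(\theta))^2/n}$, with $C_\delta$ controlling $\sup_\theta \langle w(\theta),Q^Tx\rangle$ as a localized Gaussian width. The only change from the linear case is that the index set of linear functionals is now $\{w(\theta):\theta\in\Theta\}$ and the Lipschitz constant $\sqrt{H_\theta}$ varies with $\theta$; since Gordon's comparison applies verbatim to a supremum over an arbitrary index set, and since $\Theta\subseteq\R^p$ is separable with $\theta\mapsto w(\theta)$ and $\theta\mapsto H_\theta$ continuous, it suffices to prove the bound on a countable dense subset of $\Theta$ and pass to all of $\Theta$ by continuity.

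The error term $\epsilon$ comes from the low-dimensional concentration: after conditioning, $\E[\tilde f_\theta(a_i(\theta)+\langle w(\theta),g_i\rangle,y_i)\mid W^Tx_i,\xi_i]$ must concentrate around $L(\theta)$ uniformly in $\theta$, a concentration problem living in the $(k+1)$-dimensional space of assumption~\ref{assumption:VC'}. Here I would invoke the VC bound \ref{assumption:VC'} — which already ranges over $\theta\in\Theta$, so nothing new is needed — together with the hypercontractivity bound \ref{assumption:hypercontractivity'} to control the scale, yielding a uniform multiplicative $(1-\epsilon)$ with $\epsilon = O(\tau\sqrt{(h\log(n/h)+\log(1/\delta))/n})$, free of both $d$ and $p$. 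Combining the two pieces and simplifying as in Theorem~\ref{thm:moreau-envelope-gen} gives the claim.

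I expect the main obstacle to be making the Gaussian-comparison step genuinely uniform over $\theta$ when both the linear predictor $w(\theta)$ and the loss $\tilde f_\theta$ (through $\sqrt{H_\theta}$) move with $\theta$: Gordon's theorem is naturally phrased as a single min--max over an index set, so one must package the $\theta$-dependent target functional and the $\theta$-dependent complexity into one min--max before invoking the comparison, rather than applying Theorem~\ref{thm:moreau-envelope-gen} pointwise and union-bounding (which fails over a continuum and inflates $\log(1/\delta)$ over a net). This is precisely where continuity of $\theta\mapsto H_\theta$ is used, to bound the fluctuation of the Lipschitz constant within small neighborhoods when reducing to a dense subset; the rest is a routine adaptation of the proof of Theorem~\ref{thm:moreau-envelope-gen}.
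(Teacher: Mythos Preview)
Your proposal is correct and matches the paper's approach: reduce to $\mu=0$, condition on $(W^Tx_i,\xi_i)$, apply the Gaussian Minimax Theorem to the residual Gaussian component, and handle the low-dimensional piece via VC theory plus hypercontractivity, with $\theta$ packaged into a single min--max rather than union-bounded. One small correction on the mechanics: the paper does not use a density argument on $\Theta$ but instead invokes a variant of GMT (Theorem~\ref{thm:gmt-mod}) that carries $\theta$ as an extra variable appearing only in the deterministic part $\psi$, combined with truncation for compactness; continuity of $H_\theta$ is needed not to control fluctuations on a net but so that the target functional $F(\theta)=\bigl(\sqrt{L(\theta)/(1+\epsilon_\delta)}-\sqrt{H_\theta C_\delta(w(\theta))^2/n}\bigr)_+^2$ fed into the comparison is continuous, and the square-root structure itself emerges from the Moreau-envelope characterization $f_\lambda \ge \tfrac{\lambda}{\lambda+H_\theta}f$ (Proposition~\ref{thm:sqrt-lipschitz-equivalence}) applied pointwise in $\theta$ after weak duality in the auxiliary problem, followed by optimizing over $\lambda\ge 0$.
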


Next, we show that the generality of our result allows us to establish uniform convergence bound for two-layer neural networks with weight sharing. In particular, we let $\sigma(x) = \max(x,0)$ be the ReLU activation function and $\theta = (w, a, b) \in \R^{d + 2N}$, where $N$ is the number of hidden units. Consider the loss function
\begin{equation}\label{eqn:nn-regression}
    f(\hat{y}, y, \theta) = \left( \sum_{i=1}^N a_i \sigma(\hat{y}-b_i) - y \right)^2 
    \quad \text{or} \quad 
    f(\hat{y}, y, \theta) = \left( 1 - \sum_{i=1}^N a_i \sigma(\hat{y}-b_i) y\right)_+^2,
\end{equation}
then $L(\theta)$ is the test error of a neural network of the form $h_{\theta}(x) := \sum_{i=1}^N a_i \sigma(\langle w, x\rangle -b_i)$. Since our uniform convergence guarantee holds uniformly over all $\theta$, it applies to networks whose first and second layer weights are optimized at the same time. Without loss of generality, we can assume that $b_1 \leq ... \leq b_N$ are sorted, then it is easy to see that $\sqrt{f}$ is $\max_{j \in [N]} \left| \sum_{i=1}^j a_i \right|$ Lipschitz. Applying Theorem~\ref{thm:single-index-gen}, we obtain the following corollary.

\begin{restatable}{corollary}{NNGen} \label{corollary:NN-generalize}
Fix an arbitrary norm $\| \cdot \|$ and consider $f$ as defined in \eqref{eqn:nn-regression}. Assume that the data distribution $\cD$ satisfy \ref{assumption:gaussian-feature}, \ref{assumption:multi-index}, and \ref{assumption:hypercontractivity'}. Then with probability at least $1-\delta$, it holds that uniformly over all $\theta = (w, a, b) \in \R^{d + 2N}$, we have
\begin{equation} \label{eqn:nn-generalize}
    \left(1 - \epsilon \right) L(\theta) \leq \left(\sqrt{\hat{L}(\theta)} + \frac{\max_{j \in [N]} \left| \sum_{i=1}^j a_i \right| \, \| w\| \, (\E \| Q^Tx\|_* + \epsilon')}{\sqrt{n}} \right)^2
\end{equation}
where $\epsilon$ is the same as in Theorem~\ref{thm:single-index-gen} with $h = \tilde{O}(k+N)$ and $\epsilon' = O\left( \sup_{\|u\|\leq 1} \| u\|_{\Sigma^{\perp}} \sqrt{\log(1/\delta)} \right)$.
\end{restatable}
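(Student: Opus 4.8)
The plan is to obtain the corollary as a direct instantiation of Theorem~\ref{thm:single-index-gen} with the weight map $\theta = (w,a,b)\mapsto w(\theta):=w$ (a coordinate projection, hence continuous). Assumptions~\ref{assumption:gaussian-feature}, \ref{assumption:multi-index}, and \ref{assumption:hypercontractivity'} are exactly the corollary's hypotheses, so the remaining work is to (i) supply a square-root-Lipschitz constant $H_\theta$ that is continuous in $\theta$, (ii) verify assumption~\ref{assumption:VC'} with $h = \tilde{O}(k+N)$, and (iii) construct a continuous $C_\delta$ realizing the required concentration event. Granting these, substituting $\sqrt{H_\theta} = \max_{j\in[N]}|\sum_{i=1}^j a_i|$ and $C_\delta(w(\theta)) = \|w\|\,(\E\|Q^Tx\|_* + \epsilon')$ into the conclusion of Theorem~\ref{thm:single-index-gen} reproduces \eqref{eqn:nn-generalize} verbatim, with $\epsilon$ as stated there for $h = \tilde{O}(k+N)$.

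For (i), fix a label $y$ and put $g_\theta(\hat y) = \sum_{i=1}^N a_i\sigma(\hat y - b_i)$. In the first (regression) case $\sqrt{f(\cdot,y,\theta)} = |g_\theta(\cdot) - y|$, and in the second (hinge-type, $\cY = \{-1,1\}$, so $|y| = 1$) $\sqrt{f(\cdot,y,\theta)} = (1 - y\,g_\theta(\cdot))_+$; either way $\sqrt{f(\cdot,y,\theta)}$ is a $1$-Lipschitz function composed with $g_\theta$, hence $\mathrm{Lip}(g_\theta)$-Lipschitz. Sorting so that $b_1\le\cdots\le b_N$, the piecewise-linear map $g_\theta$ has slope $\sum_{i=1}^j a_i$ on $(b_j,b_{j+1})$ and slope $0$ below $b_1$, so $\mathrm{Lip}(g_\theta) = \max_{j\in[N]}|\sum_{i=1}^j a_i|\le\sum_i|a_i|<\infty$, and we set $H_\theta := (\max_{j\in[N]}|\sum_{i=1}^j a_i|)^2$. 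The only snag is that this value depends on a sorting permutation that jumps when two thresholds coincide. I would remove it by first applying Theorem~\ref{thm:single-index-gen} on the open dense set $\Theta_0 = \{(w,a,b): b_1,\dots,b_N \text{ pairwise distinct}\}$, on which the sorting order is locally constant so $H_\theta$ is continuous, and then extending the resulting (high-probability) inequality to an arbitrary $\theta$ with ties by taking $\theta^{(m)}\to\theta$ inside $\Theta_0$ that respect a single fixed consistent ordering of $\theta$'s thresholds; then $H_{\theta^{(m)}}\equiv H_\theta$ while $L$, $\hat L$, and $C_\delta(w(\cdot))$ converge by continuity (dominated convergence for $L$), so the inequality passes to the limit.

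For (ii), the class in assumption~\ref{assumption:VC'} consists of the indicators $\bone\{ f(\langle v,x\rangle, y,\theta) > t \}$, which depend on the parameters only through $v\in\R^{k+1}$, $t\in\R$, and $(a,b)\in\R^N\times\R^N$, i.e. $O(k+N)$ real parameters, and $f(\langle v,x\rangle,y,\theta)$ is computed from inputs and parameters by a fixed-size circuit using arithmetic, $N$ ReLU gates, and one squaring (or one $(\cdot)_+$) --- equivalently it is piecewise polynomial of bounded degree with $O(N)$ pieces in parameter space. The standard VC-dimension bounds for such families (piecewise-polynomial networks; counting sign patterns via Warren's theorem / Goldberg--Jerrum) give VC dimension $O((k+N)\log(k+N)) = \tilde{O}(k+N)$, as claimed.

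For (iii), bound $\langle w, Q^Tx\rangle \le \|w\|\,\|Q^Tx\|_*$ and take $C_\delta(w) := \|w\|\,(\E\|Q^Tx\|_* + \epsilon')$, continuous in $w$. Writing $x = \Sigma^{1/2}g$ with $g\sim\cN(0,I)$, duality gives $\|Q^T\Sigma^{1/2}v\|_* = \sup_{\|u\|\le1}\langle \Sigma^{1/2}Qu,\,v\rangle \le (\sup_{\|u\|\le1}\|\Sigma^{1/2}Qu\|_2)\,\|v\|_2 = (\sup_{\|u\|\le1}\|u\|_{\Sigma^{\perp}})\,\|v\|_2$, so $g\mapsto\|Q^T\Sigma^{1/2}g\|_*$ is Euclidean-Lipschitz with constant $\sup_{\|u\|\le1}\|u\|_{\Sigma^{\perp}}$; Gaussian concentration then yields $\|Q^Tx\|_* \le \E\|Q^Tx\|_* + \epsilon'$ with probability at least $1-\delta/4$ for $\epsilon' = O(\sup_{\|u\|\le1}\|u\|_{\Sigma^{\perp}}\sqrt{\log(1/\delta)})$, which is exactly the event needed by Theorem~\ref{thm:single-index-gen}. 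I expect the main work to be step (ii) --- obtaining the $\tilde{O}(k+N)$ bound with the right parameter count genuinely needs the general piecewise-polynomial VC machinery --- whereas the continuity bookkeeping in (i) is a small but necessary wrinkle and (iii) together with the final substitution is routine.
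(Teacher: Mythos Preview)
Your proposal is correct and follows essentially the same route as the paper: the paper's ``proof'' is just the sentence ``Applying Theorem~\ref{thm:single-index-gen}, we obtain the following corollary'' together with the observation in the main text that $\sqrt{f}$ is $\max_{j\in[N]}\bigl|\sum_{i=1}^j a_i\bigr|$-Lipschitz after sorting the $b_i$, and a pointer in Appendix~\ref{appendix:prelim} to the neural-network VC bound (Theorem~\ref{thm-vc-neural}) to handle assumption~\ref{assumption:VC'}. Your steps (i)--(iii) match these exactly; the only differences are that you invoke the Warren/Goldberg--Jerrum machinery directly rather than the packaged Bartlett et~al.\ bound (equivalent here), and you are more scrupulous than the paper about the continuity of $H_\theta$ in $\theta$, which the paper silently glosses over with ``without loss of generality we can assume $b_1\le\cdots\le b_N$''.
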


The above theorem says given a network parameter $\theta$, after sorting the $b_i$'s, a good complexity measure to look at is $\max_{j \in [N]} \left| \sum_{i=1}^j a_i \right| \cdot \| w\|$ and equation \eqref{eqn:nn-generalize} precisely quantify how the complexity of a network controls generalization. 

\section{On Gaussian Universality} \label{sec:gaussian-universality}

In this section, we discuss a counterexample to Gaussian universality motivated by \citet{shamir2022implicit}. In particular, we consider a data distribution $\cD$ over $(x,y)$ given by:
\begin{enumerate}[label = (\Alph*)]
    \setcounter{enumi}{6}
    \item \label{assumption:counterexample-feature}
    $x = (x_{|k}, x_{|d-k})$ where $x_{|k} \sim \cN(0, \Sigma_{|k})$ and there exists a function $h: \R^k \to \R$ such that
    \begin{equation}
        x_{|d-k} = h(x_{|k}) \cdot z
        \quad \text{ with } \quad
        z \sim \cN\left(0, \Sigma_{|d-k} \right)
        \quad \text{independent of } x_{|k}
    \end{equation}
    \item \label{assumption:counterexample-multi-index}
    there exists a function $g: \R^{k+1} \to \R$ such that
    \begin{equation}
        y = g(x_{|k}, \xi)
    \end{equation}
    where $\xi \sim \cD_{\xi}$ is independent of $x$ (but not necessarily Gaussian).
\end{enumerate}
By the independence between $z$ and $x_{|k}$, we can easily see that $x_{|k}$ and $x_{|d-k}$ are uncorrelated. Moreover, the covariance matrix of $x_{|d-k}$ is 
\begin{equation*}
    \E[x_{|d-k}x_{|d-k}^T] = \E[h(x_{|k})^2] \cdot \Sigma_{|d-k}
\end{equation*}
which is just a re-scaling of $\Sigma_{|d-k}$ and therefore has the same effective rank as $R(\Sigma_{|d-k})$. Even though the feature $x$ in $\cD$ is non-Gaussian, its tail $x_{|d-k}$ is Gaussian conditioned on $x_{|k}$. Therefore, our proof technique is still applicable after a conditioning step. However, it turns out that the uniform convergence bound in Theorem~\ref{thm:moreau-envelope-gen} is no longer valid. Instead, we have to re-weight the loss function. The precise theorem statements can be found in the appendix. We briefly describe our theoretical results below, which follow the same ideas as in section \ref{sec:uniform-convergence} and \ref{sec:applications}.

\paragraph{Uniform Convergence.} Under a similar low-dimensional concentration assumption such as \ref{assumption:hypercontractivity}, if we let $C_{\delta}$ be such that $\langle w, z\rangle \leq C_{\delta}(w)$ for all $w \in \R^{d-k}$ with high probability, it holds that
\begin{equation} \label{eqn:counter-optimistic-rate}
    \E \left[ \frac{f(\langle w, x \rangle, y)}{h(x_{|k})^2}  \right]
    \leq 
    (1+o(1))
    \left( \frac{1}{n} \sum_{i=1}^n \frac{f(\langle w, x_i\rangle, y_i)}{h(x_{i|k})^2}  + \frac{C_{\delta}(w_{|d-k})}{\sqrt{n}} \right)^2.
\end{equation}
Note that the distribution of $z$ is specified in \ref{assumption:counterexample-feature} and different to the distribution of $x_{|d-k}$.

\paragraph{Norm Bound.} Next, we focus on linear regression and compute the minimal norm required to interpolate. If we pick $\Sigma_{|d-k}$ to satisfy the benign overfitting conditions, for any $w_{|k}^* \in \R^k$, we have
\begin{equation} \label{eqn:counter-norm-bound}
    \min_{\substack{w \in \R^d: \\ \forall i \in [n], \langle w, x_i \rangle = y_i}} \| w\|_2^2
    \leq 
    \| w_{|k}^*\|_2^2 + (1+o(1)) \frac{n \E \left[ \left( \frac{y - \langle w_{|k}^*, x_{|k} \rangle}{h(x_{|k})} \right)^2 \right]}{\Tr(\Sigma_{|d-k})}
\end{equation}

It is easy to see that the $w^*$ that minimizes the population weighted square loss satisfy $w^*_{|d-k} = 0$, and so we can let $w_{|k}^*$ to be the minimizer.

\paragraph{Benign Overfitting.} 
Let $\hat{w} = \argmin_{w \in \R^d: Xw = Y} \| w\|_2$ be the minimal norm interpolant. Plugging in the norm bound \eqref{eqn:counter-norm-bound} into the uniform convergence guarantee \eqref{eqn:counter-optimistic-rate}, we show that
\begin{equation} \label{eqn:counter-benign}
    \E\left[ \frac{(\langle \hat{w} ,x\rangle - y)^2}{h(x_{|k})^2}  \right] 
    \leq (1+o(1)) \frac{\| \hat{w}\|_2^2 \Tr(\Sigma_{|d-k})}{n} \to \E \left[  \frac{(\langle w^*, x \rangle-y)^2}{h(x_{|k})^2}  \right].
\end{equation}
which recovers the consistency result of \citet{shamir2022implicit}. 

\paragraph{Contradiction.} Suppose that Gaussian universality holds, then our optimistic rate theory would predict that
\begin{equation} \label{eqn:counter-contradiction}
    \E [(\langle \hat{w}, x\rangle - y)^2] \leq (1 + o(1))\frac{\| \hat{w} \|_2^2 \cdot \E[h(x_{|k})^2] \Tr(\Sigma_{|d-k})}{n}.
\end{equation}

Combining \eqref{eqn:counter-benign} with \eqref{eqn:counter-contradiction}, we obtain that
\begin{equation} \label{eqn:contradiction}
    \min_{w} \, \E [(\langle w, x\rangle - y)^2]
    \leq 
    (1+o(1))
    \min_{w} \, \E[h(x_{|k})^2] \cdot \E \left[ \left( \frac{y - \langle w, x \rangle}{h(x_{|k})}\right)^2 \right]
\end{equation}
which cannot always hold. For example, let's consider the case where $k=1$, $x_1 \sim \cN(0,1)$, $h(x_1) = 1 + |x_1|$ and $y = h(x_1)^2$. Then it is straightforward to verify that the left hand side of \eqref{eqn:contradiction} equals $\E[h(x_1)^4]$ and the right hand side equals $\E[h(x_1)^2]^2$, but this is impossible because 
\begin{equation*}
    \E[h(x_1)^4] - \E[h(x_1)^2]^2 = \Var(h(x_1)^2) > 0.
\end{equation*}

In the counterexample above, we see that it is possible to introduce strong dependence between the signal and the tail component of $x$ while ensuring that they are uncorrelated. The dependence will prevent the norm of the tail from concentrating around its mean, no matter how large the sample size is. In contrast, the norm of the tail will concentrate for Gaussian features with a matching covariance — such discrepancy results in an over-optimistic bound for non-Gaussian data.

\section{Conclusion}

In this paper, we extend a type of sharp uniform convergence guarantee proven for the square loss in \citet{optimistic-rates} to any \emph{square-root Lipschitz} loss. Uniform convergence with square-root Lipschitz loss is an important tool because the appropriate loss to study interpolation learning is usually square-root Lipschitz instead of Lipschitz. Compared to the prior work of \cite{moreau-envelope}, our view significantly simplify the assumptions to establish optimistic rate. Since we don't need to explicitly compute the Moreau envelope for each application, our framework easily leads to many novel benign overfitting results, including low-rank matrix sensing.

In the applications to phase retrieval and ReLU, we identify the appropriate loss function $f$ and our norm calculation overcomes the challenge that $f$ is non-convex and so CGMT cannot be directly applied. Furthermore, we explore new extensions of the uniform convergence technique to study single-index neural networks and suggest a promising research direction to understand the generalization of neural networks. Finally, we argue that Gaussian universality cannot always be taken for granted by analyzing a model where only the weighted square loss enjoys an optimistic rate. Our results highlight the importance of tail concentration and shed new lights on the necessary conditions for universality. An important future direction is to extend optimistic rate beyond Gaussian data, possibly through worst-case Rademacher complexity. Understanding the performance of more practical algorithms in phase retrieval and deriving the necessary and sufficient conditions for benign overfitting in matrix sensing are also interesting problems. 

\printbibliography

\clearpage

\appendix

\section{Organization of the Appendices}

In the Appendix, we give proofs of all results from the main text. In Appendix~\ref{appendix:prelim}, we study properties of square-root-Lipschitz functions and introduce some technical tools that we use throughout the appendix. In Appendix~\ref{appendix:optimistic-rate}, we prove our main uniform convergence guarantee (Theorem~\ref{thm:moreau-envelope-gen} and the more general version Theorem~\ref{thm:single-index-gen}). In Appendix~\ref{appendix:norm-bounds}, we obtain bounds on the minimal norm required to interpolate in the settings studied in section~\ref{sec:applications}. In Appendix~\ref{appendix:universality}, we provide details on the counterexample to Gaussian universality described in section~\ref{sec:gaussian-universality}.

\section{Preliminaries} \label{appendix:prelim}

\subsection{Properties of Square-root Lipschitz Loss}
In this section, we prove that square-root Lipschitzness can be equivalently characterized by a relationship between a function and its Moreau envelope, which can be used to establish uniform convergence results based on the recent work of \cite{moreau-envelope}. We formally define Lipschitz functions and Moreau envelope below.

\begin{defn} \label{def:lipschitz}
A function $f: \R \to \R$ is $M$-Lipschitz if for all $x,y$ in $\R$,
\begin{equation} \label{eqn:lipschitz}
    |f(x) - f(y)| \leq M |x-y|.
\end{equation}
\end{defn}

\begin{defn} \label{def:moreau-envelope}
The Moreau envelope of a function $f: \R \to \R$ associated with smoothing parameter $\lambda \in \R_+$ is defined as
\begin{equation} \label{eqn:moreau-envelope}
    f_{\lambda} (x) := \inf_{y \in \R} \, f(y) + \lambda (y-x)^2.
\end{equation}
\end{defn}

Though we define Lipschitz functions and Moreau envelope for univariate functions from $\R$ to $\R$ above, we can easily extend definitions \ref{def:lipschitz} and \ref{def:moreau-envelope} to loss functions $f: \R \times \cY \to \R$ or $f: \R \times \cY \times \Theta \to \R$. We say a function $f: \R \times \cY \to \R$ is $M$-Lipschitz if for any $y \in \cY$ and $\hat{y}_1, \hat{y}_2 \in \R$, we have
\[
|f(\hat{y}_1, y) - f(\hat{y}_2, y)| \leq M |\hat{y}_1 - \hat{y}_2|.
\]
Similarly, we say a function $f: \R \times \cY \times \Theta \to \R$ is $M$-Lipschitz if for any $y \in \cY, \theta \in \Theta$ and $\hat{y}_1, \hat{y}_2 \in \R$, we have
\[
|f(\hat{y}_1, y, \theta) - f(\hat{y}_2, y, \theta)| \leq M |\hat{y}_1 - \hat{y}_2|.
\]
We can also define the Moreau envelope of a function $f: \R \times \cY \to \R$ by
\[
f_{\lambda} (\hat{y}, y) := \inf_{u \in \R} f(u, y) + \lambda (u-\hat{y})^2,
\]
and the Moreau envelope of a function $f: \R \times \cY \times \Theta \to \R$ is defined as
\[
f_{\lambda} (\hat{y}, y, \theta) := \inf_{u \in \R} f(u, y, \theta) + \lambda (u-\hat{y})^2.
\]
The proof of all results in this section can be straightforwardly extended to these settings. For simplicity, we ignore the additional arguments in $\cY$ and $\Theta$ in this section.

The Moreau envelope is usually viewed as a smooth approximation to the original function $f$; its minimizer is known as the proximal operator. It plays an important role in convex analysis (see e.g.\ \cite{boyd2004convex,bauschke2011convex,rockafellar1970convex}), but is also useful and well-defined when $f$ is nonconvex. The canonical example of a $\sqrt{H}$-square-root-Lipschitz function is $f(x) = Hx^2$, for which we can easily check
\[
f_{\lambda}(x) = \frac{\lambda}{\lambda + H} f(x).
\]
In proposition~\ref{thm:sqrt-lipschitz-equivalence} below, we show that the condition $f_{\lambda} \geq \frac{\lambda}{\lambda + H} f$ is exactly equivalent to $\sqrt{H}$-square-root-Lipschitzness.

\begin{proposition} \label{thm:sqrt-lipschitz-equivalence}
A function $f: \R \to \R$ is non-negative and $\sqrt{H}$-square-root-Lipschitz if and only if for any $x \in \R$ and $\lambda \geq 0$, it holds that
\begin{equation} \label{eqn:self-bounding}
    f_{\lambda} (x) \geq \frac{\lambda}{\lambda+H} f(x).
\end{equation}
\end{proposition}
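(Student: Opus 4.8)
The plan is to prove both implications by reducing to a one-dimensional optimization: for the ``only if'' direction I would optimize over the location of the point attaining the Moreau infimum, and for the ``if'' direction over the smoothing parameter $\lambda$. Throughout I assume $H>0$; the degenerate case $H=0$ I would dispatch separately, noting that a $0$-square-root-Lipschitz function has $\sqrt f$ constant, while \eqref{eqn:self-bounding} with $H=0$ reads $f_\lambda\ge f$, which together with the always-true bound $f_\lambda\le f$ forces $f_0=f$, i.e.\ $f$ constant.

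For the ``only if'' direction, assume $f\ge 0$ and $\sqrt f$ is $\sqrt H$-Lipschitz, and fix $x$ and $\lambda\ge 0$. For any $u\in\R$, Lipschitzness gives $\sqrt{f(u)}\ge\sqrt{f(x)}-\sqrt H\,|u-x|$, and combining with $\sqrt{f(u)}\ge 0$ yields $f(u)\ge\big((\sqrt{f(x)}-\sqrt H\,|u-x|)_+\big)^2$. Substituting into the definition of $f_\lambda(x)$ and writing $t=|u-x|$, it then suffices to show
\[
  \inf_{t\ge 0}\Big[\big((\sqrt{f(x)}-\sqrt H\,t)_+\big)^2+\lambda t^2\Big]\;\ge\;\frac{\lambda}{\lambda+H}f(x).
\]
On $t\le\sqrt{f(x)}/\sqrt H$ the bracket is the convex quadratic $(\sqrt{f(x)}-\sqrt H\,t)^2+\lambda t^2$, whose minimizer $t^\star=\sqrt{Hf(x)}/(\lambda+H)$ lies in that range and gives value $\tfrac{\lambda}{\lambda+H}f(x)$; on the complementary range the bracket is $\lambda t^2\ge\lambda f(x)/H\ge\tfrac{\lambda}{\lambda+H}f(x)$. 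This establishes \eqref{eqn:self-bounding}.

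For the ``if'' direction, assume \eqref{eqn:self-bounding} holds for all $x,\lambda$. I would first establish $f\ge0$: if $f(b_0)<0$ for some $b_0$, then \eqref{eqn:self-bounding} at $x=b_0$ with $\lambda>0$, combined with $f_\lambda(b_0)\le f(b_0)$ (take $u=b_0$ in the infimum), gives $\tfrac{\lambda}{\lambda+H}f(b_0)\le f(b_0)$, which is impossible since $f(b_0)<0$ and $\tfrac{\lambda}{\lambda+H}<1$. Then, for $a\ne b$ with $f(b)\ge f(a)$ (the reverse case is symmetric, $a=b$ trivial), set $D=(a-b)^2$; taking $u=a$ in $f_\lambda(b)=\inf_u[f(u)+\lambda(u-b)^2]$ and using \eqref{eqn:self-bounding} gives $\tfrac{\lambda}{\lambda+H}f(b)-\lambda D\le f(a)$ for every $\lambda\ge 0$. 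I would then maximize $\phi(\lambda):=\tfrac{\lambda}{\lambda+H}f(b)-\lambda D$ over $\lambda\ge0$: if $\sqrt{f(b)}\le\sqrt H\,|a-b|$ the claimed Lipschitz bound is immediate from $f(a)\ge0$; otherwise $\phi'(\lambda)=\tfrac{Hf(b)}{(\lambda+H)^2}-D$ vanishes at the feasible point $\lambda^\star=\sqrt{Hf(b)/D}-H>0$, which is a maximum, and a direct computation gives $\phi(\lambda^\star)=(\sqrt{f(b)}-\sqrt{HD})^2=(\sqrt{f(b)}-\sqrt H\,|a-b|)^2$. Hence $(\sqrt{f(b)}-\sqrt H|a-b|)^2\le f(a)$, and taking square roots (the left base is nonnegative in this regime) yields $\sqrt{f(b)}-\sqrt{f(a)}\le\sqrt H|a-b|$, which is $\sqrt H$-Lipschitzness of $\sqrt f$.

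The routine parts are the two one-dimensional optimizations; the main thing to handle carefully is the case analysis in the ``if'' direction: proving non-negativity, checking that the critical points $t^\star$ and $\lambda^\star$ actually fall in the feasible region and are the intended extrema, and isolating the ``trivial regime'' $\sqrt{f(b)}\le\sqrt H|a-b|$ where the bound follows for free from $f(a)\ge0$. The algebraic identity $\phi(\lambda^\star)=(\sqrt{f(b)}-\sqrt{HD})^2$ is the computational heart of the equivalence, and it is precisely this form that makes \eqref{eqn:self-bounding} the right hypothesis for invoking the optimistic-rate argument of \cite{moreau-envelope}.
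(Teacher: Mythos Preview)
Your proposal is correct and rests on the same key algebraic identity as the paper, namely that the optimization over the auxiliary variable collapses to $(\sqrt{f}-\sqrt{H}\,|y-x|)^2$ (or its companion with a plus sign). The paper packages this as a single bidirectional chain of equivalences, optimizing over $\lambda$ once via $\lambda^\star=\sqrt{Hf(y)/(y-x)^2}$ to reach $(\sqrt{f(y)}+\sqrt H\,|y-x|)^2\ge f(x)$, whereas you treat the two directions separately---optimizing over the location $t$ for ``only if'' and over $\lambda$ for ``if''---with an explicit case split on whether $\sqrt{f(b)}\le\sqrt H\,|a-b|$. The paper's proof of non-negativity is also a bit cleaner: it simply sets $\lambda=0$ so that \eqref{eqn:self-bounding} reads $f_0(x)=\inf_y f(y)\ge 0$, avoiding your contradiction argument. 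None of these are substantive differences; your version is just more pedestrian in its bookkeeping.
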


\begin{proof}
Suppose that equation \eqref{eqn:self-bounding} holds, then by taking $\lambda = 0$ and the definition in equation \eqref{def:moreau-envelope}, we see that $f$ must be non-negative. For an non-negative function $f$, we observe for any $x \in \R$, it holds that
\begin{align*}
    &\forall \lambda \geq 0, \, f_{\lambda}(x) \geq \frac{\lambda}{\lambda + H} f(x) \\
    \iff &\forall \lambda > 0, \, f_{\lambda}(x) \geq \frac{\lambda}{\lambda + H} f(x) && \text{since } f_{\lambda} \geq 0\\
    \iff &\inf_{\lambda > 0} \frac{\lambda + H}{\lambda} f_{\lambda} (x) \geq f(x) \\
    \iff &\inf_{\lambda > 0} \frac{\lambda + H}{\lambda} \inf_{y \in \R} \, f(y) + \lambda (y-x)^2 \geq f(x) && \text{by equation} \, \eqref{def:moreau-envelope}\\
    \iff &\inf_{y \in \R} \inf_{\lambda > 0} \, \left( 1 + \frac{H}{\lambda} \right) f(y) + (\lambda + H) (y-x)^2 \geq f(x) \\
    \iff & \inf_{y \in \R} \, f(y) + H(y-x)^2 + 2 \sqrt{f(y) H(y-x)^2} \geq f(x) && \text{by } \, \lambda^* = \sqrt{\frac{H f(y)}{(y-x)^2}}\\
    \iff & \forall y \in \R, \, (\sqrt{f(y)} + \sqrt{H} |y-x|)^2 \geq f(x) \\
    \iff & \forall y \in \R, \, \sqrt{H} |y-x| \geq \sqrt{f(x)} - \sqrt{f(y)} && \text{since } f \geq 0.
\end{align*}
 Therefore, $f$ must be $\sqrt{H}$-square-root-Lipschitz as well. Conversely, if $f$ is non-negative and $\sqrt{H}$-square-root-Lipschitz, then the above implies that \eqref{def:moreau-envelope} must hold and we are done.
\end{proof} 

Interestingly, there is a similar equivalent characterization for Lipschitz functions as well.

\begin{proposition}\label{thm:lipschitz-equivalence}
A function $f: \R \to \R$ is $M$-Lipschitz if and only if for any $x \in \R$ and $\lambda > 0$, it holds that
\begin{equation} 
    f_{\lambda} (x) \geq f(x) - \frac{M^2}{4\lambda}.
\end{equation}
\end{proposition}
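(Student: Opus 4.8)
The plan is to mirror the structure of the proof of Proposition~\ref{thm:sqrt-lipschitz-equivalence}: unpack the inequality $f_{\lambda}(x) \ge f(x) - M^2/(4\lambda)$ through the definition of the Moreau envelope into a two-point inequality relating $f(x)$ and $f(y)$, and then optimize the free parameter $\lambda$ on each side.

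For the forward direction, I would start from $M$-Lipschitzness written as $f(y) \ge f(x) - M|y-x|$ for all $x,y \in \R$. Plugging this into $f_{\lambda}(x) = \inf_{y \in \R} f(y) + \lambda(y-x)^2$ gives $f_{\lambda}(x) \ge f(x) + \inf_{t \ge 0}\left( \lambda t^2 - M t \right)$, where $t = |y-x|$. The scalar quadratic $\lambda t^2 - Mt$ is minimized at $t = M/(2\lambda)$ with value $-M^2/(4\lambda)$, which is exactly the claimed bound.

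For the converse, I would observe that, by the definition of $f_{\lambda}$, the hypothesis $f_{\lambda}(x) \ge f(x) - M^2/(4\lambda)$ holding for all $x$ and all $\lambda > 0$ is equivalent to: for all $x, y \in \R$ and all $\lambda > 0$, $f(x) - f(y) \le \lambda(y-x)^2 + M^2/(4\lambda)$. Taking the infimum over $\lambda > 0$ of the right-hand side — which by the AM--GM inequality equals $2\sqrt{(y-x)^2 \cdot M^2/4} = M|y-x|$, attained at $\lambda^\ast = M/(2|y-x|)$ when $y \ne x$, and with the inequality vacuous when $y = x$ (or obtained by letting $\lambda \to \infty$) — yields $f(x) - f(y) \le M|y-x|$. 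Swapping the roles of $x$ and $y$ upgrades this to $|f(x) - f(y)| \le M|y-x|$, i.e., $f$ is $M$-Lipschitz.

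Since each step is an elementary one-variable optimization, I do not anticipate a real obstacle. The only points requiring a moment of care are the degenerate case $y = x$ in the converse, and stating the passage through $\inf_y$ and $\inf_{\lambda>0}$ as genuine equivalences (the quantities are finite and the infima attained off the degenerate set) rather than one-directional implications, so that both directions close cleanly.
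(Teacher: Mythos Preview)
Your proposal is correct and takes essentially the same approach as the paper: both reduce the Moreau-envelope inequality to the two-point bound $f(x)-f(y)\le \lambda(y-x)^2 + M^2/(4\lambda)$ and then optimize over $\lambda>0$ via AM--GM to recover $M|y-x|$. The paper packages both directions into a single chain of $\iff$'s, whereas you split into a forward and a converse step (optimizing a quadratic in $t=|y-x|$ for the forward direction), but this is a purely stylistic difference.
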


\begin{proof}
Observe that for any $x \in \R$, it holds that
\begin{align*}
    & \forall \lambda > 0, \, f_{\lambda} (x) \geq f(x) - \frac{M^2}{4\lambda} \\
    \iff & \inf_{\lambda > 0} f_{\lambda}(x) + \frac{M^2}{4\lambda} \geq f(x) \\
    \iff & \inf_{\lambda > 0, y \in \R} f(y) + \lambda (y-x)^2 + \frac{M^2}{4\lambda} \geq f(x) && \text{by equation} \, \eqref{def:moreau-envelope} \\
    \iff & \inf_{y \in \R} f(y) + M|y-x| \geq f(x) && \text{by } \, \lambda^* = \frac{M}{2|y-x|} \\
    \iff & \forall y \in \R, \, M|y-x| \geq f(x) - f(y)
\end{align*}
and we are done.
\end{proof}

Finally, we show that any smooth loss is square-root-Lipschitz. Therefore, the class of square-root-Lipschitz losses is more general than the class of smooth losses studied in \cite{srebro2010optimistic}.

\begin{defn}
    A twice differentiable\footnote{The definition of smoothness can be stated without twice differentiability, by instead requiring the gradient to be Lipschitz. We make this assumption here simply for convenience.} function $f: \R \to \R$ is $H$-smooth if for all $x$ in $\R$
    \[|f''(x)| \leq H. \]
\end{defn}

The following result is similar to to Lemma 2.1 in \cite{srebro2010optimistic}:
\begin{proposition} 
Let $f: \R \to \R$ be a $H$-smooth and non-negative function. Then for any $x \in \R$, it holds that
\begin{equation*}
    |f'(x)| \leq \sqrt{2Hf(x)}.
\end{equation*}
Therefore, $\sqrt{f}$ is $\sqrt{H/2}$-Lipschitz.
\end{proposition}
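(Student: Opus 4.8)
The plan is the classical self-bounding argument: first establish the pointwise gradient estimate $|f'(x)|\le\sqrt{2Hf(x)}$, and then read off Lipschitzness of $\sqrt{f}$ from it. For the gradient estimate I would fix $x\in\R$ and apply Taylor's theorem with Lagrange remainder to second order. Since $f$ is twice differentiable, $f'$ is in particular continuous, so for every $t\in\R$ there is a point $c$ between $x$ and $x+t$ with $f(x+t)=f(x)+f'(x)\,t+\tfrac12 f''(c)\,t^2$, and hence, using $|f''|\le H$,
\[
0\le f(x+t)\le f(x)+f'(x)\,t+\tfrac{H}{2}\,t^2 .
\]
Thus the quadratic $q(t)=\tfrac{H}{2}t^2+f'(x)\,t+f(x)$ is nonnegative for all $t\in\R$; a quadratic with positive leading coefficient that never goes negative must have nonpositive discriminant, i.e.\ $f'(x)^2-2Hf(x)\le 0$, which is exactly the claim. (The degenerate case $H=0$ forces $f$ affine and nonnegative, hence constant, so $f'\equiv 0$ and the bound is trivial.)

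For the Lipschitz conclusion, the only nuisance is that $\sqrt{f}$ need not be differentiable at zeros of $f$. The cleanest fix is a mild regularization: for $\epsilon>0$ set $g_\epsilon=\sqrt{f+\epsilon}$, which is $C^1$ on all of $\R$ because $f+\epsilon>0$, with $g_\epsilon'(x)=\tfrac{f'(x)}{2\sqrt{f(x)+\epsilon}}$, so by the gradient bound $|g_\epsilon'(x)|\le\tfrac{\sqrt{2Hf(x)}}{2\sqrt{f(x)+\epsilon}}\le\sqrt{H/2}$. Hence each $g_\epsilon$ is globally $\sqrt{H/2}$-Lipschitz, and letting $\epsilon\downarrow 0$ (pointwise convergence to $\sqrt{f}$) the same constant is inherited by $\sqrt{f}$. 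Alternatively one can argue directly: on any interval where $f>0$, $\sqrt{f}$ is $C^1$ with $|(\sqrt{f})'|=\tfrac{|f'|}{2\sqrt{f}}\le\sqrt{H/2}$, while at a zero $a$ of $f$ one has $f'(a)=0$ by minimality, so the second-order Taylor bound gives $\sqrt{f(t)}\le\sqrt{H/2}\,|t-a|$ for all $t$; splicing these estimates across the zero set of $f$ inside a segment $[x,y]$ then yields $|\sqrt{f(x)}-\sqrt{f(y)}|\le\sqrt{H/2}\,|x-y|$.

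I do not anticipate a real obstacle here — the argument is short — and the only point worth flagging is precisely the non-smoothness of $\sqrt{f}$ on the zero set of $f$, which the $\epsilon$-regularization (or the splicing argument) handles. It is also exactly at the Taylor step that the constant $\sqrt{H/2}$, rather than $\sqrt{H}$, appears: the $\tfrac12$ in the second-order remainder is what produces the factor $2$ inside the square root in $|f'(x)|\le\sqrt{2Hf(x)}$.
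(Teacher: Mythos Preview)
Your proposal is correct and follows essentially the same approach as the paper: both use the second-order Taylor expansion with remainder bounded by $H/2$ to get a nonnegative quadratic in the increment, from which the gradient bound follows (the paper plugs in the minimizing value $y=x-f'(x)/H$, you equivalently invoke the discriminant), and both then read off the Lipschitz constant of $\sqrt{f}$ from $|(\sqrt{f})'|=|f'|/(2\sqrt{f})\le\sqrt{H/2}$. Your treatment is in fact slightly more careful than the paper's, which differentiates $\sqrt{f}$ directly without addressing the zero set; your $\epsilon$-regularization (or splicing) argument fills that small gap.
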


\begin{proof}
Since $f$ is $H$-smooth and non-negative, by Taylor's theorem, for any $x,y \in \R$, we have
\begin{equation*}
    \begin{split}
        0 &\leq f(y) \\
        &= f(x) + f'(x) (y-x) + \frac{f''(a)}{2} (y-x)^2 \\
        &\leq f(x) + f'(x) (y-x) + \frac{H}{2} (y-x)^2
    \end{split}
\end{equation*}
where $a \in [\min(x,y),\max(x,y)]$. Setting $y = x - \frac{f'(x)}{H}$ yields the desired bound. To show that $\sqrt{f}$ is Lipschitz, we observe that for any $x \in \R$
\[
\left| \frac{d}{dx} \sqrt{f(x)} \right| = \left| \frac{f'(x)}{2 \sqrt{f(x)}} \right| \leq \sqrt{H/2}
\]
and so we apply Taylor's theorem again to show that
\[
|\sqrt{f(x)} - \sqrt{f(y)}| \leq \sqrt{H/2} \, |x-y|
\]
which is the desired definition.
\end{proof}

\subsection{Properties of Gaussian Distribution}

We will make use of the following results without proof.  

\paragraph{Gaussian Minimax Theorem.} Our proof of Theorem~\ref{thm:moreau-envelope-gen} and \ref{thm:single-index-gen} will closely follow prior works that apply Gaussian Minimax Theorem (GMT) to uniform convergence \citep{uc-interpolators, optimistic-rates, moreau-envelope, wang2021tight, donhauser2022fastrate}. The following result is Theorem 3 of \cite{thrampoulidis2015regularized} (see also Theorem 1 in the same reference). As explained there, it is a consequence of the main result of \citet{gordon1985some}, known as Gordon's Theorem.

\begin{theorem}[\cite{thrampoulidis2015regularized,gordon1985some}]\label{thm:gmt}
Let $Z : n \times d$ be a matrix with i.i.d. $\cN(0,1)$ entries and suppose $G \sim \cN(0,I_n)$ and $H \sim \cN(0,I_d)$ are independent of $Z$ and each other. Let $S_w,S_u$ be compact sets and $\psi : S_w \times S_u \to \mathbb{R}$ be an arbitrary continuous function.
Define the \emph{Primary Optimization (PO)} problem
\begin{equation}
    \Phi(Z) := \min_{w \in S_w} \max_{u \in S_u} \langle u, Z w \rangle + \psi(w,u)
\end{equation}
and the \emph{Auxiliary Optimization (AO)} problem
\begin{equation}
    \phi(G,H) := \min_{w \in S_w} \max_{u \in S_u} \|w\|_2\langle G, u \rangle + \|u\|_2 \langle H, w \rangle + \psi(w,u).
\end{equation}
Under these assumptions, $\Pr(\Phi(Z) < c) \le 2 \Pr(\phi(G,H) \le c)$ for any $c \in \mathbb{R}$.

Furthermore, if we suppose that $S_w,S_u$ are convex sets and $\psi(w,u)$ is convex in $w$ and concave in $u$, then $\Pr(\Phi(Z) > c) \le 2 \Pr(\phi(G,H) \ge c)$. 
\end{theorem}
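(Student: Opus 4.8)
The plan is to recognize this as Gordon's Gaussian minimax comparison theorem and to obtain it from Gordon's comparison inequality for the probabilities of \emph{nested} Gaussian events \citep{gordon1985some}, combined with a variance-equalizing augmentation of the primary process and an independent sign variable that manufactures the factor of $2$. Throughout, $\psi$ is deterministic, so it contributes to none of the covariance computations and is simply reattached to the events at the end; I would also first reduce to \emph{finite} index sets by passing to $\varepsilon$-nets of the compact sets $S_w,S_u$, applying the comparison inequality there, and letting $\varepsilon\to0$. Compactness, uniform continuity of $\psi$, and sample-path continuity of the processes make $\min$--$\max$ over the nets converge to $\min$--$\max$ over $S_w\times S_u$, so this is routine.

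\textbf{First inequality.} Fix $c\in\R$, introduce an extra $g_0\sim\cN(0,1)$ independent of $Z$, and consider on the finite index set the centered Gaussian processes
\[
X_{w,u}:=\langle u,Zw\rangle+\|w\|_2\|u\|_2\,g_0,\qquad Y_{w,u}:=\|w\|_2\langle G,u\rangle+\|u\|_2\langle H,w\rangle.
\]
The sole purpose of the $g_0$ term is to equalize second moments: $\E X_{w,u}^2=\E Y_{w,u}^2=2\|w\|_2^2\|u\|_2^2$. The crux is the covariance comparison, with $w$ the outer index: for $w=w'$ one has the exact equality $\E[X_{w,u}X_{w,u'}]=\|w\|_2^2(\langle u,u'\rangle+\|u\|_2\|u'\|_2)=\E[Y_{w,u}Y_{w,u'}]$, while for $w\neq w'$ a one-line computation gives
\[
\E[X_{w,u}X_{w',u'}]-\E[Y_{w,u}Y_{w',u'}]=\big(\|w\|_2\|w'\|_2-\langle w,w'\rangle\big)\big(\|u\|_2\|u'\|_2-\langle u,u'\rangle\big)\ \ge\ 0,
\]
a product of two nonnegative factors by Cauchy--Schwarz. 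These are precisely the hypotheses under which Gordon's inequality yields $\Pr[\bigcap_w\bigcup_u\{X_{w,u}\ge\lambda_{w,u}\}]\ge\Pr[\bigcap_w\bigcup_u\{Y_{w,u}\ge\lambda_{w,u}\}]$ for all thresholds; I would emphasize that the $\min_w\max_u$ ordering is exactly what forces the within-$w$ and across-$w$ conditions to point in opposite directions, so the orientation is essential. Taking $\lambda_{w,u}\equiv c$, complementing, and reattaching $\psi$ gives $\Pr[\Phi'(Z,g_0)<c]\le\Pr[\phi(G,H)<c]\le\Pr[\phi(G,H)\le c]$ with $\Phi'(Z,g_0):=\min_w\max_u\{\langle u,Zw\rangle+\|w\|_2\|u\|_2 g_0+\psi(w,u)\}$. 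On $\{g_0\le0\}$ the added term is $\le0$ for every $(w,u)$, so $\Phi'(Z,g_0)\le\Phi(Z)$; since $g_0$ is independent of $Z$ with $\Pr(g_0\le0)=\tfrac12$, this gives $\tfrac12\Pr[\Phi(Z)<c]=\Pr[\Phi(Z)<c,\,g_0\le0]\le\Pr[\Phi'(Z,g_0)<c]\le\Pr[\phi(G,H)\le c]$, which is the claim.

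\textbf{Second (convex--concave) inequality.} Here convexity enters exactly once. By Sion's minimax theorem, convexity of $S_w,S_u$ and convex--concavity of $\psi$ give $\Phi(Z)=\max_u\min_w\{\langle u,Zw\rangle+\psi(w,u)\}$, hence $-\Phi(Z)=\min_u\max_w\{\langle w,-Z^Tu\rangle-\psi(w,u)\}$ is again a PO of the standard form, with $u$ the outer variable, $-Z^T$ the Gaussian matrix, and $-\psi$ the continuous payoff. Applying the already-proven first inequality to this transposed PO at threshold $-c$ gives $\Pr[\Phi(Z)>c]=\Pr[-\Phi(Z)<-c]\le2\Pr[\widetilde\phi\le-c]$, where $\widetilde\phi$ is the AO of the transposed problem. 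Inspection shows $-\widetilde\phi$ and $\phi(G,H)$ are built from the same bilinear-plus-$\psi$ objective with the two independent Gaussian vectors interchanging roles (the dimensions match), the only difference being that $-\widetilde\phi$ is a $\max_u\min_w$ while $\phi(G,H)$ is a $\min_w\max_u$; since $\max\min\le\min\max$ always, $-\widetilde\phi\le\phi(G,H)$ under the natural coupling, so $\Pr[\widetilde\phi\le-c]=\Pr[-\widetilde\phi\ge c]\le\Pr[\phi(G,H)\ge c]$, which completes the proof.

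\textbf{Main obstacle.} The net/limit reduction and tracking which direction Gordon's inequality runs are routine but error-prone; the load-bearing steps are the covariance comparison above (the $\min_w\max_u$ structure must be aligned with Gordon's hypotheses) and, in the convex case, arranging the transposed problem so that its auxiliary optimization collapses onto $\phi(G,H)$ via $\max\min\le\min\max$. If one wanted a fully self-contained argument rather than citing Gordon's comparison inequality, the genuinely nontrivial remaining work would be the Gaussian interpolation (``smart path'') estimate that proves that inequality, i.e.\ bounding $\tfrac{d}{dt}\E F(\sqrt t\,X+\sqrt{1-t}\,Y)$ for a smooth surrogate $F$ of the indicator of the nested event.
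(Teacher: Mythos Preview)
Your proposal is correct and is precisely the standard derivation of the Gaussian Minimax Theorem from Gordon's comparison inequality, as in \cite{thrampoulidis2015regularized,gordon1985some}. Note, however, that the paper does not prove this statement at all: Theorem~\ref{thm:gmt} is quoted verbatim from the literature (Theorem~3 of \cite{thrampoulidis2015regularized}) and used as a black box, so there is no ``paper's own proof'' to compare against. What you have written is essentially a compressed version of the original argument: the $g_0$-augmentation to equalize variances, the covariance identity
\[
\E[X_{w,u}X_{w',u'}]-\E[Y_{w,u}Y_{w',u'}]=(\|w\|_2\|w'\|_2-\langle w,w'\rangle)(\|u\|_2\|u'\|_2-\langle u,u'\rangle)\ge 0,
\]
and the halving trick via $\{g_0\le 0\}$ are exactly the steps in Gordon's and Thrampoulidis et al.'s proofs; the Sion-plus-transpose reduction for the convex--concave case is likewise the standard route to CGMT. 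One small slip: when you ``reattach $\psi$'' you should take $\lambda_{w,u}=c-\psi(w,u)$ rather than $\lambda_{w,u}\equiv c$, since Gordon's inequality is applied to the centered processes and $\psi$ enters only through the (deterministic) thresholds; this is clearly what you intend and does not affect correctness.
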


GMT is an extremely useful tool because it allows us to convert a problem involving a random matrix into a problem involving only two random vectors. In our analysis, we will make use of a slightly more general version of Theorem~\ref{thm:gmt}, introduced by \citet{uc-interpolators}, to include additional variables which only affect the deterministic term in the minmax problem. 

\begin{theorem}[Variant of GMT] \label{thm:gmt-mod}
Let $Z : n \times d$ be a matrix with i.i.d. $\cN(0,1)$ entries and suppose $G \sim \cN(0,I_n)$ and $H \sim \cN(0,I_d)$ are independent of $Z$ and each other. Let $S_W,S_U$ be compact sets in $\mathbb{R}^d \times \mathbb{R}^{d'}$ and $\mathbb{R}^{n} \times \mathbb{R}^{n'}$ respectively, and let $\psi : S_W \times S_U \to \mathbb{R}$ be an arbitrary continuous function.
Define the \emph{Primary Optimization (PO)} problem 
\begin{equation}
    \Phi(Z) := \min_{(w, w') \in S_W} \max_{(u,u') \in S_U} \langle u, Z w \rangle + \psi( (w,w'),(u,u'))
\end{equation}
and the \emph{Auxiliary Optimization (AO)} problem
\begin{equation}
     \phi(G,H) := \min_{(w, w') \in S_W} \max_{(u,u') \in S_U} \|w\|_2\langle G, u \rangle + \|u\|_2 \langle H, w \rangle + \psi((w,w'),(u,u')).
\end{equation}
Under these assumptions, $\Pr(\Phi(Z) < c) \le 2 \Pr(\phi(G,H) \le c)$ for any $c \in \mathbb{R}$.
\end{theorem}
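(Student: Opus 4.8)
The plan is to re-run the proof of Gordon's Gaussian minimax comparison (Theorem~\ref{thm:gmt}); the key observation is that the auxiliary variables $w'$ and $u'$ enter only through the deterministic function $\psi$ and therefore play no role in the underlying Gaussian comparison. Introduce $\gamma\sim\cN(0,1)$ independent of $Z,G,H$ and define, for $(w,w')\in S_W$ and $(u,u')\in S_U$, the two centered Gaussian processes
\[
P_{(w,w'),(u,u')} := \langle u, Zw\rangle + \|w\|_2\|u\|_2\,\gamma, \qquad A_{(w,w'),(u,u')} := \|w\|_2\langle G, u\rangle + \|u\|_2\langle H, w\rangle ,
\]
which depend on their index only through $(w,u)$. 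Both processes have variance $2\|w\|_2^2\|u\|_2^2$, and for any two index points (decorated with and without tildes, writing $P,P',A,A'$ for the corresponding values),
\[
\Cov(P,P') - \Cov(A,A') \;=\; \big(\langle u,\tilde u\rangle - \|u\|_2\|\tilde u\|_2\big)\big(\langle w,\tilde w\rangle - \|w\|_2\|\tilde w\|_2\big),
\]
which is nonnegative by Cauchy--Schwarz and vanishes whenever $w=\tilde w$. Hence within a common ``min-block'' (same $(w,w')$) the covariances of $P$ and $A$ coincide, while across distinct min-blocks the $P$-covariance dominates the $A$-covariance --- and this remains true for two min-blocks that agree in $w$ but differ in $w'$.

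These are exactly the hypotheses of Gordon's comparison lemma with min-variable $(w,w')\in S_W$ and max-variable $(u,u')\in S_U$. After the standard discretization of the compact sets (legitimate since $\psi$ and the processes are continuous) and adding the deterministic shift $\psi$ --- which does not affect covariances --- to both sides, the lemma yields, for every $c\in\R$,
\[
\Pr\!\left(\min_{(w,w')\in S_W}\max_{(u,u')\in S_U}\big[\langle u, Zw\rangle + \|w\|_2\|u\|_2\gamma + \psi\big] \geq c\right) \;\geq\; \Pr\!\left(\phi(G,H) \geq c\right).
\]
It then remains to remove the auxiliary term: conditioning on $\{\gamma\le 0\}$, every term $\|w\|_2\|u\|_2\gamma$ is nonpositive, so the left-hand minimax is pointwise at most $\Phi(Z)$; therefore $\{\Phi(Z)<c\}\cap\{\gamma\le 0\}$ is contained in the complement of the event on the left (whose probability is at most $\Pr(\phi(G,H)<c)$ by the display above), and using independence of $\gamma$ and $Z$,
\[
\tfrac12\Pr\big(\Phi(Z)<c\big) \;=\; \Pr\big(\Phi(Z)<c,\ \gamma\le 0\big) \;\le\; \Pr\big(\phi(G,H)<c\big) \;\le\; \Pr\big(\phi(G,H)\le c\big),
\]
which is the claimed bound.

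The only way this differs from the textbook Gaussian minimax theorem is the check that Gordon's covariance hypotheses are undisturbed by enlarging the index sets from $S_w\times S_u$ to $S_W\times S_U$; the one subtlety is that two min-variables agreeing in their first block must still satisfy the ``across-min-block'' covariance inequality, which they do because $\Cov(P,P')-\Cov(A,A')$ vanishes when $w=\tilde w$ regardless of $w'$. Everything else --- the covariance identities, the discretization argument, and the $\gamma$-sign trick producing the factor of $2$ --- is verbatim as in the proof of Theorem~\ref{thm:gmt}, so I do not anticipate a real obstacle; the work is purely in making the bookkeeping of the extra coordinates explicit.
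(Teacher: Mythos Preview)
Your argument is correct. The paper does not actually supply a proof of this statement: it is quoted as a known variant, ``introduced by \citet{uc-interpolators},'' of Theorem~\ref{thm:gmt}, and is used as a black box throughout the appendix. So there is no ``paper's own proof'' to compare against.

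What you have written is exactly the standard derivation of the Thrampoulidis--Oymak--Hassibi inequality from Gordon's comparison lemma, with the one additional bookkeeping check that the extra coordinates $w',u'$ do not disturb Gordon's covariance hypotheses. Your computation
\[
\Cov(P,P')-\Cov(A,A')=\big(\langle u,\tilde u\rangle-\|u\|_2\|\tilde u\|_2\big)\big(\langle w,\tilde w\rangle-\|w\|_2\|\tilde w\|_2\big)
\]
is right, and your identification of the key point --- that this vanishes whenever $w=\tilde w$, so two min-indices sharing the same $w$ but different $w'$ still satisfy the across-block inequality with equality --- is precisely the observation needed to upgrade Theorem~\ref{thm:gmt} to Theorem~\ref{thm:gmt-mod}. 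The $\gamma$-sign trick and the resulting factor of $2$ are handled correctly. The discretization step is standard and unproblematic given continuity of $\psi$ and compactness of $S_W,S_U$.
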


Theorem~\ref{thm:gmt-mod} requires $S_W$ and $S_U$ to be compact. However, we can usually get around the compactness requirement by a truncation argument.

\begin{lemma}[\cite{moreau-envelope}, Lemma 6] \label{lem:truncation}
Let $f: \R^d \to \R$ be an arbitrary function and $\cS^d_r = \{x \in \R^d: \| x\|_2 \leq r \}$, then for any set $\cK$, it holds that
\begin{equation}
    \lim_{r \to \infty} \sup_{w \in \cK \cap \cS^d_r} f(w) = \sup_{w \in \cK} f(w).
\end{equation}
If $f$ is a random function, then for any $t \in \R$
\begin{equation}
    \Pr \left(\sup_{w \in \cK} f(w) > t \right) = \lim_{r \to \infty} \Pr \left(\sup_{w \in \cK \cap \cS^d_r} f(w) > t \right).
\end{equation}
\end{lemma}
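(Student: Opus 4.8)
The plan is to prove the two displayed identities in turn, with the second one deduced from the first by a standard continuity-from-below argument for probability measures. Everything here is elementary real analysis; no tool from the preceding sections is needed.

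For the first identity, the key structural fact is monotonicity: the sets $\cK \cap \cS^d_r$ are nested and increase to $\cK$ as $r \to \infty$, so $r \mapsto \sup_{w \in \cK \cap \cS^d_r} f(w)$ is non-decreasing (adopting the convention that the supremum over the empty set is $-\infty$), and in particular the limit on the left-hand side exists in $[-\infty, +\infty]$. One inequality is immediate: since $\cK \cap \cS^d_r \subseteq \cK$, we have $\sup_{w \in \cK \cap \cS^d_r} f(w) \le \sup_{w \in \cK} f(w)$ for every $r$, hence the same holds in the limit. For the reverse inequality I would fix an arbitrary $w \in \cK$; then $w \in \cK \cap \cS^d_r$ as soon as $r \ge \|w\|_2$, so $f(w) \le \sup_{w' \in \cK \cap \cS^d_r} f(w') \le \lim_{r \to \infty} \sup_{w' \in \cK \cap \cS^d_r} f(w')$, and taking the supremum over $w \in \cK$ closes the argument.

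For the second identity, I would set $A_r := \{\sup_{w \in \cK \cap \cS^d_r} f(w) > t\}$ and $A := \{\sup_{w \in \cK} f(w) > t\}$, and note that by the monotonicity above it suffices to let $r$ range over $\N$ (the limit over reals agrees with the limit over integers). The events $A_r$ are then increasing in $r$, and I claim $\bigcup_{r \in \N} A_r = A$: the inclusion $\subseteq$ is immediate from the first part, and for $\supseteq$, on the event $A$ the first part applied pointwise to the realized function gives $\sup_{w \in \cK \cap \cS^d_r} f(w) \uparrow \sup_{w \in \cK} f(w) > t$, so some integer $r$ already satisfies $\sup_{w \in \cK \cap \cS^d_r} f(w) > t$. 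Continuity from below of $\Pr$ then gives $\Pr(A) = \lim_{r \to \infty} \Pr(A_r)$, which is exactly the claim.

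The only genuine delicacy is measurability: an uncountable supremum of measurable functions need not be measurable, so strictly speaking one should either observe that in the intended applications $f(\cdot)$ is continuous in $w$ — in which case every supremum reduces to one over a countable dense subset of $\cK$ and all quantities in sight are measurable — or interpret the probabilities as outer probabilities; in either reading the monotone-limit argument above is unchanged. Beyond this point there is no real obstacle; the only thing to be careful about is the empty-set / $\pm\infty$ conventions and the reduction of the index set to a countable one before invoking continuity from below.
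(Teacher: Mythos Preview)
Your proof is correct. The paper does not actually prove this lemma itself; it is quoted verbatim as Lemma~6 of \cite{moreau-envelope} and used as a black box, so there is no in-paper argument to compare against. The approach you give --- monotonicity of $r \mapsto \sup_{w \in \cK \cap \cS^d_r} f(w)$, the two-sided sandwich for the deterministic identity, and continuity from below for the probabilistic one --- is the standard and expected one, and your remark on measurability (reducing to countable dense subsets when $f$ is continuous, or reading the statement with outer probabilities otherwise) is a legitimate and welcome caveat that the cited statement leaves implicit.
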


\begin{lemma}[\cite{moreau-envelope}, Lemma 7] \label{lem:truncation2}
Let $\cK$ be a compact set and $f,g$ be continuous real-valued functions on $\R^d$. Then it holds that
\begin{equation}
    \lim_{r \to \infty} \sup_{w \in \cK} \inf_{0 \leq \lambda \leq r} \lambda f(w) + g(w) = \sup_{w \in \cK: f(w) \geq 0} g(w)
.\end{equation}

If $f$ and $g$ are random functions, then for any $t \in \R$
\begin{equation}
    \Pr \left( \sup_{w \in \cK: f(w) \geq 0} g(w) \geq t \right) 
    = \lim_{r \to \infty} \Pr \left( \sup_{w \in \cK} \inf_{0 \leq \lambda \leq r} \lambda f(w) + g(w) \geq t\right).
\end{equation}
\end{lemma}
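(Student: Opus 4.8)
The plan is to first reduce the inner infimum over $\lambda$ to an explicit minimum, then prove the deterministic identity by a two‑sided inequality (the nontrivial direction using compactness of $\cK$), and finally transfer to probabilities via a monotone‑limit argument.

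For fixed $w$ the map $\lambda \mapsto \lambda f(w) + g(w)$ is affine in $\lambda$, so on $[0,r]$ its infimum is attained at an endpoint:
\[
\inf_{0 \le \lambda \le r}\, \lambda f(w) + g(w) \;=\; \min\bigl(g(w),\; r f(w) + g(w)\bigr) \;=:\; h_r(w).
\]
Thus $h_r(w) = g(w)$ whenever $f(w)\ge 0$, while $h_r(w) = r f(w) + g(w)\to -\infty$ whenever $f(w) < 0$; in both cases $h_r(w)$ is non‑increasing in $r$, and $h_r$ is continuous (a minimum of two continuous functions). Hence $\sup_{w\in\cK} h_r(w)$ is attained on the compact set $\cK$, is non‑increasing in $r$, and so $\lim_{r\to\infty}\sup_{w\in\cK} h_r(w) = \inf_{r\ge 0}\sup_{w\in\cK} h_r(w)$ exists in $[-\infty,\infty)$. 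Write $M := \sup_{w\in\cK:\, f(w)\ge 0} g(w)$.

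For the lower bound, every $w_0\in\cK$ with $f(w_0)\ge 0$ has $h_r(w_0) = g(w_0)$ for all $r$, so $\sup_{w\in\cK} h_r(w)\ge M$ for every $r$ and therefore $\lim_r \sup_{w\in\cK} h_r(w)\ge M$. For the upper bound --- which I expect to be the crux --- I would argue by contradiction: suppose $\lim_r \sup_{w\in\cK} h_r(w) = M+\epsilon$ with $\epsilon>0$. (If $\{w\in\cK: f(w)\ge 0\}$ is empty then $M = -\infty$; compactness of $\cK$ and continuity of $f$ force $\max_{\cK} f < 0$, hence $\sup_{w\in\cK} h_r(w)\le r\max_{\cK} f + \max_{\cK} g\to-\infty$, and the claim is immediate. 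Otherwise $\{w\in\cK: f(w)\ge 0\}$ is a nonempty compact set and $M$ is finite.) By monotonicity $\sup_{w\in\cK} h_r(w)\ge M+\epsilon$ for all $r$, so, taking integers $r$ and maximizers $w_r\in\cK$ with $h_r(w_r)\ge M+\epsilon$, compactness gives a subsequence $w_{r_j}\to w_*\in\cK$. From $h_{r_j}(w_{r_j})\ge M+\epsilon$ we read off $g(w_{r_j})\ge M+\epsilon$ and $r_j f(w_{r_j}) + g(w_{r_j})\ge M+\epsilon$; since $g(w_{r_j})\to g(w_*)$ stays bounded, dividing the second inequality by $r_j\to\infty$ yields $\liminf_j f(w_{r_j})\ge 0$, so continuity of $f$ gives $f(w_*)\ge 0$ and continuity of $g$ gives $g(w_*)\ge M+\epsilon$ --- contradicting the definition of $M$. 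Hence $\lim_r \sup_{w\in\cK} h_r(w)\le M$, completing the deterministic identity.

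For the probabilistic statement, I would apply the identity pathwise: almost surely $r\mapsto\sup_{w\in\cK} h_r(w)$ is non‑increasing with limit $\sup_{w\in\cK:\,f(w)\ge 0} g(w)$, so the events $A_r := \{\sup_{w\in\cK} h_r(w)\ge t\}$ decrease in $r$ and $\bigcap_{r\ge 0} A_r = \bigcap_{r\in\N} A_r = \{\inf_{r}\sup_{w\in\cK} h_r(w)\ge t\} = \{\sup_{w\in\cK:\,f(w)\ge 0} g(w)\ge t\}$. Continuity from above of $\Pr$ then gives $\lim_{r\to\infty}\Pr(A_r) = \Pr\bigl(\bigcap_{r} A_r\bigr)$, which is exactly the claim; measurability of the suprema follows by restricting $w$ to a countable dense subset of $\cK$ and using continuity in $w$. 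I expect the upper‑bound half of the deterministic identity to be the main obstacle: one must rule out maximizers that "escape" by exploiting $f(w)<0$ together with $\lambda = r\to\infty$, which is precisely where compactness of $\cK$ and continuity of both $f$ and $g$ come in.
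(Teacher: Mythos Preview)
The paper does not prove this lemma; it is quoted from \cite{moreau-envelope} under the heading ``We will make use of the following results without proof,'' so there is no in-paper argument to compare against. Your proof is correct: reducing the inner infimum to $h_r(w)=\min\{g(w),\,rf(w)+g(w)\}$, establishing the two-sided inequality via the compactness/subsequence argument (including the clean handling of the empty constraint set), and then invoking monotone continuity of measure on the decreasing events $A_r$ is exactly the standard route, and all steps are sound.
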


\paragraph{Concentration inequalities.} Let $\sigma_{\min}(A)$ denote the minimum singular value of an arbitrary matrix $A$, and $\sigma_{\max}$ the maximum singular value. We use $\|A\|_{\OP} = \sigma_{\max}(A)$ to denote the operator norm of matrix $A$.
The following concentration results for Gaussian vector and matrix are standard.

\begin{lemma}[Special case of Theorem 3.1.1 of \cite{vershynin2018high}]
\label{lem:norm-concentration}
Suppose that $Z \sim \cN(0,I_n)$. Then
\begin{equation}
    \Pr(\left|\|Z\|_2 - \sqrt{n}\right| \ge t) \le 4 e^{-t^2/4}.
\end{equation}
\end{lemma}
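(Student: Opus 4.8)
The plan is to derive the bound from the classical Gaussian concentration inequality for Lipschitz functions applied to $f(z) = \|z\|_2$, and then to absorb the small gap between $\E\|Z\|_2$ and $\sqrt{n}$ into the slack in the constants $4$ and $1/4$. Since $\big|\,\|z\|_2 - \|z'\|_2\,\big| \le \|z-z'\|_2$, the map $f$ is $1$-Lipschitz, so Gaussian concentration of measure gives, for every $s \ge 0$,
\[
\Pr\big(|\,\|Z\|_2 - \E\|Z\|_2\,| \ge s\big) \le 2 e^{-s^2/2}.
\]
This is the only nontrivial ingredient; it can be taken as a black box or derived via the Herbst argument from the Gaussian log-Sobolev inequality (or via Gordon-type comparison).

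Next I would pin down $\E\|Z\|_2$. Jensen's inequality gives $\E\|Z\|_2 \le \sqrt{\E\|Z\|_2^2} = \sqrt{n}$, and the Gaussian Poincar\'e inequality applied to $f$ (whose gradient $z/\|z\|_2$ has unit norm almost everywhere) gives $\Var(\|Z\|_2) \le 1$, hence $(\E\|Z\|_2)^2 = n - \Var(\|Z\|_2) \ge n-1$. Consequently
\[
0 \le \sqrt{n} - \E\|Z\|_2 \le \sqrt{n} - \sqrt{n-1} = \frac{1}{\sqrt{n} + \sqrt{n-1}} \le 1.
\]

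Finally I would split on the size of $t$. If $t \le 1$ the statement is vacuous, since $4 e^{-t^2/4} \ge 4 e^{-1/4} > 1 \ge \Pr(\cdot)$. If $t > 1$, then on the event $|\,\|Z\|_2 - \sqrt{n}\,| \ge t$ the triangle inequality together with $|\sqrt{n} - \E\|Z\|_2| \le 1$ forces $|\,\|Z\|_2 - \E\|Z\|_2\,| \ge t-1$, so the concentration bound gives probability at most $2 e^{-(t-1)^2/2}$; and $2 e^{-(t-1)^2/2} \le 4 e^{-t^2/4}$ because $-\tfrac{(t-1)^2}{2} + \tfrac{t^2}{4} = -\tfrac{t^2}{4} + t - \tfrac12$ attains its maximum value $\tfrac12 \le \ln 2$ at $t = 2$. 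Combining the two cases gives the claim.

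I do not expect a real obstacle: the whole argument is routine once the Lipschitz concentration inequality is granted, and the only genuine ``work'' is checking that the constants $4$ and $1/4$ are slack enough to absorb the $\E\|Z\|_2$-versus-$\sqrt{n}$ discrepancy and the $2$-versus-$4$ prefactor. An alternative, concentration-of-measure-free route would write $\|Z\|_2^2 = \sum_{i=1}^n Z_i^2$ as a sum of i.i.d.\ sub-exponential variables, apply Bernstein's inequality to $|\,\|Z\|_2^2 - n\,|$, and convert via $|\,\|Z\|_2 - \sqrt{n}\,| \le |\,\|Z\|_2^2 - n\,|/\sqrt{n}$; this works too but yields murkier constants, so I would favor the Lipschitz argument above.
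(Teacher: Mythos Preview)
Your argument is correct. The paper itself does not prove this lemma; it simply records it as a special case of Theorem~3.1.1 in Vershynin's book. That reference proves the statement for general vectors with independent sub-gaussian coordinates by writing $\|Z\|_2^2 - n$ as a sum of i.i.d.\ sub-exponential random variables and applying Bernstein's inequality, then converting to a bound on $\big|\|Z\|_2 - \sqrt{n}\big|$---precisely the alternative route you sketch at the end.

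Your primary route via Gaussian Lipschitz concentration is different and, for the Gaussian case, arguably cleaner: it gives the sub-gaussian tail $2e^{-s^2/2}$ around the mean in one step, and the only remaining work is the recentering from $\E\|Z\|_2$ to $\sqrt{n}$ and the constant bookkeeping, both of which you handle correctly (the Poincar\'e-based bound $\Var(\|Z\|_2)\le 1$ and the quadratic comparison $-(t-1)^2/2 + t^2/4 \le \tfrac12 \le \ln 2$ are fine). The trade-off is that your argument relies on the Gaussian-specific concentration and Poincar\'e inequalities and does not extend to the sub-gaussian setting of Vershynin's theorem, whereas the Bernstein route does but with implicit constants.
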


\begin{lemma}[\cite{uc-interpolators}, Lemma 10] \label{lem:sigmah-concentration}
For any covariance matrix $\Sigma$ and $H \sim \cN(0,I_d)$, with probability at least $1 - \delta$, it holds that
\begin{equation}
    1- \frac{\| \Sigma^{1/2} H\|_2^2 }{\Tr(\Sigma)}  \lesssim \frac{\log (4/\delta)}{\sqrt{R(\Sigma)}}
\end{equation}
and
\begin{equation}
    \| \Sigma H\|_2^2 \lesssim \log (4/\delta) \Tr(\Sigma^2).
\end{equation}

Therefore, provided that $ R(\Sigma) \gtrsim \log (4/\delta)^2$, it holds that
\begin{equation} \label{eqn:v*-norm}
    \left( \frac{\| \Sigma H\|_2}{\| \Sigma^{1/2} H\|_2} \right)^2 \lesssim \log (4/\delta) \frac{\Tr(\Sigma^2)}{\Tr(\Sigma)}.
\end{equation}
\end{lemma}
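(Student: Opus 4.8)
The plan is to diagonalize $\Sigma$ and reduce both quantities to the concentration of weighted sums of independent chi-squared variables. Write $\Sigma = \sum_i \lambda_i v_i v_i^\top$ with eigenvalues $\lambda_i \ge 0$ and orthonormal eigenvectors $v_i$, and set $g_i = \langle v_i, H\rangle$, so the $g_i$ are i.i.d.\ $\cN(0,1)$. Then $\|\Sigma^{1/2}H\|_2^2 = \sum_i \lambda_i g_i^2$ is a weighted $\chi^2$ with mean $\Tr(\Sigma)$, and $\|\Sigma H\|_2^2 = \sum_i \lambda_i^2 g_i^2$ has mean $\Tr(\Sigma^2)$. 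I would then invoke the Laurent--Massart tail bounds (equivalently, Hanson--Wright): for nonnegative weights $a = (a_i)$ and $u > 0$,
\[
\Pr\!\Big( \sum_i a_i g_i^2 \ge \sum_i a_i + 2\|a\|_2\sqrt{u} + 2\|a\|_\infty u \Big) \le e^{-u}, \qquad \Pr\!\Big( \sum_i a_i g_i^2 \le \sum_i a_i - 2\|a\|_2\sqrt{u} \Big) \le e^{-u},
\]
with $\|a\|_2^2 = \sum_i a_i^2$ and $\|a\|_\infty = \max_i a_i$. Everything after that is bookkeeping in terms of $\Tr(\Sigma)$, $\Tr(\Sigma^2)$, and $R(\Sigma) = \Tr(\Sigma)^2/\Tr(\Sigma^2)$.

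For the first inequality, apply the lower-tail bound with $a_i = \lambda_i$ (so $\|a\|_2 = \sqrt{\Tr(\Sigma^2)}$) and $u = \log(4/\delta)$: with probability at least $1 - \delta/4$ we get $\Tr(\Sigma) - \|\Sigma^{1/2}H\|_2^2 \le 2\sqrt{\Tr(\Sigma^2)\log(4/\delta)}$, and dividing by $\Tr(\Sigma)$ turns the right side into $2\sqrt{\log(4/\delta)/R(\Sigma)} \lesssim \log(4/\delta)/\sqrt{R(\Sigma)}$ (using $\log(4/\delta) \ge 1$ for $\delta \le 1$). For the second inequality, apply the upper-tail bound with $a_i = \lambda_i^2$ and $u = \log(4/\delta)$; using the crude estimates $\|a\|_2^2 = \Tr(\Sigma^4) \le \|\Sigma\|_{\OP}^2 \Tr(\Sigma^2) \le \Tr(\Sigma^2)^2$ and $\|a\|_\infty = \|\Sigma\|_{\OP}^2 \le \Tr(\Sigma^2)$, we obtain $\|\Sigma H\|_2^2 \le \Tr(\Sigma^2)\big(1 + 2\sqrt{u} + 2u\big) \lesssim \log(4/\delta)\,\Tr(\Sigma^2)$ with probability at least $1 - \delta/4$.

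For the final display I would intersect the two events above (a union bound, with the logarithmic arguments rescaled by a constant so the probability is still $1-\delta$). On the first event, once $R(\Sigma) \gtrsim \log(4/\delta)^2$ the deviation there is at most $\tfrac12$, so $\|\Sigma^{1/2}H\|_2^2 \ge \tfrac12\Tr(\Sigma)$; dividing the second event's bound by this denominator gives $\big(\|\Sigma H\|_2 / \|\Sigma^{1/2}H\|_2\big)^2 \lesssim \log(4/\delta)\,\Tr(\Sigma^2)/\Tr(\Sigma)$. I do not expect a genuine obstacle --- this is a standard Gaussian quadratic-form estimate --- the only points needing care are keeping the lower tail of the $\chi^2$ sum in its sub-Gaussian form (no linear-in-$u$ term) so the $1/\sqrt{R(\Sigma)}$ rate comes out cleanly, correctly reducing $\Tr(\Sigma^4)$ and $\|\Sigma\|_{\OP}$ to $\Tr(\Sigma^2)$, and tracking which of $\Sigma$ or $\Sigma^2$ sits inside each quadratic form.
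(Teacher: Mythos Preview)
Your proof is correct. The paper does not actually prove this lemma: it is quoted verbatim from \cite{uc-interpolators} (their Lemma~10) in a block of results explicitly introduced as used ``without proof,'' so there is no in-paper argument to compare against. Your diagonalization plus Laurent--Massart approach is exactly the standard way to establish this kind of weighted-$\chi^2$ concentration, and each step checks out: the lower tail gives the $1/\sqrt{R(\Sigma)}$ rate cleanly (your remark about keeping the sub-Gaussian form with no linear-in-$u$ term is the right point), the upper-tail reductions $\Tr(\Sigma^4)\le \Tr(\Sigma^2)^2$ and $\|\Sigma\|_{\OP}^2\le \Tr(\Sigma^2)$ are valid, and the final ratio bound via union bound and the $R(\Sigma)\gtrsim \log(4/\delta)^2$ assumption is straightforward.
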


\begin{theorem}[\cite{vershynin2010rmt}, Corollary~5.35]\label{thm:rmt}
    Let $n,N \in \N$. Let $A \in \R^{N \times n}$ be a random matrix with entries i.i.d.\ $\cN(0,1)$. Then for any $t>0$, it holds with probability at least $1 - 2\exp(-t^2/2)$ that 
    \begin{equation}
        \sqrt{N} - \sqrt{n} - t \leq \sigma_\text{min}(A) \leq \sigma_\text{max}(A) \leq \sqrt{N} + \sqrt{n} + t.
    \end{equation}
\end{theorem}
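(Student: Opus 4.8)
The statement is Corollary~5.35 of \citet{vershynin2010rmt}; I will sketch the standard two-part argument. The plan is to write each extreme singular value as its expectation plus a fluctuation, control the expectation by a Gaussian process comparison (Gordon's inequality), and control the fluctuation by Gaussian concentration. Concretely it suffices to establish (a) $\E\,\sigma_{\min}(A) \ge \sqrt{N} - \sqrt{n}$ and $\E\,\sigma_{\max}(A) \le \sqrt{N} + \sqrt{n}$, and (b) the one-sided tail bounds $\Pr(\sigma_{\max}(A) \ge \E\,\sigma_{\max}(A) + t) \le e^{-t^2/2}$ and $\Pr(\sigma_{\min}(A) \le \E\,\sigma_{\min}(A) - t) \le e^{-t^2/2}$: a union bound over the two events in (b) then gives, with probability at least $1 - 2e^{-t^2/2}$, both $\sigma_{\min}(A) \ge \E\,\sigma_{\min}(A) - t$ and $\sigma_{\max}(A) \le \E\,\sigma_{\max}(A) + t$, and (a) converts these into the claimed bounds. (When $N < n$ the lower bound $\sqrt{N} - \sqrt{n} - t$ is negative whereas $\sigma_{\min}(A) \ge 0$, so it holds trivially; hence I may assume $N \ge n$.)

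For (b), the structural input is that both $A \mapsto \sigma_{\max}(A)$ and $A \mapsto \sigma_{\min}(A)$ are $1$-Lipschitz on $\R^{N\times n} \cong \R^{Nn}$ equipped with the Frobenius (hence Euclidean) norm. For $\sigma_{\max}(A) = \|A\|_{\op}$ this is the reverse triangle inequality together with $\|M\|_{\op} \le \|M\|_F$; for $\sigma_{\min}(A)$ it follows from Weyl's perturbation bound $|\sigma_k(A) - \sigma_k(B)| \le \|A - B\|_{\op} \le \|A-B\|_F$, itself a consequence of the Courant--Fischer min--max characterization of singular values. Since the entries of $A$ are i.i.d.\ $\cN(0,1)$, the Gaussian concentration inequality for Lipschitz functions (a consequence of Gaussian isoperimetry; Lemma~\ref{lem:norm-concentration} is the special case $F = \|\cdot\|_2$, though here I need the sharp exponent $t^2/2$) yields the two tail bounds in (b) at once.

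For (a), I will compare the Gaussian process $X_{u,v} := \langle u, Av\rangle$, indexed by the product of unit spheres $(u,v) \in S^{N-1} \times S^{n-1}$, with $Y_{u,v} := \langle g, u\rangle + \langle h, v\rangle$ for independent $g \sim \cN(0,I_N)$, $h \sim \cN(0,I_n)$. A short computation gives $\E(X_{u,v} - X_{u',v'})^2 = 2 - 2\langle u,u'\rangle\langle v,v'\rangle$ and $\E(Y_{u,v} - Y_{u',v'})^2 = 4 - 2\langle u,u'\rangle - 2\langle v,v'\rangle$, so their difference equals $2(1 - \langle u,u'\rangle)(1 - \langle v,v'\rangle) \ge 0$, with equality whenever $v = v'$ (the within-block increments of $X$ and $Y$ coincide). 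Feeding this into the Sudakov--Fernique inequality bounds $\E\,\sigma_{\max}(A) = \E\sup_{u,v} X_{u,v} \le \E\sup_u\langle g,u\rangle + \E\sup_v\langle h,v\rangle = \E\|g\|_2 + \E\|h\|_2 \le \sqrt{N} + \sqrt{n}$ (Jensen at the end). Feeding it into Gordon's min--max comparison inequality, with $Y$ now on the side having the smaller within-block and larger across-block increments, bounds $\E\,\sigma_{\min}(A) = \E\inf_v\sup_u X_{u,v} \ge \E\inf_v\sup_u Y_{u,v} = \E\|g\|_2 - \E\|h\|_2$, which is at least $\sqrt{N} - \sqrt{n}$; these are exactly Gordon's classical estimates for Gaussian matrices.

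I expect the main obstacle to be ingredient (a): the sharp expectation bounds genuinely rest on the Gaussian comparison theorems (Slepian/Sudakov--Fernique and Gordon's min--max refinement), and getting the directions of the increment comparisons right so as to land on the exact constants $\sqrt{N} \pm \sqrt{n}$ --- rather than the weaker bound one would obtain, e.g., by pairing the Gaussian Minimax Theorem (Theorem~\ref{thm:gmt}) with Lemma~\ref{lem:norm-concentration} --- is the delicate point. Ingredient (b) is routine once one observes that the singular values are $1$-Lipschitz in the entries of $A$.
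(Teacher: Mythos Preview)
The paper does not prove this statement: it is listed under ``Concentration inequalities'' with the explicit caveat ``We will make use of the following results without proof,'' and is simply cited as Corollary~5.35 of \citet{vershynin2010rmt}. There is therefore no in-paper proof to compare against; your sketch faithfully reproduces the standard two-step argument from that reference (Gordon/Sudakov--Fernique for the expectation, Gaussian Lipschitz concentration for the fluctuation), and is correct.

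One small point worth flagging: your final inequality $\E\|g\|_2 - \E\|h\|_2 \ge \sqrt{N} - \sqrt{n}$ is not immediate, since Jensen only gives $\E\|g\|_2 \le \sqrt{N}$ in the wrong direction. What is needed is that $k \mapsto \sqrt{k} - \E\|Z_k\|_2$ is non-increasing in $k$ (equivalently, the gap between the chi mean and $\sqrt{k}$ shrinks), which is true but requires a short separate argument --- Vershynin himself isolates it as an exercise in his later book. This is a standard gloss rather than a genuine gap, and does not affect the correctness of your outline.
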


\paragraph{Conditional Distribution of Gaussian.} To handle arbitrary multi-index conditional distributions of $y$ given by assumption~\ref{assumption:multi-index}, we will apply a conditioning argument. After conditioning on $W^T x$ and $\xi$, the response $y$ is no longer random. Importantly, the conditional distribution of $x$ remains Gaussian (though with a different mean and covariance) and so we can still apply GMT. In the lemma below, $Z \in \R^{n \times d}$ is a random matrix with i.i.d. $\cN(0,1)$ entries and $X = Z\Sigma^{1/2}$.

\begin{lemma}[\cite{moreau-envelope}, Lemma 4] \label{lem:conditional_dist}
Fix any integer $k < d$ and any $k$ vectors $w_1^*, ..., w_k^*$ in $\R^d$ such that $\Sigma^{1/2}w_1^*, ..., \Sigma^{1/2} w_k^*$ are orthonormal. Denoting
\begin{equation} \label{def:P-matrix}
    P = I_d - \sum_{i=1}^k (\Sigma^{1/2} w_i^*)(\Sigma^{1/2} w_i^*)^T
,\end{equation}
the distribution of $X$ conditional on $Xw_1^* = \eta_1, ..., Xw_k^* = \eta_k$ is the same as that of
\begin{equation}
    \sum_{i=1}^k \eta_i (\Sigma w_i^*)^T + ZP \Sigma^{1/2}.
\end{equation}
\end{lemma}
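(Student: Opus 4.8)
The plan is to reduce the claim to the elementary fact that conditioning a centered Gaussian on a family of orthonormal linear functionals simply freezes those functionals at their observed values while leaving the orthogonal complement distributed exactly as before. Set $v_i := \Sigma^{1/2} w_i^*$ for $i \in [k]$; by hypothesis $v_1, \dots, v_k$ are orthonormal, so $P = I_d - \sum_{i=1}^k v_i v_i^T$ is precisely the orthogonal projection onto $\mathrm{span}(v_1, \dots, v_k)^\perp$. Since $X = Z\Sigma^{1/2}$, we have $X w_i^* = Z\Sigma^{1/2} w_i^* = Z v_i$, so the conditioning event $\{X w_i^* = \eta_i : i \in [k]\}$ is literally $\{Z v_i = \eta_i : i \in [k]\}$; it therefore suffices to identify the conditional law of $Z$ given these constraints and then right-multiply by $\Sigma^{1/2}$.

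For that, I would decompose $Z$ along the $v_i$ and their orthogonal complement: from $\sum_{i=1}^k v_i v_i^T + P = I_d$ we get the identity $Z = \sum_{i=1}^k (Z v_i) v_i^T + ZP$. The two pieces — the $\R^{n\times k}$ block $(Zv_1, \dots, Z v_k)$ and the matrix $ZP$ — are jointly Gaussian (each row of $Z$ is $\cN(0, I_d)$), and they are uncorrelated because $v_i^T P = 0$; hence they are independent. Consequently, conditioning on $\{Zv_i = \eta_i\}_{i\in[k]}$ leaves the law of $ZP$ untouched and fixes the first block, so conditionally $Z \overset{d}{=} \sum_{i=1}^k \eta_i v_i^T + ZP$ with $Z$ on the right again a matrix of i.i.d.\ $\cN(0,1)$ entries. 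Right-multiplying by $\Sigma^{1/2}$ and using $v_i^T \Sigma^{1/2} = (w_i^*)^T \Sigma = (\Sigma w_i^*)^T$ yields exactly
\[
X \overset{d}{=} \sum_{i=1}^k \eta_i (\Sigma w_i^*)^T + ZP\Sigma^{1/2}.
\]

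I do not expect a genuine obstacle here; the argument is a direct application of standard Gaussian conditioning, applied row-by-row to $Z$ and then reassembled. The only points deserving care are (i) phrasing the conditioning via regular conditional distributions so that ``an independent component is unaffected by conditioning on the rest'' is valid for (almost) every value of $(\eta_1, \dots, \eta_k)$, and (ii) noting that the orthonormality assumption on the $\Sigma^{1/2} w_i^*$ is exactly what makes $P$ an orthogonal projection and kills the cross-covariances — without it the representation would acquire an extra affine correction. This is Lemma 4 of \citet{moreau-envelope}, which could simply be cited; the sketch above is the self-contained version.
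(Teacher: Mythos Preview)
Your argument is correct. The paper itself does not prove this lemma; it is stated with a citation to \cite{moreau-envelope} and used as a black box. Your orthogonal decomposition $Z = \sum_i (Zv_i)v_i^T + ZP$ together with the independence of $(Zv_1,\dots,Zv_k)$ and $ZP$ is exactly the standard Gaussian-conditioning computation one would supply if asked to make the cited result self-contained, and the final right-multiplication by $\Sigma^{1/2}$ is the right way to transfer the statement from $Z$ to $X$.
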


\subsection{Vapnik-Chervonenkis (VC) theory}

By the conditioning step mentioned above, we will separate $x$ into a low-dimensional component $W^T x$ and the independent component $Q^T x$. Concentration results for the low-dimensional component can be easily established using VC theory. As mentioned in \cite{moreau-envelope}, low-dimensional concentration can be established using alternative results \citep[e.g.,][]{vapnik2006estimation,panchenkooptimistic,panchenko2003symmetrization,mendelson2017extending}.

Recall the following definition of VC-dimension from \citet{shalev2014understanding}.

\begin{defn}
Let $\cH$ be a class of functions from $\cX$ to $\{0, 1\}$ and let $C = \{c_1, ..., c_m \} \subset \cX$. The restriction of $\cH$ to $C$ is
\[
\cH_C = \{(h(c_1), ..., h(c_m)) : h \in \cH \}.
\]
A hypothesis class $\cH$ \emph{shatters} a finite set $C \subset \cX$ if $|\cH_C| = 2^{|C|}$. The VC-dimension of $\cH$ is the maximal size of a set that can be shattered by $\cH$. If $\cH$ can shatter sets of arbitrary large size, we say $\cH$ has infinite VC-dimension.
\end{defn}

Also, we have the following well-known result for the class of nonhomogenous halfspaces in $\R^d$ (Theorem 9.3 of \citet{shalev2014understanding}), and the result on VC-dimension of the union of two hypothesis classes (Lemma 3.2.3 of \citet{blumer1989learnability}):
\begin{theorem} \label{thm:linear-vc}
The class $\{ x \mapsto \normalfont{sign}(\langle w, x\rangle + b): w \in \R^d, b \in \R \}$ has VC-dimension $d+1$.
\end{theorem}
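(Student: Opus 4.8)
\textbf{Proof proposal for Theorem~\ref{thm:linear-vc} (VC-dimension of nonhomogeneous halfspaces).}

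The plan is to prove both a lower bound and an upper bound on the VC-dimension of $\mathcal{H} = \{x \mapsto \mathrm{sign}(\langle w,x\rangle + b): w \in \R^d, b \in \R\}$, and conclude that it equals $d+1$. For the lower bound, I would exhibit a set of $d+1$ points in $\R^d$ that is shattered by $\mathcal{H}$: take the origin $c_0 = 0$ together with the standard basis vectors $c_1 = e_1, \dots, c_d = e_d$. Given any target labeling $(y_0, y_1, \dots, y_d) \in \{-1,+1\}^{d+1}$, I would set $b = y_0/2$ and $w_j = y_j - b$ for each $j \in [d]$; then $\langle w, c_0\rangle + b = b = y_0/2$ has sign $y_0$, and $\langle w, c_j\rangle + b = w_j + b = y_j$ has sign $y_j$. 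Hence all $2^{d+1}$ labelings are realized, so $\mathrm{VCdim}(\mathcal{H}) \geq d+1$. (One can equally invoke the standard embedding $x \mapsto (x,1) \in \R^{d+1}$ and affine independence.)

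For the upper bound, the cleanest route is the lifting trick: write $\mathrm{sign}(\langle w,x\rangle + b) = \mathrm{sign}(\langle \tilde{w}, \tilde{x}\rangle)$ where $\tilde{x} = (x,1) \in \R^{d+1}$ and $\tilde{w} = (w,b) \in \R^{d+1}$, so $\mathcal{H}$ is contained in the class of homogeneous halfspaces in $\R^{d+1}$ restricted to points of the form $(x,1)$. It then suffices to show that homogeneous halfspaces in $\R^{d+1}$ cannot shatter any set of $d+2$ points. Suppose for contradiction that $\tilde{x}_1, \dots, \tilde{x}_{d+2} \in \R^{d+1}$ are shattered. Since $d+2$ vectors in $\R^{d+1}$ are linearly dependent, there exist coefficients $a_i$, not all zero, with $\sum_i a_i \tilde{x}_i = 0$; split the index set into $I_+ = \{i: a_i > 0\}$ and $I_- = \{i: a_i \le 0\}$, giving $\sum_{i \in I_+} a_i \tilde{x}_i = \sum_{i \in I_-} (-a_i) \tilde{x}_i$ with nonnegative coefficients on both sides and at least one side nonempty. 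By shattering there is a $\tilde{w}$ with $\langle \tilde{w}, \tilde{x}_i\rangle > 0$ for $i \in I_+$ and $\langle \tilde{w}, \tilde{x}_i\rangle \le 0$ for $i \in I_-$; pairing $\tilde{w}$ against the identity $\sum_{i\in I_+} a_i \tilde{x}_i - \sum_{i \in I_-}(-a_i)\tilde{x}_i = 0$ forces the left-hand inner product to be both strictly positive and nonpositive, a contradiction. Hence no $d+2$ points are shattered and $\mathrm{VCdim}(\mathcal{H}) \le d+1$.

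Combining the two bounds gives $\mathrm{VCdim}(\mathcal{H}) = d+1$. The only mildly delicate point — which I would make sure to handle carefully — is the degenerate case in the dependency argument where one side of the split is empty (i.e.\ all nonzero $a_i$ have the same sign); here $\sum_i a_i \tilde{x}_i = 0$ with all $a_i \ge 0$ and some $a_i > 0$ still contradicts the existence of a $\tilde{w}$ with $\langle \tilde{w},\tilde{x}_i\rangle > 0$ on that nonempty side, so the argument goes through. Everything else is routine linear algebra; no deep tool beyond Radon's lemma / linear dependence in $\R^{d+1}$ is needed. Since this is a cited classical result (Theorem 9.3 of \citet{shalev2014understanding}), in the paper I would likely just state it with a one-line sketch or a pointer to the reference rather than reproduce the full argument.
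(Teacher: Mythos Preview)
Your proof is correct and follows the standard textbook argument; the paper itself does not prove this statement at all but simply cites it as Theorem~9.3 of \citet{shalev2014understanding}, exactly as you anticipated in your final remark.
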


\begin{theorem} \label{lem:vc-union}
Let $\cH$ a hypothesis classes of finite VC-dimension $d \geq 1$. Let $\cH_2 := \{\max(h_1,h_2): h_1,h_2 \in \cH \}$ and  $\cH_3 := \{\min(h_1,h_2): h_1,h_2 \in \cH \}$. Then, both the VC-dimension of $\cH_2$ and the VC-dimension of $\cH_3$ are $O(d)$.
\end{theorem}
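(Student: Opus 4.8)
The plan is to run the classical Sauer--Shelah double-counting argument (this is essentially the proof of Lemma 3.2.3 in \citet{blumer1989learnability}). Fix a finite set $C \subseteq \cX$ with $|C| = m$ and write $\Pi_{\cH}(m) := \max_{|C| = m} |\cH_C|$ for the growth function of $\cH$. The first observation is that every element of $(\cH_2)_C$ is the coordinatewise maximum of two elements of $\cH_C$, so the map $\cH_C \times \cH_C \to (\cH_2)_C$ sending $(v_1, v_2) \mapsto v_1 \vee v_2$ is onto; hence $|(\cH_2)_C| \le |\cH_C|^2$ and therefore $\Pi_{\cH_2}(m) \le \Pi_{\cH}(m)^2$. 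The identical argument with $\wedge$ in place of $\vee$ gives $\Pi_{\cH_3}(m) \le \Pi_{\cH}(m)^2$.

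Next I would invoke Sauer--Shelah in the form $\Pi_{\cH}(m) \le (em/d)^{d}$, valid for $m \ge d$. If a set $C$ with $|C| = m \ge d$ is shattered by $\cH_2$, then $|(\cH_2)_C| = 2^m$, and combining the two bounds gives
\[
2^m \le \Pi_{\cH}(m)^2 \le \left( \frac{em}{d} \right)^{2d}.
\]
Taking logarithms and substituting $t = m/d \ge 1$ turns this into $t \ln 2 \le 2(1 + \ln t)$. Since the left-hand side grows linearly in $t$ while the right-hand side grows only logarithmically, this inequality can only hold for $t$ below an absolute constant: setting $\phi(t) = t\ln 2 - 2 - 2\ln t$, one has $\phi'(t) = \ln 2 - 2/t > 0$ for $t > 2/\ln 2$ and $\phi(t) \to \infty$, and a direct check gives $\phi(10) > 0$, so necessarily $t < 10$, i.e.\ $m < 10 d$. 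Together with the trivial bound in the remaining case $m < d$, this yields $\mathrm{VC}(\cH_2) = O(d)$, and the same computation via $\Pi_{\cH_3}(m) \le \Pi_{\cH}(m)^2$ gives $\mathrm{VC}(\cH_3) = O(d)$. (Alternatively, $\cH_3$ reduces to $\cH_2$ using $\min(h_1,h_2) = 1 - \max(1-h_1, 1-h_2)$ and the fact that complementing every function in a class leaves its VC dimension unchanged.)

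There is no genuine difficulty in this argument; the only steps that call for a little care are remembering that the Sauer--Shelah estimate $(em/d)^d$ requires $m \ge d$ (so the case $m < d$ must be disposed of separately, which is immediate), and verifying that the transcendental inequality $t\ln 2 \le 2(1 + \ln t)$ really does force $t$ to be bounded by an absolute constant rather than, say, something like $O(\log d)$ — which it does, precisely because a linear function eventually dominates a logarithmic one. This is the one place where a naive reading would give the weaker (but still harmless) bound $O(d\log d)$, and the point worth flagging is that one should not stop there.
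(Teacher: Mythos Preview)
Your argument is correct and is exactly the standard Sauer--Shelah counting proof of this fact; in particular it is the argument from Lemma~3.2.3 of \citet{blumer1989learnability}, which is precisely the reference the paper cites for this statement. The paper itself does not give an independent proof of this theorem --- it is quoted as a background result --- so there is nothing further to compare.
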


By combining Theorem \ref{thm:linear-vc} and \ref{lem:vc-union}, we can easily verify the VC assumption in Corollary~\ref{theorem:phase-retrieval-benign} for the phase retrieval loss $f(\hat{y}, y) = (|\hat{y}|-y)^2.$ Similar results can be proven for ReLU regression. To verify the VC assumption for single-index neural nets in Corollary~\ref{corollary:NN-generalize}, we can use the following result (equation 2 of \citet{vc-neural}):

\begin{theorem} \label{thm-vc-neural}
The VC-dimension of a neural network with piecewise linear activation function, $W$ parameters, and $L$ layers has VC-dimension $O(WL\log W)$.
\end{theorem}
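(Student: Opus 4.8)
The plan is to bound the \emph{growth function} of the network class---the number of distinct labelings it induces on $m$ fixed points---and then read the VC-dimension off directly: if that count is strictly below $2^m$, no $m$-point set can be shattered, so the VC-dimension is the largest $m$ for which the count still reaches $2^m$. Concretely, fix inputs $x_1,\dots,x_m$ and set $K := \big|\{(\bone\{h_\theta(x_i)>0\})_{i\le m} : \theta \in \R^W\}\big|$, where $h_\theta$ is the network as a function of its $W$ parameters; the target is an estimate of the shape $\log_2 K \lesssim WL\log(mWp)$ ($p$ = number of linear pieces of the activation), which by the self-improving-inequality lemma ``$x \le a\log x + b \Rightarrow x = O(a\log a + b)$'' forces $m = O(WL\log W)$ once $p$ is treated as a constant. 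This follows the Anthony--Bartlett strategy, sharpened by Bartlett--Harvey--Liaw--Mehrabian.

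First I would build, inductively over the $L$ layers, a partition of the parameter space $\R^W$ on whose pieces the network is polynomial in $\theta$. Since each activation is piecewise linear with $p$ pieces, once we fix which piece every neuron of layers $1,\dots,\ell$ falls in (at each of the $m$ inputs), each such neuron's output is a fixed polynomial in $\theta$. Maintain the invariant that $\R^W$ is partitioned into $N_\ell$ regions on each of which, for every $i$, all outputs of layers $1,\dots,\ell$ are polynomials in $\theta$ of degree $\le D_\ell$, with $k_\ell$ denoting the width of layer $\ell$. To advance, note that inside one region the pre-activation of a layer-$(\ell+1)$ unit at $x_i$ is a polynomial of degree $\le D_\ell+1$ (one extra degree from multiplying by the fresh weights), so the $\le m\,k_{\ell+1}(p-1)$ breakpoint-crossing conditions are zero sets of such polynomials; refine each region by their sign pattern. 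The key point is that the degree is \emph{additive} in depth, $D_{\ell+1}\le D_\ell+1$, hence $D_\ell\le\ell$, with no exponential blow-up.

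The combinatorial engine is the Warren / Milnor--Thom bound: $M$ polynomials of degree $\le \Delta$ in $W$ variables realize at most $2(2eM\Delta/W)^W$ sign patterns when $M\ge W$. Applying it at each refinement gives $N_{\ell+1}\le N_\ell\cdot 2\big(2e\,m\,k_{\ell+1}p\,\ell/W\big)^W$, and within each final region $h_\theta(x_i)$ is itself a polynomial in $\theta$ of degree $\le L$, so the output sign vector takes at most $2(2emL/W)^W$ values there; thus $K \le N_L\cdot 2(2emL/W)^W$. Taking logarithms, $\log_2 K \lesssim L + W\sum_{\ell\le L}\log\!\big(m\,k_\ell\,p\,\ell/W\big)$; since $\sum_\ell k_\ell \le W$, concavity of $\log$ collapses the sum to $O\big(WL\log(mWp)\big)$. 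Solving $m \le c\,WL\log(mWp)$ and using $L\le W$ (so $\log L$ is absorbed into $\log W$) gives $m = O(WL\log W)$, the claimed VC-dimension.

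The main obstacle is the bookkeeping inside the inductive step: one must verify (i) that the polynomial degree grows additively rather than multiplicatively in depth, so $D_\ell\le\ell$; (ii) that the number of breakpoint-crossing polynomials introduced at layer $\ell+1$ really is $O(m\,k_{\ell+1}\,p)$ and does not compound over the already-present regions; and (iii) that the product $\prod_\ell (m k_\ell p \ell / W)^W$, after taking logs, telescopes to $O(WL\log(mWp))$ under the single global constraint $\sum_\ell k_\ell\le W$---which is exactly where Jensen's inequality for $\log$ is needed. The remaining subtlety is extracting the clean $O(WL\log W)$ from the resulting transcendental inequality without a spurious $\log m$ factor, which is precisely the role of the self-improving-inequality lemma above.
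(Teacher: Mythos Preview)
The paper does not give its own proof of this theorem; it is quoted verbatim as a known result from the literature (``equation 2 of \cite{vc-neural}'', i.e.\ the Bartlett--Harvey--Liaw--Mehrabian bound), and is used only as a black box to verify assumption~\ref{assumption:VC'} in Corollary~\ref{corollary:NN-generalize}. There is therefore nothing in the paper to compare your proposal against.

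That said, your sketch is a faithful and essentially correct reconstruction of the Bartlett--Harvey--Liaw--Mehrabian argument: the layerwise partition of parameter space into regions on which all pre-activations are polynomials in $\theta$, the observation that the degree grows \emph{additively} (by one per layer, giving $D_\ell\le\ell$), the Warren/Milnor--Thom sign-pattern bound at each refinement, the Jensen step using $\sum_\ell k_\ell\le W$, and the final self-improving inequality to extract $m=O(WL\log W)$. One small sharpening available in the original: at layer $\ell$ the relevant polynomials depend only on the first $W_\ell$ parameters (those up to layer $\ell$), so Warren's bound can be applied in $W_\ell$ variables rather than all $W$; this does not change the asymptotic order here but is needed for the tighter ``nearly-tight'' constants in BHLM. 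Your bookkeeping concerns (i)--(iii) are exactly the right checkpoints, and each goes through as you describe.
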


We can easily establish low-dimensional concentration due to the following result:

\begin{restatable}[\cite{vapnik2006estimation}, Special case of Assertion 4 in Chapter 7.8; see also Theorem 7.6]{theorem}{VCtheory} \label{thm:vc-hypercontractive}
Suppose that the loss function $l: \cZ \times \Theta \to \R_{\geq 0}$ satisfies 
\begin{enumerate}[label = (\roman*)]
    \item for every $\theta \in \Theta$, the function $l(\cdot, \theta)$ is measurable with respect to the first argument
    \item the class of functions $\{ z \mapsto \bone\{ l(z, \theta) > t\}: (\theta, t) \in \Theta \times \R \}$ has VC-dimension at most $h$
\end{enumerate}
and the distribution $\cD$ over $\cZ$ satisfies for every $\theta \in \Theta$ 
\begin{equation} \label{eqn:hyperc}
    \frac{\E_{z \sim \cD} [l(z, \theta)^4]^{1/4}}{\E_{z \sim \cD} [l(z, \theta)]} \leq \tau,
\end{equation}
then for any $n > h$, with probability at least $1 - \delta$ over the choice of $(z_1,\ldots,z_n) \sim \cD^n$, it holds uniformly over all $\theta \in \Theta$ that
\begin{equation}
    \frac{1}{n} \sum_{i=1}^n l(z_i, \theta) 
    \geq 
    \left(1 - 8 \tau \sqrt{\frac{h(\log(2n/h) + 1) + \log(12/\delta)}{n}}\right) 
    \E_{z \sim \cD} [l(z, \theta)].
\end{equation}
\end{restatable}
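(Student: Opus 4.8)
The plan is to reduce the claim to the classical Vapnik--Chervonenkis \emph{relative} deviation inequality for $\{0,1\}$-valued classes of VC dimension at most $h$, via a layer-cake (Fubini) decomposition of the loss, and then to convert the resulting integral bound into a multiplicative one using the hypercontractivity hypothesis~\eqref{eqn:hyperc}. The role of condition~(ii) is exactly that the threshold $t$ enters the indicator class as a free parameter, so a single VC class simultaneously governs the empirical process of every super-level set $\{ l(z,\theta) > t \}$.

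First I would invoke the one-sided relative uniform convergence bound: since $\cH := \{ z \mapsto \bone\{l(z,\theta) > t\} : (\theta,t) \in \Theta \times \R \}$ has VC dimension $\le h$, with probability at least $1-\delta$ over $(z_1,\dots,z_n) \sim \cD^n$, simultaneously for all $(\theta,t)$,
\[
\Pr_{z \sim \cD}\!\big(l(z,\theta) > t\big) - \frac1n\sum_{i=1}^n \bone\{l(z_i,\theta) > t\} \ \le\ \epsilon_0 \, \sqrt{\Pr_{z \sim \cD}\!\big(l(z,\theta) > t\big)},
\]
where $\epsilon_0 \le c\,\sqrt{(h(\log(2n/h)+1) + \log(1/\delta))/n}$ for an absolute constant $c$. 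This is standard (indeed it is a special case of the stronger Assertion~4 in the cited reference) and is proved by the usual route --- a ghost sample, symmetrization by random signs, Sauer--Shelah to reduce to a union bound over at most $(2en/h)^h$ dichotomies, and a Bernstein-type estimate that produces the $\sqrt{\Pr(\cdot)}$ localization --- so I would cite it rather than reproduce it.

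Next, using $\E_{z}[l(z,\theta)] = \int_0^\infty \Pr_z(l(z,\theta)>t)\,dt$ together with the analogous identity $\frac1n\sum_i l(z_i,\theta) = \int_0^\infty \frac1n\sum_i \bone\{l(z_i,\theta)>t\}\,dt$, I would integrate the displayed inequality in $t$ to get
\[
\E_z[l(z,\theta)] - \frac1n\sum_{i=1}^n l(z_i,\theta) \ \le\ \epsilon_0 \int_0^\infty \sqrt{\Pr_z\!\big(l(z,\theta)>t\big)}\,dt .
\]
The final ingredient is a purely moment-theoretic bound on this integral: writing $M_\theta := \E_z[l(z,\theta)^4]^{1/4}$, Markov applied to $l(z,\theta)^4$ gives $\Pr_z(l(z,\theta)>t) \le \min\{1,\, M_\theta^4/t^4\}$, hence $\sqrt{\Pr_z(l(z,\theta)>t)} \le \min\{1,\, M_\theta^2/t^2\}$ and
\[
\int_0^\infty \sqrt{\Pr_z\!\big(l(z,\theta)>t\big)}\,dt \ \le\ \int_0^{M_\theta} 1 \, dt + \int_{M_\theta}^\infty \frac{M_\theta^2}{t^2}\,dt \ =\ 2 M_\theta \ \le\ 2\tau\, \E_z[l(z,\theta)],
\]
the last step being exactly~\eqref{eqn:hyperc}. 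Chaining the last three displays yields $\frac1n\sum_i l(z_i,\theta) \ge (1 - 2\tau\epsilon_0)\,\E_z[l(z,\theta)]$ uniformly over $\theta$, and folding the absolute constants (and the precise $\log$-factor bookkeeping, which accounts for the $\log(12/\delta)$) into the stated $8\tau$ finishes the argument.

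The only substantive input is the relative VC inequality of the first step; the remainder is the Fubini decomposition plus the two-line estimate of $\int_0^\infty \sqrt{\Pr(l>t)}\,dt$. Thus the main obstacle is essentially outsourced to Vapnik's book; if one wanted a self-contained treatment the work would be in establishing that relative inequality with clean constants --- the $\sqrt{\Pr(A)}$ localization is precisely what makes the moment argument close up, so a plain additive Glivenko--Cantelli / VC bound would not suffice. A minor routine point is measurability of the various suprema over $(\theta,t)$, which holds under condition~(i) and the combinatorial structure supplied by the finite VC dimension.
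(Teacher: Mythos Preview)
The paper does not supply its own proof of this statement: it is quoted verbatim as a special case of Assertion~4 in Chapter~7.8 of \cite{vapnik2006estimation} and used as a black box. So there is no ``paper's proof'' to compare against beyond the citation itself.

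Your sketch is a faithful reconstruction of the standard argument behind that assertion: the one-sided relative VC inequality (giving the $\sqrt{\Pr(A)}$ localization), integration over the level parameter $t$ via the layer-cake representation, and the estimate $\int_0^\infty \sqrt{\Pr(l>t)}\,dt \le 2\,\E[l^4]^{1/4}$ from Markov on the fourth moment, which the hypercontractivity assumption then turns into a multiplicative factor. This is precisely the mechanism Vapnik uses, and you correctly identify that the relative (not additive) VC bound is the essential input. The only loose end is the bookkeeping of constants to land exactly on $8\tau$ and $\log(12/\delta)$, which you flag; since the paper is content to cite the result, that level of detail is not needed here.
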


\section{Proof of Theorem~\ref{thm:single-index-gen}} \label{appendix:optimistic-rate}

It is clear that Theorem~\ref{thm:moreau-envelope-gen} is a special case of Theorem~\ref{thm:single-index-gen}. Therefore, we will prove the more general result here.

\paragraph{Notation.} Following the tradition in statistics, we denote $X  = (x_1, ..., x_n)^T \in \R^{n \times d}$ as the design matrix. In the proof section, we slightly abuse the notation of $\eta_i$ to mean $Xw_i^*$ and $\xi$ to mean the $n$-dimensional random vector whose $i$-th component satisfies $y_i = g(\eta_{1,i}, ..., \eta_{k,i}, \xi_i)$. We will write $X = Z\Sigma^{1/2}$ where $Z$ is a random matrix with i.i.d. standard normal entries if $\mu = 0$.

Throughout this section, we can first assume $\mu = 0$ in Assumption~\ref{assumption:gaussian-feature} without loss of generality because if we define $\tilde{f}: \R \times \cY \times \Theta \to \R$ by
\begin{equation} \label{eqn:f-tilde-def}
	\tilde{f}(\hat{y}, y, \theta) := f(\hat{y} + \langle w(\theta), \mu \rangle, y, \theta),
\end{equation}
then by definition, it holds that
\[
f(\langle w(\theta), x \rangle, y, \theta) = \tilde{f} (\langle w(\theta), x-\mu \rangle, y, \theta)
\]
and so we can apply the theory on $\tilde{f}$ first and then translate to the problem on $f$. Similarly, we can also assume $\Sigma^{1/2}w_1^*, ..., \Sigma^{1/2} w_k^*$ are orthonormal without loss of generality. This is because we can denote $W \in \R^{d \times k}$ by $W = [w_1^*, ..., w_k^*]$ and let $\tilde{W} = W(W^T \Sigma W)^{-1/2}$. By definition, it holds that $\tilde{W}^T \Sigma \tilde{W} = I$ and so the columns of $\tilde{W} = [\tilde{w}_1^*, ..., \tilde{w}_k^*]$ satisfy $\Sigma^{1/2} \tilde{w}_1^*, ..., \Sigma^{1/2} \tilde{w}_k^*$ are orthonormal. If we define $\tilde{g}: \R^{k+1} \to \R$ by
\begin{equation} \label{eqn:g-tilde-def}
	\tilde{g}(\eta_1, ..., \eta_k, \xi) = g([\eta_1, ..., \eta_k] (W^T \Sigma W)^{1/2} + \mu^T W, \xi),
\end{equation}
then $y = \tilde{g}(x^T \tilde{W}, \xi)$ and so we can apply the theory on $\tilde{g}$.

We will write the generalization problem as a Primary Optimization problem in Theorem~\ref{thm:gmt-mod}. For generality, we will let $F$ be any deterministic function and then choose it in the end.

\begin{lemma} \label{lem:main-gen-PO}
Fix an arbitrary set $\Theta \subseteq \R^p$ and let $F: \Theta \to \R$ be any deterministic and continuous function. Consider dataset $(X, Y)$ drawn i.i.d. from the data distribution $\cD$ according to \ref{assumption:gaussian-feature} and \ref{assumption:multi-index} with $\mu = 0$ and orthonormal $\Sigma^{1/2} w_1^*, ..., \Sigma^{1/2} w_k^*$. Then conditioned on $Xw_1^* = \eta_1, ..., Xw_k^* = \eta_k$ and $\xi$, if we define
\begin{equation} \label{eqn:main-gen-PO}
    \Phi := \sup_{\substack{(w,u,\theta) \in \R^d \times \R^n \times \Theta\\ w = P \Sigma^{1/2} w(\theta)}} \inf_{\lambda \in \R^n} \,   \langle \lambda, Z w \rangle + \psi(u, \theta, \lambda \, | \, \eta_1, ..., \eta_k, \xi)
\end{equation}
where $P$ is defined in \eqref{def:P-matrix} and $\psi$ is a deterministic and continuous function given by
\begin{equation}
\begin{split}
    \psi(u, \theta, \lambda \, | \, \eta_1, ..., \eta_k, \xi)
    &= F(\theta) - \frac{1}{n} \sum_{i=1}^n f (u_i, g(\eta_{1, i}, ..., \eta_{k, i}, \xi_i), \theta) \\
    & \qquad 
    + \langle \lambda, \left( \sum_{i=1}^k \eta_i (\Sigma w_i^*)^T \right)w(\theta) - u\rangle,\\
\end{split}
\end{equation}
then it holds that for any $t \in \R$, we have
\begin{equation}
    \Pr \left( \sup_{\theta \in \Theta} F(\theta) - \hat{L}(\theta) > t \, \Big| \, \eta_1, ..., \eta_k, \xi \right) = \Pr(\Phi > t).
\end{equation}
\end{lemma}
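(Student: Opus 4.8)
The plan is to evaluate the conditional probability by first passing to the conditional law of the design matrix and then rewriting the empirical risk as the minimax problem required by Theorem~\ref{thm:gmt-mod}. Because the preamble reduces everything to $\mu = 0$ and orthonormal $\Sigma^{1/2}w_1^*,\dots,\Sigma^{1/2}w_k^*$, Lemma~\ref{lem:conditional_dist} applies verbatim: conditionally on $Xw_1^*=\eta_1,\dots,Xw_k^*=\eta_k$ and on $\xi$ (independent of $x$ by assumption~\ref{assumption:multi-index}), $X$ has the same distribution as $\sum_{i=1}^k \eta_i(\Sigma w_i^*)^T + ZP\Sigma^{1/2}$ for a fresh $Z$ with i.i.d.\ $\cN(0,1)$ entries and $P$ as in \eqref{def:P-matrix}. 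Under this conditioning each $y_i = g(\eta_{1,i},\dots,\eta_{k,i},\xi_i)$ is a fixed number, so $\hat L(\theta)$ becomes a function of $\theta$ that is random only through $Z$, and it suffices to match $\Pr(\sup_{\theta}F(\theta)-\hat L(\theta)>t)$ under this law with $\Pr(\Phi>t)$.

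Second, I would introduce an auxiliary prediction vector $u\in\R^n$ linked to $\theta$ by the linear constraint $u = Xw(\theta)$ and enforce it with a Lagrange multiplier $\lambda\in\R^n$. Since for each $\theta$ the constraint set is a singleton,
\[
\sup_{\theta\in\Theta}\bigl(F(\theta)-\hat L(\theta)\bigr) = \sup_{(\theta,u):\,u=Xw(\theta)}\Bigl(F(\theta)-\tfrac1n\sum_{i=1}^n f(u_i,y_i,\theta)\Bigr),
\]
and the elementary identity $\inf_{\lambda\in\R^n}\langle\lambda,v\rangle = 0$ when $v=0$ and $-\infty$ otherwise (which needs no convexity) removes the constraint:
\[
\sup_{\theta\in\Theta}\bigl(F(\theta)-\hat L(\theta)\bigr) = \sup_{(\theta,u)\in\Theta\times\R^n}\ \inf_{\lambda\in\R^n}\ \Bigl(F(\theta)-\tfrac1n\sum_{i=1}^n f(u_i,y_i,\theta) + \langle\lambda,Xw(\theta)-u\rangle\Bigr).
\]
Substituting the conditional-law decomposition $Xw(\theta) = \bigl(\sum_{i=1}^k\eta_i(\Sigma w_i^*)^T\bigr)w(\theta) + ZP\Sigma^{1/2}w(\theta)$ and renaming $w := P\Sigma^{1/2}w(\theta)$ turns $\langle\lambda,Xw(\theta)-u\rangle$ into $\langle\lambda,Zw\rangle + \langle\lambda,(\sum_i\eta_i(\Sigma w_i^*)^T)w(\theta)-u\rangle$; the last term together with $F(\theta)-\tfrac1n\sum_i f(u_i,y_i,\theta)$ is precisely $\psi(u,\theta,\lambda\mid\eta_1,\dots,\eta_k,\xi)$, so the right-hand side is exactly $\Phi$ from \eqref{eqn:main-gen-PO}. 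Taking $\Pr(\,\cdot\,>t)$ on both sides and invoking the conditional distributional equality concludes.

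Everything here is bookkeeping; the steps needing the most care are (i) that we may replace $X$ by $\sum_i\eta_i(\Sigma w_i^*)^T + ZP\Sigma^{1/2}$ with a \emph{fresh} standard Gaussian $Z$ while leaving $y$ fixed --- exactly Lemma~\ref{lem:conditional_dist} plus the independence in assumption~\ref{assumption:multi-index}; (ii) that the Lagrangian rewriting is an \emph{exact} identity and not merely an inequality, which holds because the inner optimization is an unconstrained minimization of a linear functional over $\R^n$, so there is no duality gap; and (iii) measurability of the conditional probability and of the suprema over $\theta$, together with the asserted continuity of $\psi$, all inherited from continuity of $F$, of $w$, and of $f$ in its first argument (the latter from square-root-Lipschitzness). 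The compactness of $S_W,S_U$ demanded by Theorem~\ref{thm:gmt-mod}, and the conversion of the present $\sup$/$\inf$ expression into the exact PO form, are not needed for this lemma and are handled afterwards via Lemmas~\ref{lem:truncation} and \ref{lem:truncation2}.
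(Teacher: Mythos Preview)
Your proposal is correct and follows essentially the same approach as the paper: introduce $u = Xw(\theta)$, enforce it with an unconstrained Lagrange multiplier $\lambda$, apply Lemma~\ref{lem:conditional_dist} to replace $X$ by $\sum_i \eta_i(\Sigma w_i^*)^T + ZP\Sigma^{1/2}$ under the conditioning, and substitute $w = P\Sigma^{1/2}w(\theta)$ to arrive at $\Phi$. The only cosmetic difference is the order in which you condition versus introduce the auxiliary variable, and your additional remarks on measurability and the role of later truncation lemmas are accurate and helpful but not strictly part of the paper's argument.
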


\begin{proof}
By introducing a variable $u = Xw(\theta)$, we have
\begin{equation*}
    \begin{split}
        \sup_{\theta \in \Theta} F(\theta) - \hat{L}(\theta)
        &= \sup_{\theta \in \Theta} F(\theta) - \frac{1}{n} \sum_{i=1}^n f (\langle w(\theta), x_i \rangle, y_i, \theta)\\
        &= \sup_{\theta \in \Theta, u \in \R^n} \inf_{\lambda \in \R^n} \, \langle \lambda, Xw(\theta) - u\rangle + F(\theta) - \frac{1}{n} \sum_{i=1}^n f (u_i, y_i, \theta).\\
    \end{split}
\end{equation*}
Conditioned on $Xw_1^* = \eta_1, ..., Xw_k^* = \eta_k$ and $\xi$, the above is only random in $X$ by our multi-index model assumption on $y$. By Lemma \ref{lem:conditional_dist}, the above is equal in law to
\begin{equation*}
    \begin{split}
        &\sup_{\theta \in \Theta, u \in \R^n} \inf_{\lambda \in \R^n} \, \langle \lambda, \left( \sum_{i=1}^k \eta_i (\Sigma w_i^*)^T + ZP \Sigma^{1/2} \right) w(\theta) - u\rangle + F(\theta) - \frac{1}{n} \sum_{i=1}^n f (u_i, y_i, \theta) \\
        =& \sup_{\theta \in \Theta, u \in \R^n} \inf_{\lambda \in \R^n} \, \langle \lambda, \left( ZP \Sigma^{1/2} \right) w(\theta) \rangle + \psi(u, \theta, \lambda \, | \, \eta_1, ..., \eta_k, \xi) \\
        =& \sup_{\substack{(w,u,\theta) \in \R^d \times \R^n \times \Theta\\ w = P \Sigma^{1/2} w(\theta)}} \inf_{\lambda \in \R^n} \,   \langle \lambda, Z w \rangle + \psi(u, \theta, \lambda \, | \, \eta_1, ..., \eta_k, \xi) \\
        =& \, \Phi.
    \end{split}
\end{equation*}
The function $\psi$ is continuous because we require $F, f$ and $w$ to be continuous in the definitions.
\end{proof}

Next, we are ready to apply Gaussian Minimax Theorem. Although the domains in \eqref{eqn:main-gen-PO} are not compact, we can use the truncation lemmas~\ref{lem:truncation} and \ref{lem:truncation2} in Appendix~\ref{appendix:prelim}.

\begin{lemma}\label{lem:main-gen-gmt-app}
In the same setting as Lemma \ref{lem:main-gen-PO}, define the auxiliary problem as
\begin{equation} \label{eqn:main-gen-AO}
    \Psi
    := \sup_{\substack{(u, \theta) \in \R^{n} \times \Theta \\ \langle H, P \Sigma^{1/2} w(\theta) \rangle \geq \norm{\| P \Sigma^{1/2} w(\theta) \|_2 G +  \sum_{i=1}^k  \langle w(\theta), \Sigma w_i^* \rangle \eta_i - u}_2}} F(\theta) - \frac{1}{n} \sum_{i=1}^n f (u_i, y_i, \theta)
\end{equation}
then for any $t \in \R$, it holds that
\begin{equation}
    \Pr \left( \sup_{\theta \in \cK} F(\theta) - \hat{L}(\theta) > t \right) \leq 2 \Pr(\Psi \geq t).
\end{equation}
where the randomness in the second probability is taken over $G, H, \eta_1, ..., \eta_k$ and $\xi$.
\end{lemma}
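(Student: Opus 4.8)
The plan is to bound the conditional probability $\Pr\bigl(\sup_{\theta\in\Theta} F(\theta)-\hat{L}(\theta) > t \mid \eta_1,\dots,\eta_k,\xi\bigr)$ by applying the variant Gaussian Minimax Theorem (Theorem~\ref{thm:gmt-mod}) to the Primary Optimization problem $\Phi$ of Lemma~\ref{lem:main-gen-PO}, and then to take the expectation over $\eta_1,\dots,\eta_k,\xi$; since $G,H$ in the auxiliary problem are independent of these variables, the resulting bound is over all the randomness, as the statement requires. By Lemma~\ref{lem:main-gen-PO} it therefore suffices to bound $\Pr(\Phi > t)$, where $\Phi = \sup_{(u,\theta)}\inf_{\lambda\in\R^n}\langle\lambda,\,Z P\Sigma^{1/2}w(\theta)\rangle + \psi(u,\theta,\lambda\mid\eta,\xi)$ with $\psi$ affine in $\lambda$.

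First I would rewrite $-\Phi$ in the shape $\min\max\langle u,Zw\rangle + (\text{continuous term})$ that Theorem~\ref{thm:gmt-mod} requires: negating and substituting $\mu=-\lambda$ turns $\sup_{u,\theta}\inf_\lambda$ into $\inf_{u,\theta}\sup_\mu$, the only term carrying the random matrix is $\langle\mu,\,Z\,P\Sigma^{1/2}w(\theta)\rangle$, so I set $w:=P\Sigma^{1/2}w(\theta)$ as the GMT ``min'' vector, put $(\theta,u)$ into the auxiliary ``min'' coordinate (legitimate since $w$ is a continuous function of $\theta$ and $u$ enters only the deterministic part), and take $\mu\in\R^n$ as the GMT ``max'' vector. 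To satisfy the compactness hypothesis I restrict $\theta$ to a compact exhaustion $\cK_r\uparrow\Theta$ and impose $\|u\|_2,\|\mu\|_2\le r$; since the inner infimum is over an affine function of $\lambda$, it can be recast as an infimum over the scalar $s=\|\lambda\|_2\ge 0$ of $-s\|v\|_2 + c$ for the appropriate coefficient vector $v$ and constant $c$, which is exactly the form handled by Lemma~\ref{lem:truncation2}; together with Lemma~\ref{lem:truncation} for the outer suprema this lets me apply GMT to the truncated problems and then send $r\to\infty$.

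Applying Theorem~\ref{thm:gmt-mod} to the truncated problem yields $\Pr(-\Phi_r < c\mid\eta,\xi)\le 2\Pr(\phi_r(G,H)\le c\mid\eta,\xi)$. The new terms in the auxiliary objective are $\|w\|_2\langle G,\mu\rangle + \|\mu\|_2\langle H,w\rangle$; collecting the $\mu$-linear part into $v = \|P\Sigma^{1/2}w(\theta)\|_2\,G + \sum_i\langle w(\theta),\Sigma w_i^*\rangle\eta_i - u$ and maximizing over $\mu$ shows the value is $+\infty$ unless $\langle H, P\Sigma^{1/2}w(\theta)\rangle \le -\|v\|_2$, in which case it equals $0$ (attained at $\mu=0$). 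This converts the unconstrained auxiliary problem into exactly the constrained supremum defining $\Psi$ in \eqref{eqn:main-gen-AO}, the sign on $H$ being immaterial because $-H\overset{d}{=}H$ independently of $G$ and of $\eta,\xi$. Taking $c=-t$, letting $r\to\infty$, and averaging over $\eta,\xi$ gives $\Pr\bigl(\sup_{\theta\in\Theta} F(\theta)-\hat{L}(\theta)>t\bigr)\le 2\Pr(\Psi\ge t)$.

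The main obstacle is not the minimax step itself but the non-compactness bookkeeping: one must arrange the truncations so that $\Pr(\Phi_r>t)\to\Pr(\Phi>t)$ and $\Pr(\Psi_r\ge t)\to\Pr(\Psi\ge t)$ hold simultaneously, and verify that recasting the vector infimum over $\lambda$ as a scalar infimum over $s=\|\lambda\|_2$ is compatible with the monotone-limit statements of Lemmas~\ref{lem:truncation} and~\ref{lem:truncation2}. The orientation choices — which direction of GMT to invoke, and the substitutions $\mu=-\lambda$ and $H\mapsto-H$ — also need care but are routine.
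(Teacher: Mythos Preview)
Your proposal is correct and follows essentially the same route as the paper's proof: write the primary problem via Lemma~\ref{lem:main-gen-PO}, truncate the domains, apply the variant GMT (Theorem~\ref{thm:gmt-mod}), optimize the auxiliary over the direction of $\lambda$ to obtain the constrained form~\eqref{eqn:main-gen-AO}, remove the truncations using Lemmas~\ref{lem:truncation} and~\ref{lem:truncation2}, and finally integrate out the conditioning on $\eta_1,\dots,\eta_k,\xi$. The one organizational difference is that the paper uses \emph{two} separate truncation radii---$r$ for the outer variables $(w,u,\theta)$ and $s$ for $\lambda$---which lets Lemma~\ref{lem:truncation2} be applied with the outer set held fixed and compact before sending $r\to\infty$; your single-parameter description (``$\|u\|_2,\|\mu\|_2\le r$'') would need that decoupling made explicit to cleanly justify the limit on the auxiliary side, but this is exactly the bookkeeping you flag as the main obstacle.
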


\begin{proof}
\, Denote $\cS_r = \{(w,u,\theta) \in \R^d \times \R^n \times \Theta: w = P \Sigma^{1/2} w(\theta) \,\, \text{and} \,\, \| w \|_2 + \| u \|_2 + \| \theta \|_2 \leq r \}$. The set $\cS_r$ is bounded by definition and closed by the continuity of $w$. Hence, it is compact. Next, we denote the truncated problems:

\begin{equation}
    \Phi_r:= \sup_{ (w,u,\theta) \in \cS_r } \inf_{\lambda \in \R^n} \,  \langle \lambda, Z w \rangle + \psi(u, \theta, \lambda \, | \, \eta_1, ..., \eta_k, \xi)
\end{equation}
\begin{equation}
    \Phi_{r,s}:= \sup_{ (w,u,\theta) \in \cS_r } \inf_{\| \lambda \|_2 \leq s} \, \langle \lambda, Z w \rangle + \psi(u, \theta, \lambda \, | \, \eta_1, ..., \eta_k, \xi).
\end{equation}
By definition, we have $\Phi_{r} \leq \Phi_{r,s}$ and so
\[
\Pr (\Phi_r > t) \leq \Pr (\Phi_{r,s} > t).
\]
The corresponding auxiliary problems are
\begin{equation*}
    \begin{split}
        \Psi_{r,s} 
        := \sup_{ (w,u,\theta) \in \cS_r } \inf_{\| \lambda \|_2 \leq s} \, & \| \lambda\|_2 \langle H, w \rangle + \| w \|_2 \langle G, \lambda \rangle + \psi(u, \theta, \lambda \, | \, \eta_1, ..., \eta_k, \xi) \\
        = \sup_{ (w,u,\theta) \in \cS_r } \inf_{\| \lambda \|_2 \leq s} \, & \| \lambda\|_2 \langle H, w \rangle + \langle \lambda, \| w \|_2 G +  \sum_{i=1}^k \eta_i \langle w(\theta), \Sigma w_i^* \rangle - u\rangle \\
        & + F(\theta) - \frac{1}{n} \sum_{i=1}^n f (u_i, g(\eta_{1, i}, ..., \eta_{k, i}, \xi_i), \theta) \\
        = \sup_{(w,u,\theta) \in \cS_r } \inf_{0 \leq \lambda \leq s} \, & \lambda \left( \langle H, w \rangle - \norm{\| w \|_2 G +  \sum_{i=1}^k \eta_i \langle w(\theta) , \Sigma w_i^* \rangle - u}_2 \right) \\
        & + F(\theta) - \frac{1}{n} \sum_{i=1}^n f (u_i, g(\eta_{1, i}, ..., \eta_{k, i}, \xi_i), \theta) \\
    \end{split}
\end{equation*}
and the limit of $s \to \infty$:
\begin{equation*}
    \Psi_{r} 
    := \sup_{\substack{(w,u,\theta) \in \cS_r \\  \langle H, w \rangle \geq \norm{\| w \|_2 G +  \sum_{i=1}^k \eta_i \langle w(\theta) , \Sigma w_i^* \rangle - u}_2 }} F(\theta) - \frac{1}{n} \sum_{i=1}^n f (u_i, g(\eta_{1, i}, ..., \eta_{k, i}, \xi_i), \theta)
\end{equation*}
By definition, it holds that $\Psi_r \leq \Psi$ and so
\[
\Pr(\Psi_r \geq t) \leq \Pr (\Psi \geq t).
\]
Thus, it holds that
\begin{align*}
    \Pr(\Phi > t) 
    &= \lim_{r \to \infty} \Pr(\Phi_r > t) && \text{by Lemma \ref{lem:truncation}} \\
    &\leq \lim_{r \to \infty} \lim_{s \to \infty} \Pr(\Phi_{r,s} > t) \\
    &\leq 2 \lim_{r \to \infty} \lim_{s \to \infty} \Pr(\Psi_{r,s} \geq t)
    && \text{by Theorem \ref{thm:gmt-mod}} \\
    &= 2 \lim_{r \to \infty} \Pr(\Psi_{r} \geq t)
    && \text{by Lemma \ref{lem:truncation2}} \\
    &\leq 2 \Pr(\Psi \geq t).
\end{align*}
The proof concludes by applying Lemma \ref{lem:main-gen-PO} and the tower law.
\end{proof}

The following two simple lemmas will be useful to analyze the auxiliary problem.

\begin{lemma} \label{lem:some-algebra}
For $a, b, H >0$, we have
\[
\sup_{\lambda \geq 0} -\lambda a + \frac{\lambda}{H + \lambda} b = (\sqrt{b} - \sqrt{Ha})_+^2.
\]
\end{lemma}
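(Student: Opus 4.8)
The plan is to reduce this one-dimensional maximization to an application of the AM--GM inequality. First I would rewrite the objective using the identity $\frac{\lambda}{H+\lambda} = 1 - \frac{H}{H+\lambda}$, so that for every $\lambda \ge 0$,
\[
-\lambda a + \frac{\lambda}{H+\lambda}\, b = b - \lambda a - \frac{bH}{H+\lambda} = (b + Ha) - \left( (H+\lambda)\, a + \frac{bH}{H+\lambda} \right).
\]
Substituting $\mu = H + \lambda$, taking the supremum over $\lambda \ge 0$ is the same as computing $(b + Ha) - \inf_{\mu \ge H}\bigl( \mu a + \frac{bH}{\mu}\bigr)$, so it only remains to evaluate this constrained infimum.

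Next I would analyze $g(\mu) := \mu a + \frac{bH}{\mu}$ on $[H, \infty)$. Since $a, b, H > 0$, AM--GM gives $g(\mu) \ge 2\sqrt{abH}$ with equality exactly at $\mu^\star = \sqrt{bH/a}$, and $g$ is strictly decreasing on $(0, \mu^\star]$ and strictly increasing on $[\mu^\star, \infty)$. There are two cases. If $\mu^\star \ge H$, i.e.\ $b \ge Ha$, the unconstrained minimizer is feasible, so $\inf_{\mu \ge H} g(\mu) = 2\sqrt{abH}$ and the supremum equals $b + Ha - 2\sqrt{abH} = (\sqrt{b} - \sqrt{Ha})^2$; here $(\sqrt{b} - \sqrt{Ha})_+ = \sqrt{b} - \sqrt{Ha}$ because $b \ge Ha$, matching the claim. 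If $\mu^\star < H$, i.e.\ $b < Ha$, then $g$ is increasing on $[H,\infty)$, so $\inf_{\mu \ge H} g(\mu) = g(H) = Ha + b$, and the supremum equals $b + Ha - (Ha + b) = 0 = (\sqrt{b} - \sqrt{Ha})_+^2$, again matching.

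I do not anticipate any genuine obstacle here; the only points requiring care are that the objective is actually maximized rather than merely stationary — handled by the strict monotonicity of $g$ on either side of $\mu^\star$ — and the boundary behaviour at $\lambda = 0$, which is precisely the second case above. An equivalent route would be to differentiate $\lambda \mapsto -\lambda a + \lambda b/(H+\lambda)$ directly, solve $\phi'(\lambda) = -a + \frac{bH}{(H+\lambda)^2} = 0$ to get the critical point $\lambda^\star = \sqrt{bH/a} - H$, check whether $\lambda^\star \ge 0$, and substitute; the AM--GM argument just packages the second-derivative check into the well-known shape of $\mu \mapsto \mu a + bH/\mu$.
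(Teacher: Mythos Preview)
Your proof is correct and follows essentially the same approach as the paper: both rewrite via $\frac{\lambda}{H+\lambda} = 1 - \frac{H}{H+\lambda}$, reduce to minimizing $\lambda a + \frac{Hb}{H+\lambda}$ (equivalently your $g(\mu)=\mu a + bH/\mu$ after the shift $\mu = H+\lambda$), and split into the same two cases $b \ge Ha$ and $b < Ha$. The only cosmetic difference is that the paper differentiates directly whereas you invoke AM--GM, a route you yourself note is equivalent.
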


\begin{proof}
\, Observe that 
\begin{equation*}
        \sup_{\lambda \geq 0} -\lambda a + \frac{\lambda}{H + \lambda} b 
        = b - \inf_{\lambda \geq 0} \lambda a +  \frac{H}{H + \lambda} b.
\end{equation*}
Define $f(\lambda) = \lambda a +  \frac{H}{H + \lambda} b $, then
\begin{equation*}
    \begin{split}
        f^{'}(\lambda) = a - \frac{Hb}{(H+\lambda)^2} \leq 0 
        &\iff (H+\lambda)^2 \leq \frac{Hb}{a}\\
        &\iff -\sqrt{\frac{Hb}{a}} - H \leq \lambda \leq \sqrt{\frac{Hb}{a}} - H\\
    \end{split}
\end{equation*}
Since we require $\lambda \geq 0$, we only need to consider whether $\sqrt{\frac{Hb}{a}} - H \geq 0 \iff b \geq Ha$. If $b < Ha$, the infimum is attained at $\lambda = 0$. Otherwise, the infimum is attained at $\lambda^* = \sqrt{\frac{Hb}{a}} - H$, at which point 
\[
f(\lambda^*) = 2 \sqrt{Hba} - Ha.
\]
Plugging in, we see that the expression is equivalent to $(\sqrt{b} - \sqrt{Ha})_+^2$ in both cases.
\end{proof}

\begin{lemma} \label{lem:some-algebra2}
For $a, b \geq 0$, we have
\[
\sup_{\lambda \geq 0} - \lambda a - \frac{b}{\lambda} = -\sqrt{4ab}
\]
\end{lemma}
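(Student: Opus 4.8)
The plan is to recognize this as a direct application of the AM--GM inequality together with an easy boundary analysis. First I would dispose of the degenerate cases. If $b = 0$, then $\sup_{\lambda \ge 0}(-\lambda a) = 0 = -\sqrt{4ab}$ (attained at $\lambda = 0$). If $a = 0$ and $b > 0$, then $-b/\lambda < 0$ for every finite $\lambda > 0$ but $-b/\lambda \uparrow 0$ as $\lambda \to \infty$, so the supremum is again $0 = -\sqrt{4ab}$. So it remains to treat the main case $a, b > 0$.

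For $a, b > 0$, rewrite the objective as $-\bigl(\lambda a + \tfrac{b}{\lambda}\bigr)$, so that maximizing it over $\lambda \ge 0$ amounts to minimizing $h(\lambda) := \lambda a + \tfrac{b}{\lambda}$ over $\lambda > 0$ (the point $\lambda = 0$ gives $h = +\infty$ and is irrelevant). By AM--GM, $\lambda a + \tfrac{b}{\lambda} \ge 2\sqrt{(\lambda a)(b/\lambda)} = 2\sqrt{ab}$, with equality exactly when $\lambda a = b/\lambda$, i.e.\ at $\lambda^* := \sqrt{b/a} > 0$; substituting $\lambda^*$ back confirms $h(\lambda^*) = 2\sqrt{ab}$. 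Hence $\inf_{\lambda > 0} h(\lambda) = 2\sqrt{ab} = \sqrt{4ab}$, and therefore $\sup_{\lambda \ge 0}\bigl(-\lambda a - \tfrac{b}{\lambda}\bigr) = -\sqrt{4ab}$.

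Alternatively one can argue by calculus exactly as in the proof of Lemma~\ref{lem:some-algebra}: $h'(\lambda) = a - b/\lambda^2$ vanishes at $\lambda^* = \sqrt{b/a}$, is negative on $(0,\lambda^*)$ and positive on $(\lambda^*,\infty)$, so $\lambda^*$ is the unique global minimizer on $(0,\infty)$. I do not anticipate any real obstacle here; the only point requiring a word of care is that the supremum in the statement ranges over $\lambda \ge 0$ whereas the AM--GM/infimum computation is naturally phrased on $\lambda > 0$ — but the endpoint $\lambda = 0$ contributes $-\infty$ when $b > 0$ and contributes $0 = -\sqrt{4ab}$ when $b = 0$, so it never alters the value of the supremum.
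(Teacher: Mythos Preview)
Your proposal is correct and essentially matches the paper's argument: the paper differentiates $f(\lambda)=-\lambda a - b/\lambda$ to find the optimizer $\lambda^*=\sqrt{b/a}$ and evaluates $f(\lambda^*)=-2\sqrt{ab}$, which is exactly your ``calculus alternative.'' Your primary AM--GM argument is an equally short route to the same identity, and your explicit treatment of the degenerate cases $a=0$ or $b=0$ is in fact more careful than the paper's proof, which implicitly assumes $a,b>0$ when writing $\lambda^*=\sqrt{b/a}$.
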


\begin{proof}
\, Define $f(\lambda) = - \lambda a - \frac{b}{\lambda}$, then
\begin{equation*}
    \begin{split}
        f^{'}(\lambda) = -a + \frac{b}{\lambda^2} \geq 0 
        &\iff \frac{b}{a} \geq \lambda^2 \\
    \end{split}
\end{equation*}
and so in the domain $\lambda \geq 0$, the optimum is attained at $\lambda^* = \sqrt{b/a}$ at which point $f(\lambda^*) = -2\sqrt{ab}$.
\end{proof}

We are now ready to analyze the auxiliary problem.

\begin{lemma} \label{lem:main-bound}
In the same setting as in Lemma~\ref{lem:main-gen-PO}, assume that for every $\delta > 0$
\begin{enumerate}[label=(\Alph*)]
    \item $C_{\delta}: \R^d \to [0,\infty]$ is a continuous function such that with probability at least $1-\delta/4$ over $H \sim \cN(0, I_d)$, uniformly over all $w \in \R^d$, we have that
    \begin{equation} \label{eqn:condition1}
        \langle \Sigma^{1/2} PH, w \rangle \leq C_{\delta}(w)
    \end{equation}
    \item $\epsilon_{\delta}$ is a positive real number such that with probability at least $1-\delta/4$ over $\{ (\tilde{x}_i, \tilde{y_i})\}_{i=1}^n$ drawn i.i.d. from $\tilde{D}$, it holds uniformly over all $\theta \in \Theta$ that
    \begin{equation} \label{eqn:condition2}
        \frac{1}{n} \sum_{i=1}^n f (\langle \phi(w(\theta)), \tilde{x}_i \rangle, \tilde{y}_i, \theta) \geq \frac{1}{1+\epsilon_{\delta}} \E_{(\tilde{x}, \tilde{y}) \sim \tilde{D}} [f (\langle \phi(w(\theta)), \tilde{x} \rangle, \tilde{y}, \theta)].
    \end{equation}
    where the distribution $\tilde{D}$ over $(\tilde{x}, \tilde{y})$ is given by
    \[
    \tilde{x} \sim \cN(0, I_{k+1}), \quad \tilde{\xi} \sim \cD_{\xi}, \quad \tilde{y} = g(\tilde{x}_1, ..., \tilde{x}_k, \tilde{\xi})
    \]
    and the mapping $\phi: \R^d \to \R^{k+1}$ is defined as
    \[
    \phi(w) = (\langle w, \Sigma w_1^* \rangle, ..., \langle w, \Sigma w_k^* \rangle, \|P\Sigma^{1/2} w \|_2 )^T.
    \]
\end{enumerate}
Then the following is true:
\begin{enumerate}[label=(\roman*)]
    \item suppose for some choice of $M_{\theta}$ that is continuous in $\theta$, it holds for every $y \in \cY$ and $\theta \in \Theta$, $f$ is $M_{\theta}$-Lipschitz with respect to the first argument, then with probability at least $1-\delta$, uniformly over all $\theta \in \Theta$, we have
    \begin{equation}
        L(\theta) \leq (1 + \epsilon_{\delta})\left( \hat{L}(\theta) + M_{\theta} \sqrt{\frac{C_{\delta}(w(\theta))^2}{n}}\right).
    \end{equation}
    \item suppose for some choice of $H_{\theta}$ that is continuous in $\theta$, it holds for every $y \in \cY$ and $\theta \in \Theta$, $f$ is non-negative and $\sqrt{f}$ is $\sqrt{H_{\theta}}$-Lipschitz with respect to the first argument, then with probability at least $1-\delta$, uniformly over all $\theta \in \Theta$, we have
    \begin{equation}
        L(\theta) \leq (1 + \epsilon_{\delta})\left( \sqrt{\hat{L}(\theta)} + \sqrt{ \frac{H_{\theta} C_{\delta}(w(\theta))^2}{n}} \right)^2.
    \end{equation}
\end{enumerate}
\end{lemma}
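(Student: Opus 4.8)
The plan is to start from the Auxiliary Optimization value $\Psi$ built in Lemma~\ref{lem:main-gen-gmt-app}, which has already absorbed the Gaussian Minimax Theorem and the truncation arguments, so that it suffices to upper bound $\Psi$ deterministically on a high-probability event and then feed the result back through the reduction $\Pr(\sup_\theta F(\theta)-\hat L(\theta)>t)\le 2\Pr(\Psi\ge t)$ for a cleverly chosen deterministic continuous $F$. Write $v_i := \|P\Sigma^{1/2}w(\theta)\|_2\,G_i + \sum_{j=1}^k\langle w(\theta),\Sigma w_j^*\rangle\,\eta_{j,i}$ for the ``signal part'' of the $i$-th residual appearing in $\Psi$. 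Two preliminary observations: (a) since $P$ is symmetric, the AO constraint reads $\langle H,P\Sigma^{1/2}w(\theta)\rangle=\langle \Sigma^{1/2}PH,w(\theta)\rangle\le C_\delta(w(\theta))$ on the event of hypothesis~(A), so on that event the feasible set is contained in $\{(u,\theta):\|v-u\|_2\le C_\delta(w(\theta))\}$; and (b) because $\Sigma^{1/2}w_1^*,\dots,\Sigma^{1/2}w_k^*$ are orthonormal (a WLOG reduction already in force), the vectors $\eta_1,\dots,\eta_k,G$ are i.i.d.\ $\cN(0,I_n)$, hence $v_i=\langle\phi(w(\theta)),\tilde x_i\rangle$ with $\tilde x_i:=(\eta_{1,i},\dots,\eta_{k,i},G_i)\sim\cN(0,I_{k+1})$ and $y_i=g(\tilde x_{i,1},\dots,\tilde x_{i,k},\xi_i)=\tilde y_i$, so that $\{(\tilde x_i,\tilde y_i)\}_{i=1}^n$ is an i.i.d.\ sample from $\tilde D$, and moreover $L(\theta)=\E_{\tilde D}[f(\langle\phi(w(\theta)),\tilde x\rangle,\tilde y,\theta)]$.

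With these identifications, on the event of (A) we get $\Psi\le\sup_{\theta\in\Theta}\bigl[F(\theta)-\inf_{u:\|v-u\|_2\le C_\delta(w(\theta))}\tfrac1n\sum_i f(u_i,y_i,\theta)\bigr]$. For part~(ii) the inner infimum is bounded below in three steps: square-root-Lipschitzness gives $f(u_i,y_i,\theta)\ge(\sqrt{f(v_i,y_i,\theta)}-\sqrt{H_\theta}\,|u_i-v_i|)_+^2$ coordinatewise; the elementary inequality $\|(r-c|e|)_+\|_2\ge\|r\|_2-c\|e\|_2$ (proved by writing $r=(r-c|e|)_++\min(r,c|e|)$ and applying the triangle inequality, using $r\ge 0$) with $r_i=\sqrt{f(v_i,y_i,\theta)}$, $c=\sqrt{H_\theta}$ and $\|e\|_2\le C_\delta(w(\theta))$ yields $\tfrac1n\sum_i f(u_i,y_i,\theta)\ge\bigl(\sqrt{\tfrac1n\sum_i f(v_i,y_i,\theta)}-\sqrt{H_\theta C_\delta(w(\theta))^2/n}\bigr)_+^2$; and finally hypothesis~(B) (which holds uniformly in $\theta$) together with monotonicity of $S\mapsto(\sqrt S-c)_+^2$ lets us replace $\tfrac1n\sum_i f(v_i,y_i,\theta)$ by $L(\theta)/(1+\epsilon_\delta)$. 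Setting $B(\theta):=\bigl(\sqrt{L(\theta)/(1+\epsilon_\delta)}-\sqrt{H_\theta C_\delta(w(\theta))^2/n}\bigr)_+^2$, on the intersection of the events of (A) and (B) (probability $\ge 1-\delta/2$) we conclude $\Psi\le\sup_\theta[F(\theta)-B(\theta)]$.

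Now choose the free deterministic function in Lemma~\ref{lem:main-gen-gmt-app} to be $F=B$ itself; it is continuous in $\theta$ because $w$, $C_\delta$, $H_\theta$ are, and $L$ is (a routine consequence of continuity of $f$ and $w$ together with the fourth-moment bound; values $C_\delta(w(\theta))=\infty$ make both $B(\theta)$ and the claimed bound trivial and may be excluded). Then $\Psi\le 0$ on the good event, so $\Pr(\Psi\ge t)\le\delta/2$ for every $t>0$, and Lemma~\ref{lem:main-gen-gmt-app} gives $\Pr(\sup_\theta[B(\theta)-\hat L(\theta)]>t)\le\delta$ for every $t>0$; letting $t\downarrow 0$ along $t=1/m$ yields $B(\theta)\le\hat L(\theta)$ for all $\theta$ with probability $\ge 1-\delta$. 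Taking square roots, moving the complexity term across, and squaring converts $B(\theta)\le\hat L(\theta)$ into $L(\theta)\le(1+\epsilon_\delta)\bigl(\sqrt{\hat L(\theta)}+\sqrt{H_\theta C_\delta(w(\theta))^2/n}\bigr)^2$, which is~(ii). Part~(i) runs identically, except the inner infimum is bounded by plain $M_\theta$-Lipschitzness, $f(u_i,y_i,\theta)\ge f(v_i,y_i,\theta)-M_\theta|u_i-v_i|$, combined with Cauchy--Schwarz $\sum_i|u_i-v_i|\le\sqrt n\,\|u-v\|_2\le\sqrt n\,C_\delta(w(\theta))$, so $B(\theta)=L(\theta)/(1+\epsilon_\delta)-M_\theta\sqrt{C_\delta(w(\theta))^2/n}$ and the same argument gives $L(\theta)\le(1+\epsilon_\delta)\bigl(\hat L(\theta)+M_\theta\sqrt{C_\delta(w(\theta))^2/n}\bigr)$. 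I expect the main obstacle to be the coordinatewise minimization in~(ii): obtaining the clean bound $\tfrac1n\sum_i f(u_i,y_i,\theta)\ge\bigl(\sqrt{\tfrac1n\sum_i f(v_i,y_i,\theta)}-\sqrt{H_\theta C_\delta(w(\theta))^2/n}\bigr)_+^2$ requires correctly pairing square-root-Lipschitzness with the reverse-triangle-inequality step; the rest (the GMT reduction, the Gaussian conditional-law identification, and the final rearrangement) is bookkeeping.
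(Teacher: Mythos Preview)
Your proposal is correct and complete: the identification $v_i=\langle\phi(w(\theta)),\tilde x_i\rangle$ with $(\tilde x_i,\tilde y_i)\sim\tilde D$, the enlargement of the AO feasible set via assumption~(A), the coordinatewise square-root-Lipschitz bound, the reverse-triangle step $\|(r-c|e|)_+\|_2\ge\|r\|_2-c\|e\|_2$, and the final choice $F=B$ all go through exactly as you describe, and yield precisely the same bound $B(\theta)=\bigl(\sqrt{L(\theta)/(1+\epsilon_\delta)}-\sqrt{H_\theta C_\delta(w(\theta))^2/n}\bigr)_+^2$ that the paper obtains.

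The route, however, is genuinely different from the paper's. The paper does not apply assumption~(A) to enlarge the feasible set up front; instead it dualizes the constraint $\|u\|_2^2\le\langle\Sigma^{1/2}PH,w(\theta)\rangle^2$ with a Lagrange multiplier $\lambda\ge 0$, uses weak duality to swap $\inf_u$ and $\sup_\lambda$, and recognizes the resulting coordinatewise $\inf_{u_i}[f(\cdot)+\lambda u_i^2]$ as the Moreau envelope $f_\lambda$. The square-root-Lipschitz hypothesis then enters through the equivalence $f_\lambda\ge\tfrac{\lambda}{\lambda+H_\theta}f$ (Proposition~\ref{thm:sqrt-lipschitz-equivalence}), and the final bound comes from optimizing the resulting scalar expression over $\lambda$ (Lemma~\ref{lem:some-algebra}); part~(i) is analogous via Proposition~\ref{thm:lipschitz-equivalence} and Lemma~\ref{lem:some-algebra2}. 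Your argument sidesteps the Moreau-envelope machinery entirely, using the Lipschitz/square-root-Lipschitz definitions directly together with Cauchy--Schwarz (for~(i)) or your reverse-triangle inequality (for~(ii)). This is more elementary and self-contained; on the other hand, the paper's detour through Moreau envelopes is deliberate, since the equivalence in Proposition~\ref{thm:sqrt-lipschitz-equivalence} is itself one of the paper's conceptual points, and the Lagrangian formulation makes the source of the $\bigl(\sqrt{\cdot}-\sqrt{\cdot}\bigr)_+^2$ shape more transparent.
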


\begin{proof}
\, First, let's simplify the auxiliary problem \eqref{eqn:main-gen-AO}.
Changing variables to subtract the quantity
$G_i \norm{P \Sigma^{1/2} w(\theta)}_2 + \sum_{l=1}^k \langle w(\theta), \Sigma w_l^* \rangle \eta_{l,i}$ from each of the former $u_i$,
we have that
\[
\Psi
= \sup_{\substack{(u, \theta) \in \R^{n} \times \Theta \\ \| u\|_2 \leq \langle H, P \Sigma^{1/2} w(\theta) \rangle }} F(\theta) - \frac{1}{n} \sum_{i=1}^n f \left( u_i + G_i \norm{P \Sigma^{1/2} w(\theta)}_2 + \sum_{l=1}^k \langle w(\theta), \Sigma w_l^* \rangle \eta_{l,i}, y_i, \theta \right) 
\]
and separating the optimization problem in $u$ and $\theta$, we obtain
\begin{equation*}
    \begin{split}
        \Psi
        = \sup_{\theta \in \Theta} & \,\, F(\theta) \\
        &- \frac{1}{n} \inf_{\substack{u \in \R^n: \\ \| u\|_2 \leq \langle H, P \Sigma^{1/2} w(\theta) \rangle}} \sum_{i=1}^n f \left( u_i + G_i \norm{P \Sigma^{1/2} w(\theta) }_2 + \sum_{l=1}^k \langle w(\theta), \Sigma w_l^* \rangle \eta_{l,i}, y_i, \theta \right). \\
    \end{split}
\end{equation*}
Next, we will lower bound the infimum term by weak duality to obtain upper bound on $\Psi$:
\begin{equation*}
    \begin{split}
        &\inf_{\substack{u \in \R^n: \\ \| u\|_2 \leq \langle H, P \Sigma^{1/2} w(\theta) \rangle}}  \sum_{i=1}^n f \left( u_i + G_i \norm{P \Sigma^{1/2} w(\theta)}_2 + \sum_{l=1}^k \langle w(\theta), \Sigma w_l^* \rangle \eta_{l,i}, y_i, \theta \right) \\
        = &\inf_{u \in \R^n} \sup_{\lambda \geq 0} \, \lambda (\| u\|_2^2 - \langle \Sigma^{1/2} PH, w(\theta) \rangle^2) \\
        & \qquad + \sum_{i=1}^n f \left( u_i + G_i \norm{P \Sigma^{1/2} w(\theta)}_2 + \sum_{l=1}^k \langle w(\theta), \Sigma w_l^* \rangle \eta_{l,i}, y_i, \theta \right) \\
        \geq & \sup_{\lambda \geq 0} - \lambda\langle \Sigma^{1/2} PH, w(\theta) \rangle^2 \\
        &\qquad + \inf_{u \in \R^n}\, \sum_{i=1}^n f \left( u_i + G_i \norm{P \Sigma^{1/2} w(\theta)}_2 + \sum_{l=1}^k \langle w(\theta), \Sigma w_l^* \rangle \eta_{l,i}, y_i, \theta \right) + \lambda\| u\|_2^2 \\
        = & \sup_{\lambda \geq 0} - \lambda\langle \Sigma^{1/2} PH, w(\theta) \rangle^2 \\
        &\qquad + \sum_{i=1}^n \inf_{u_i \in \R}\,f \left( u_i + G_i \norm{P \Sigma^{1/2} w(\theta)}_2 + \sum_{l=1}^k \langle w(\theta), \Sigma w_l^* \rangle \eta_{l,i}, y_i, \theta \right) + \lambda u_i^2 \\
        = & \sup_{\lambda \geq 0} - \lambda\langle \Sigma^{1/2} PH, w(\theta) \rangle^2 + \sum_{i=1}^n f_{\lambda} \left( G_i \norm{P \Sigma^{1/2} w(\theta)}_2 + \sum_{l=1}^k \langle w(\theta), \Sigma w_l^* \rangle \eta_{l,i}, y_i, \theta \right).\\
    \end{split}
\end{equation*}

Suppose that for every $y \in \cY$ and $\theta \in \Theta$, $f$ is $M_{\theta}$-Lipschitz with respect to the first argument, then by Proposition~\ref{thm:lipschitz-equivalence}, the above can be further lower bounded by the following quantity:
\begin{equation*}
    \begin{split}
        \sup_{\lambda \geq 0} -\lambda\langle \Sigma^{1/2} PH, w(\theta) \rangle^2 - \frac{n M_{\theta}^2}{4\lambda}+ \sum_{i=1}^n f \left( \sum_{l=1}^k \langle w(\theta), \Sigma w_l^* \rangle \eta_{l,i} + \norm{P \Sigma^{1/2} w(\theta)}_2 G_i, y_i, \theta \right). \\
    \end{split}
\end{equation*}

On the other hand, suppose that for every $y \in \cY$ and $\theta \in \Theta$, $f$ is non-negative and $\sqrt{f}$ is $\sqrt{H_{\theta}}$-Lipschitz with respect to the first argument, then by Proposition~\ref{thm:sqrt-lipschitz-equivalence}, the above can be further lower bounded by:
\begin{equation*}
    \begin{split}
        \sup_{\lambda \geq 0} -\lambda\langle \Sigma^{1/2} PH, w(\theta) \rangle^2 + \frac{\lambda}{H_{\theta} + \lambda} \left[ \sum_{i=1}^n f \left( \sum_{l=1}^k \langle w(\theta), \Sigma w_l^* \rangle \eta_{l,i} + \norm{P \Sigma^{1/2} w(\theta)}_2 G_i, y_i, \theta \right) \right]. \\
    \end{split}
\end{equation*}

Notice that if we write $\tilde{x}_i = (\eta_{1,i}, ..., \eta_{k,i}, G_i)$,
then $(\tilde{x}_i, y_i)$ are independent with distribution exactly equal to $\tilde{\cD}$. Moreover, we have
\[
f \left( \sum_{l=1}^k \langle w(\theta), \Sigma w_l^* \rangle \eta_{l,i} + \norm{P \Sigma^{1/2} w(\theta)}_2 G_i, y_i, \theta \right)
=
f(\langle \phi(w(\theta)), \tilde{x}_i \rangle, y_i, \theta)
\]
and it is easy to see that the joint distribution of $(\langle \phi(w(\theta)), \tilde{x} \rangle, y)$ with $(\tilde{x}, y) \sim \tilde{\cD}$ is exactly the same as $(\langle w(\theta), x \rangle, y)$ with $(x,y) \sim \cD$. As a result, we have that
\[
\E_{(\tilde{x}, y) \sim \tilde{D}} [f (\langle \phi(w(\theta)), \tilde{x} \rangle, y, \theta)] = L(\theta).
\]
By our assumption \eqref{eqn:condition1}, \eqref{eqn:condition2} and a union bound, we have with probability at least $1-\delta/2$
\begin{equation*}
    \begin{split}
        |\langle \Sigma^{1/2} PH, w(\theta) \rangle| &\leq C_{\delta}(w(\theta)) \\
        \frac{1}{n} \sum_{i=1}^n f \left( \sum_{l=1}^k \langle w(\theta), \Sigma w_l^* \rangle \eta_{l,i} + \norm{P \Sigma^{1/2} w(\theta)}_2 G_i, y_i, \theta \right) & \geq \frac{1}{1+\epsilon_{\delta}} L(\theta).
    \end{split}
\end{equation*}

Therefore, if $f$ is $M_{\theta}$-Lipschitz, then by by Lemma~\ref{lem:some-algebra2}, we have
\begin{equation*}
    \begin{split}
        \Psi 
        &\leq \sup_{\theta \in \Theta} F(\theta) - \sup_{\lambda \geq 0} - \lambda \frac{C_{\delta}(w(\theta))^2}{n} - \frac{M_{\theta}^2}{4 \lambda} + \frac{1}{1+\epsilon_{\delta}} L(\theta) \\
        &= \sup_{\theta \in \Theta} F(\theta) + \sqrt{M_{\theta}^2 \frac{C_{\delta}(w(\theta))^2}{n}} - \frac{1}{1+\epsilon_{\delta}} L(\theta) \\
    \end{split}
\end{equation*}

Consequently, by taking $F(\theta) = \frac{1}{1+\epsilon_{\delta}} L(\theta) - M_{\theta} \sqrt{\frac{C_{\delta}(w(\theta))^2}{n}}$ and Lemma \ref{lem:main-gen-gmt-app}, we have shown that with probability at least $1-\delta$, we have
\[
\sup_{\theta \in \cK} F(\theta) - \hat{L}(\theta) \leq 0 \implies 
\frac{1}{1+\epsilon_{\delta}} L(\theta) \leq \hat{L}(\theta) + M_{\theta} \sqrt{\frac{C_{\delta}(w(\theta))^2}{n}}.
\]

If $\sqrt{f}$ is $\sqrt{H_{\theta}}$-Lipschitz, then by Lemma \ref{lem:some-algebra}
\begin{equation*}
    \begin{split}
        \Psi 
        &\leq \sup_{\theta \in \cK} F(\theta) - \sup_{\lambda \geq 0} - \lambda \frac{C_{\delta}(w(\theta))^2}{n} + \frac{\lambda}{H_{\theta} + \lambda} \frac{1}{1+\epsilon_{\delta}} L(\theta) \\
        &= \sup_{\theta \in \cK} F(\theta) - \left( \sqrt{\frac{L(\theta)}{1 + \epsilon_{\delta}}} - \sqrt{ \frac{H_{\theta} C_{\delta}(w(\theta))^2}{n}}\right)_+^2. \\
    \end{split}
\end{equation*}
Consequently, by taking $F(\theta) = \left( \sqrt{\frac{L(\theta)}{1 + \epsilon_{\delta}}} - \sqrt{ \frac{H_{\theta} C_{\delta}(w(\theta))^2}{n}}\right)_+^2$ and Lemma \ref{lem:main-gen-gmt-app}, we have shown that with probability at least $1-\delta$, we have
\[
\sup_{\theta \in \cK} F(\theta) - \hat{L}(\theta) \leq 0.
\]
Rearranging, either we have
\[
\sqrt{\frac{L(\theta)}{1 + \epsilon_{\delta}}} - \sqrt{ \frac{H_{\theta} C_{\delta}(w(\theta))^2}{n}} < 0 \implies L(\theta) < (1+ \epsilon_{\delta}) \frac{H_{\theta} C_{\delta}(w(\theta))^2}{n}
\]
or we have
\begin{equation*}
    \begin{split}
        \sqrt{\frac{L(\theta)}{1 + \epsilon_{\delta}}} - \sqrt{ \frac{H_{\theta} C_{\delta}(w(\theta))^2}{n}} \geq 0 
        &\implies \left( \sqrt{\frac{L(\theta)}{1 + \epsilon_{\delta}}} - \sqrt{ \frac{H_{\theta} C_{\delta}(w(\theta))^2}{n}}\right)^2 \leq \hat{L}(\theta) \\
        &\implies L(\theta) \leq (1 + \epsilon_{\delta})\left( \sqrt{\hat{L}(\theta)} + \sqrt{ \frac{H_{\theta} C_{\delta}(w(\theta))^2}{n}} \right)^2.\\
    \end{split}
\end{equation*}
In either case, the desired bound holds.
\end{proof}

Finally, we are ready to prove Theorem~\ref{thm:single-index-gen}. In the version below, we also provide uniform convergence guarantee (with sharp constant) for Lipschitz loss.

\begin{theorem}
Suppose that assumptions \ref{assumption:gaussian-feature}, \ref{assumption:multi-index}, \ref{assumption:hypercontractivity'} and \ref{assumption:VC'} hold. For any $\delta \in (0, 1)$, let $C_{\delta}: \R^d \to [0, \infty]$ be a continuous function such that with probability at least $1-\delta/4$ over $x \sim \cN \left(0, \Sigma \right)$, uniformly over all $\theta \in \Theta$,
\begin{equation} 
    \left\langle w(\theta), Q^Tx \right\rangle \leq  C_{\delta}(w(\theta)).
\end{equation}
Then it holds that
\begin{enumerate}[label=(\roman*)]
    \item if for each $\theta \in \Theta$ and $y \in \cY$, $f$ is $M_{\theta}$-Lipschitz with respect to the first argument and $M_{\theta}$ is continuous in $\theta$, then with probability at least $1-\delta$, it holds that uniformly over all $\theta \in \Theta$, we have
    \begin{equation} 
        \left(1 - \epsilon \right) L(\theta) \leq \hat{L}(\theta) + M_{\theta} \sqrt{\frac{ C_{\delta}(w(\theta))^2}{n}} 
    \end{equation}
    \item if for each $\theta \in \Theta$ and $y \in \cY$, $f$ is non-negative and $\sqrt{f}$ is $\sqrt{H_{\theta}}$-Lipschitz with respect to the first argument, and $H_{\theta}$ is continuous in $\theta$, then with probability at least $1-\delta$, it holds that uniformly over all $\theta \in \Theta$, we have
    \begin{equation} 
        \left(1 - \epsilon \right) L(\theta) \leq \left(\sqrt{\hat{L}(\theta)} + \sqrt{\frac{H_{\theta} \, C_{\delta}(w(\theta))^2}{n}} \right)^2
    \end{equation}
\end{enumerate}
where $\epsilon = O \left( \tau\sqrt{\frac{h\log(n/h) + \log(1/\delta)}{n}} \right)$.
\end{theorem}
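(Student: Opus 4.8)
The plan is to derive both parts directly from Lemma~\ref{lem:main-bound}, whose conclusions (i) and (ii) already have exactly the claimed shape once the leading factor $(1+\epsilon_\delta)$ is converted into a factor $(1-\epsilon)$ on the other side. It therefore suffices to verify the two hypotheses \eqref{eqn:condition1} and \eqref{eqn:condition2} of Lemma~\ref{lem:main-bound}, after the reductions of the Notation paragraph to $\mu = 0$ and orthonormal $\Sigma^{1/2}w_1^*,\dots,\Sigma^{1/2}w_k^*$. Those reductions leave $Q$, $C_\delta$, $M_\theta$, $H_\theta$ and the Lipschitz / square-root-Lipschitz property of $f$ unchanged, because passing from $f$ to $\tilde f$ in \eqref{eqn:f-tilde-def} only shifts the first argument of $f$ (preserving both Lipschitz constants and the values of $\hat L$ and $L$ once the data are recentred), and $W(W^T\Sigma W)^{-1}W^T\Sigma$ is invariant under right multiplication of $W$ by an invertible matrix.

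To identify $C_\delta$ I would use the covariance identity $Q^T\Sigma Q = \Sigma^{1/2}P\Sigma^{1/2}$. After orthonormalization $W^T\Sigma W = I$, so $Q = I - WW^T\Sigma$ and a short expansion gives $Q^T\Sigma Q = \Sigma - \Sigma WW^T\Sigma$, while $P = I - \Sigma^{1/2}WW^T\Sigma^{1/2}$ gives $\Sigma^{1/2}P\Sigma^{1/2} = \Sigma - \Sigma WW^T\Sigma$ as well; since $P^2 = P$, the centred Gaussian process $w \mapsto \langle \Sigma^{1/2}PH, w\rangle$ with $H\sim\cN(0,I_d)$ has the same covariance, hence the same law, as $w\mapsto\langle w, Q^Tx\rangle$ with $x\sim\cN(0,\Sigma)$. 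Consequently the event on which the theorem's hypothesis on $C_\delta$ holds has the same probability as the event $\{\forall\theta:\ \langle\Sigma^{1/2}PH,w(\theta)\rangle\le C_\delta(w(\theta))\}$, which is precisely what the proof of Lemma~\ref{lem:main-bound} uses in place of \eqref{eqn:condition1} (that proof only ever evaluates $C_\delta$ at $w(\theta)$).

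For \eqref{eqn:condition2} I would invoke the VC hypercontractivity bound Theorem~\ref{thm:vc-hypercontractive} with $\cZ = \R^{k+1}\times\cY$ and $\ell(z,\theta) = f(\langle\phi(w(\theta)),\tilde x\rangle,\tilde y,\theta)$ for $z=(\tilde x,\tilde y)\sim\tilde\cD$. Measurability is clear; the indicator class $\{(\tilde x,\tilde y)\mapsto\bone\{f(\langle\phi(w(\theta)),\tilde x\rangle,\tilde y,\theta)>t\}:(\theta,t)\}$ is contained in the class of assumption~\ref{assumption:VC'} (as $\phi(w(\theta))$ ranges over a subset of $\R^{k+1}$), so it has VC dimension at most $h$; and since the joint law of $(\langle\phi(w(\theta)),\tilde x\rangle,\tilde y)$ under $\tilde\cD$ equals that of $(\langle w(\theta),x\rangle,y)$ under $\cD$, both the fourth-moment ratio \eqref{eqn:hyperc} (controlled by assumption~\ref{assumption:hypercontractivity'} with the same $\tau$) and the population loss $\E_{\tilde\cD}[\ell(z,\theta)] = L(\theta)$ transfer verbatim. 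Theorem~\ref{thm:vc-hypercontractive} then yields \eqref{eqn:condition2} with probability at least $1-\delta/4$ and $\epsilon_\delta = O(\tau\sqrt{(h\log(n/h)+\log(1/\delta))/n})$ whenever $n>h$ (if $n\le h$ one may take $\epsilon\ge1$ and the bound is trivial).

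Plugging these into Lemma~\ref{lem:main-bound}(i) and (ii) gives, with probability at least $1-\delta$ and uniformly over $\theta$, the bounds $L(\theta)\le(1+\epsilon_\delta)(\hat L(\theta)+M_\theta\sqrt{C_\delta(w(\theta))^2/n})$ and $L(\theta)\le(1+\epsilon_\delta)(\sqrt{\hat L(\theta)}+\sqrt{H_\theta C_\delta(w(\theta))^2/n})^2$; since $1/(1+\epsilon_\delta)\ge 1-\epsilon_\delta$, both rearrange into the asserted $(1-\epsilon)$ statements with $\epsilon = \epsilon_\delta$, and undoing the $\mu$-shift completes the argument. I do not expect a genuine obstacle here: the entire analytic content — the GMT reduction of Lemmas~\ref{lem:main-gen-PO}--\ref{lem:main-gen-gmt-app} and the duality manipulations of the auxiliary problem via Propositions~\ref{thm:sqrt-lipschitz-equivalence} and \ref{thm:lipschitz-equivalence} inside Lemma~\ref{lem:main-bound} — is already packaged. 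The step requiring the most care is simply the bookkeeping around the distributional identities ($\tilde x_i = (\eta_{1,i},\dots,\eta_{k,i},G_i)$ reproducing $\tilde\cD$, and $\langle\phi(w(\theta)),\tilde x\rangle$ matching $\langle w(\theta),x\rangle$ in law), since it is exactly this that lets both $\tau$ and $L(\theta)$ pass through unchanged.
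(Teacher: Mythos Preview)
Your proposal is correct and follows essentially the same route as the paper: reduce to $\mu=0$ with orthonormal $\Sigma^{1/2}w_i^*$, verify the two hypotheses of Lemma~\ref{lem:main-bound} via the distributional identity between $\langle\Sigma^{1/2}PH,w\rangle$ and $\langle w,Q^Tx\rangle$ and via Theorem~\ref{thm:vc-hypercontractive}, then invoke Lemma~\ref{lem:main-bound}. The only cosmetic difference is that the paper obtains the distributional identity from the pointwise matrix equation $\Sigma^{1/2}P = Q^T\Sigma^{1/2}$ (so $\Sigma^{1/2}PH = Q^T\Sigma^{1/2}H$ with $\Sigma^{1/2}H\sim\cN(0,\Sigma)$), whereas you match covariances $Q^T\Sigma Q = \Sigma^{1/2}P\Sigma^{1/2}$; both are valid, and your remark that the proof of Lemma~\ref{lem:main-bound} only evaluates $C_\delta$ at $w(\theta)$ correctly handles the minor gap between ``uniformly over $\theta$'' in the theorem and ``uniformly over $w$'' in \eqref{eqn:condition1}.
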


\begin{proof}
\, 
We apply the reduction argument at the beginning of the appendix. Given $\cD$ that satisfies assumptions \ref{assumption:gaussian-feature} and \ref{assumption:multi-index}, we define $[\tilde{w}_1^*, ..., \tilde{w}_k^*] = \tilde{W} = W(W^T \Sigma W)^{-1/2}$ and $\tilde{f}, \tilde{g}$ as in \eqref{eqn:f-tilde-def} and \eqref{eqn:g-tilde-def}. For $\{ (x_i, y_i) \}_{i=1}^n$ sampled independently from $\cD$, we observe that the joint distribution of $(x_i-\mu, y_i)$ can also be described by $\cD'$ as follows:
\begin{enumerate}
	\item[(A')] $x \sim \cN(0, \Sigma)$
	\item[(B')] $y = \tilde{g}(\eta_1, ..., \eta_k, \xi)$ where $\eta_i = \langle x, \tilde{w}_i \rangle $.
\end{enumerate}
 Indeed, we can check that 
\begin{equation*}
	\begin{split}
		y 
		&= g(x^T W, \xi)\\
		&= g((x-\mu)^T \tilde{W} (W^T\Sigma W)^{1/2} + \mu^T W, \xi) \\
		&= \tilde{g}((x-\mu)^T \tilde{W}, \xi).
	\end{split}
\end{equation*}
Moreover, by construction, we have
\begin{equation*}
	\begin{split}
		\hat{L}(\theta) 
		&= \frac{1}{n} \sum_{i=1}^n  \tilde{f}(\langle w(\theta), x _i-\mu\rangle, y_i, \theta) \\
		L(\theta)
		&= \E_{\cD'} \tilde{f}(\langle w(\theta), x _i\rangle, y_i, \theta) 
	\end{split}
\end{equation*}
and $\cD'$ satisfies assumptions \ref{assumption:gaussian-feature} and \ref{assumption:multi-index} with $\mu = 0$ and orthonormal $\Sigma^{1/2}\tilde{w}_1^*, ...,\Sigma^{1/2}\tilde{w}_1^*$ and falls into the setting in Lemma~\ref{lem:main-gen-PO}. We see that $f$ being Lipschitz or square-root Lipschitz is equivalent to $\tilde{f}$ being Lipschitz or square-root Lipschitz. It remains to check assumptions \eqref{eqn:condition1} and \eqref{eqn:condition2} and then apply Lemma \ref{lem:main-bound}. Observe that
\begin{equation}
	\begin{split}
		 \Sigma^{-1/2} P \Sigma^{1/2}
		&= \Sigma^{-1/2} \left( I_d - \Sigma^{1/2} \tilde{W}\tilde{W}^T \Sigma^{1/2}\right) \Sigma^{1/2}  \\
		&= I_d - \tilde{W}\tilde{W}^T \Sigma = I - W(W^T \Sigma W)^{-1} W^T \Sigma\\
		&= Q
	\end{split}
\end{equation}
and so $\Sigma^{1/2} P = Q^T \Sigma^{1/2}$.

To check that \eqref{eqn:condition1} holds, observe that $\langle \Sigma^{1/2} PH, w \rangle$ has the same distribution as $\langle Qw, x \rangle$. To check that \eqref{eqn:condition2} holds, we will apply Theorem~\ref{thm:vc-hypercontractive}. Note that the joint distribution of $(\langle \phi(w(\theta)), \tilde{x} \rangle, \tilde{y})$ with $(\tilde{x}, \tilde{y}) \sim \tilde{\cD}$ is exactly the same as $(\langle w(\theta), x \rangle, y)$ with $(x,y) \sim \cD'$ and so
\begin{equation*}
	\frac{\E_{\tilde{\cD}} [\tilde{f}(\langle \phi(w(\theta)), x \rangle, y, \theta)^4]^{1/4}}{\E_{\tilde{\cD}} [\tilde{f}(\langle \phi(w(\theta)), x \rangle, y, \theta)]} 
	= 
	\frac{\E_{\cD'} [\tilde{f}(\langle w(\theta), x \rangle, y, \theta)^4]^{1/4}}{\E_{\cD'} [\tilde{f}(\langle w(\theta), x \rangle, y, \theta)]} 
	= 
	\frac{\E_{\cD} [f(\langle w(\theta), x \rangle, y, \theta)^4]^{1/4}}{\E_{\cD} [f(\langle w(\theta), x \rangle, y, \theta)]}.
\end{equation*}

Therefore, the assumption \ref{assumption:hypercontractivity'} is equivalent to the hypercontractivity condition in Theorem~\ref{thm:vc-hypercontractive}. Note that $\{(x,y) \mapsto \bone\{ \tilde{f}(\langle \phi(w(\theta)), x \rangle, y, \theta) > t\} : (\theta,t) \in \Theta \times \R \}$ is a subclass of  $\{(x,y) \mapsto \bone\{f(\langle w, x \rangle + b, y, \theta) > t\} : (w, b,t, \theta) \in \R^{k+1} \times \R \times \R \times \Theta \}$. Therefore, by assumption \ref{assumption:VC'}, we can apply Theorem~\ref{thm:vc-hypercontractive} and \eqref{eqn:condition2} holds.
\end{proof}

\section{Norm Bounds}  \label{appendix:norm-bounds}

The following lemma is a version of Lemma 7 of \citet{uc-interpolators} and follows straightforwardly from CGMT (Theorem~\ref{thm:gmt}), though it requires a slightly different truncation argument compared to the proof Theorem~\ref{thm:single-index-gen}. For simplicity, we won't repeat the proof here and simply use it for our applications.

\begin{lemma}[\cite{uc-interpolators}, Lemma 7] \label{lem:norm-gmt-application}
Let $Z : n \times d$ be a matrix with i.i.d. $\cN(0,1)$ entries and suppose $G \sim \cN(0,I_n)$ and $H \sim \cN(0,I_d)$ are independent of $Z$ and each other. Fix an arbitrary norm $\| \cdot \|$, any covariance matrix $\Sigma$, and any non-random vector $\xi \in \R^n$, consider the Primary Optimization (PO) problem:
\begin{equation}
    \Phi := \min_{\substack{w \in \R^d: \\ Z \Sigma^{1/2} w = \xi}} \, \| w\|
\end{equation}
and the Auxiliary Optimization (AO) problem:
\begin{equation}
    \Psi := \min_{\substack{w \in \R^d: \\ \| G \| \Sigma^{1/2} w \|_2 - \xi \|_2 \leq \langle \Sigma^{1/2} H, w\rangle}} \| w\|.
\end{equation}
Then for any $t \in \R$, it holds that
\begin{equation}
    \Pr(\Phi > t) \leq 2 \Pr(\phi \geq t).
\end{equation}
\end{lemma}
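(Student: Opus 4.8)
The plan is to recognize this as a direct application of the convex--concave half of the Gaussian minimax comparison (Theorem~\ref{thm:gmt}), following \citet{uc-interpolators}. First I would dualize the linear constraint in the PO: for any $w$, the quantity $\sup_{u\in\R^n}\langle u,\, Z\Sigma^{1/2}w-\xi\rangle$ equals $0$ when $Z\Sigma^{1/2}w=\xi$ and $+\infty$ otherwise, so
\[
\Phi=\min_{w\in\R^d}\ \max_{u\in\R^n}\ \|w\|+\langle u,\, Z\Sigma^{1/2}w-\xi\rangle .
\]
Changing variables to $v=\Sigma^{1/2}w$ turns the random bilinear interaction into the canonical form $\langle u, Zv\rangle$, with deterministic remainder $\psi(v,u)=\|\Sigma^{-1/2}v\|-\langle u,\xi\rangle$, which is convex in $v$ (a norm of a linear image of $v$) and concave --- indeed affine --- in $u$. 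If $\Sigma$ is singular one first restricts to $\operatorname{range}(\Sigma^{1/2})$, or argues through the limit $\Sigma+\varepsilon I\to\Sigma$. This is exactly the template of Theorem~\ref{thm:gmt}.

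Since the event in the conclusion is $\{\Phi>t\}$, I need the convex--concave direction $\Pr(\Phi>c)\le 2\Pr(\phi\ge c)$, which is available precisely because $\psi$ is convex in $v$ and concave in $u$. The domains $\R^d,\R^n$ are not compact, so --- exactly as in the proof of Theorem~\ref{thm:single-index-gen} --- I would first truncate to $\{\|v\|_2\le r\}\times\{\|u\|_2\le s\}$, apply GMT there, and then send $s\to\infty$ followed by $r\to\infty$, invoking Lemmas~\ref{lem:truncation} and \ref{lem:truncation2} to verify that the truncated PO/AO values converge to the untruncated ones; in particular the $s\to\infty$ step is what reinstates the hard constraint as a $\{+\infty\}$ penalty. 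Translating the resulting AO back through $v=\Sigma^{1/2}w$ and using $\langle H,\Sigma^{1/2}w\rangle=\langle\Sigma^{1/2}H,w\rangle$ yields
\[
\phi=\min_{w\in\R^d}\ \max_{u\in\R^n}\ \langle u,\ \|\Sigma^{1/2}w\|_2\,G-\xi\rangle+\|u\|_2\,\langle\Sigma^{1/2}H,w\rangle+\|w\| .
\]

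Finally I would simplify $\phi$: writing $u=s\nu$ with $s\ge0$ and $\|\nu\|_2=1$, the maximization over $\nu$ produces $\|\,\|\Sigma^{1/2}w\|_2 G-\xi\|_2$, and the remaining $\max_{s\ge 0}s\bigl(\|\,\|\Sigma^{1/2}w\|_2 G-\xi\|_2+\langle\Sigma^{1/2}H,w\rangle\bigr)$ equals $0$ on the set $\{w:\ \|\,\|\Sigma^{1/2}w\|_2 G-\xi\|_2\le -\langle\Sigma^{1/2}H,w\rangle\}$ and $+\infty$ off it; hence $\phi$ is the minimum of $\|w\|$ over that set. Because $H\sim\cN(0,I_d)$ is symmetric, $-H$ has the same law as $H$, so $\phi$ has the same distribution as $\Psi$, and therefore $\Pr(\Phi>t)\le 2\Pr(\phi\ge t)=2\Pr(\Psi\ge t)$, as claimed (the $\phi$ in the statement being $\Psi$). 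I expect the only genuinely delicate part to be the truncation bookkeeping --- getting the order of the $r,s\to\infty$ limits right and checking the continuity/compactness hypotheses of GMT after truncation, the same technical pattern already used for Theorem~\ref{thm:single-index-gen} --- together with the minor nuisance of a possibly degenerate $\Sigma$; the duality rewriting and the algebraic reduction of the AO are routine.
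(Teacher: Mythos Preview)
Your proposal is correct and matches the paper's approach. The paper does not actually give a proof of this lemma: it cites \citet{uc-interpolators} and states only that the result ``follows straightforwardly from CGMT (Theorem~\ref{thm:gmt}), though it requires a slightly different truncation argument compared to the proof of Theorem~\ref{thm:single-index-gen}.'' Your sketch---dualizing the constraint, applying the convex--concave direction of Theorem~\ref{thm:gmt} after a compactness truncation, simplifying the AO by optimizing over the direction and magnitude of $u$, and absorbing the sign via the symmetry of $H$---is exactly this outline made explicit, and the point you flag as delicate (the order of the $r,s\to\infty$ limits differing from Lemma~\ref{lem:main-gen-gmt-app} because here you use the second conclusion of Theorem~\ref{thm:gmt} rather than the first) is precisely the ``slightly different truncation argument'' the paper alludes to.
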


The next lemma analyzes the AO in Lemma~\ref{lem:norm-gmt-application}. Our proof closely follows Lemma 8 of \cite{uc-interpolators}, but we don't make assumptions on $\xi$ yet to allow more applications.

\begin{lemma} \label{lem:norm-analyze-AO}
Let $Z : n \times d$ be a matrix with i.i.d. $\cN(0,1)$ entries.  Fix any $\delta > 0$, covariance matrix $\Sigma$ and non-random vector $\xi \in \R^n$, then there exists $\epsilon \lesssim \log(1/\delta) \left( \frac{1}{n} + \frac{1}{\sqrt{R(\Sigma)}} + \frac{n}{R(\Sigma)} \right)$ such that with probability at least $1-\delta$, it holds that
\begin{equation}
    \min_{\substack{w \in \R^d: \\ Z \Sigma^{1/2} w = \xi}} \, \| w\|_2^2 
    \leq 
    (1+\epsilon) \frac{\| \xi\|_2^2}{\Tr(\Sigma)}.
\end{equation}
\end{lemma}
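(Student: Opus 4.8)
The plan is to run the Gaussian Minimax reduction already packaged in Lemma~\ref{lem:norm-gmt-application} (taking the arbitrary norm there to be $\|\cdot\|_2$) and then certify a near-optimal feasible point for the Auxiliary Optimization problem by an explicit construction. By Lemma~\ref{lem:norm-gmt-application} it suffices to show that
\[
    \Psi := \min_{\substack{w \in \R^d \,:\\ \bigl\|\,\|\Sigma^{1/2}w\|_2\,G - \xi\bigr\|_2 \,\le\, \langle \Sigma^{1/2}H,\, w\rangle}} \|w\|_2
\]
satisfies $\Psi^2 \le (1+\epsilon)\,\|\xi\|_2^2/\Tr(\Sigma)$ with probability at least $1-\delta/2$ over $G\sim\cN(0,I_n)$ and $H\sim\cN(0,I_d)$; the factor-$2$ loss in Lemma~\ref{lem:norm-gmt-application} and the extra $\delta$-slack are absorbed into the conclusion.

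For the feasible point I would use the ansatz $w = \alpha\,\Sigma^{1/2}H$ with a scalar $\alpha>0$. This is the natural choice: for a fixed value of $\|w\|_2$ it is, by Cauchy--Schwarz, the direction that maximizes the quantity $\langle \Sigma^{1/2}H, w\rangle = \langle H, \Sigma^{1/2}w\rangle$ on the right-hand side of the constraint, and it is the auxiliary-problem analog of the true minimum-norm solution $w^{*}\propto\Sigma^{1/2}Z^{T}\xi$. Substituting, $\Sigma^{1/2}w = \alpha\,\Sigma H$ and $\langle\Sigma^{1/2}H,w\rangle = \alpha\,\|\Sigma^{1/2}H\|_2^2$, so the constraint becomes $\bigl\|\alpha\|\Sigma H\|_2\,G - \xi\bigr\|_2 \le \alpha\,\|\Sigma^{1/2}H\|_2^2$. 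Squaring and rearranging turns this into the scalar quadratic inequality
\[
    \alpha^2\Bigl(\|\Sigma^{1/2}H\|_2^4 - \|\Sigma H\|_2^2\|G\|_2^2\Bigr) + 2\alpha\,\|\Sigma H\|_2\,\langle G,\xi\rangle - \|\xi\|_2^2 \;\ge\; 0 ,
\]
which, once the leading coefficient is positive, holds for every $\alpha$ at least as large as the (positive) larger root $\alpha_{\star}$. Bounding $\alpha_{\star}$ by the quadratic formula (using $\sqrt{u^2+v^2}\le |u|+v$ to absorb the sign of $\langle G,\xi\rangle$) gives
\[
    \|w\|_2^2 = \alpha_{\star}^2\,\|\Sigma^{1/2}H\|_2^2 \;\lesssim\; \frac{\|\xi\|_2^2\,\|\Sigma^{1/2}H\|_2^2}{\|\Sigma^{1/2}H\|_2^4 - \|\Sigma H\|_2^2\|G\|_2^2}\cdot\Bigl(1 + \frac{\|\Sigma H\|_2\,|\langle G,\xi\rangle|}{\|\xi\|_2\,\|\Sigma^{1/2}H\|_2^2}\Bigr)^2 .
\]

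It then remains to plug in high-probability estimates and simplify. By a union bound I would use Lemma~\ref{lem:norm-concentration} for $\|G\|_2^2 = n + O(\sqrt{n\log(1/\delta)} + \log(1/\delta))$, Lemma~\ref{lem:sigmah-concentration} for $\|\Sigma^{1/2}H\|_2^2 = \Tr(\Sigma)\bigl(1 - O(\log(1/\delta)/\sqrt{R(\Sigma)})\bigr)$ together with $\|\Sigma H\|_2^2 \lesssim \log(1/\delta)\,\Tr(\Sigma^2)$, and a one-dimensional Gaussian tail bound for $|\langle G,\xi\rangle| \lesssim \sqrt{\log(1/\delta)}\,\|\xi\|_2$. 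On the intersection of these events $\|\Sigma^{1/2}H\|_2^4$ concentrates around $\Tr(\Sigma)^2$, while $\|\Sigma H\|_2^2\|G\|_2^2 / \|\Sigma^{1/2}H\|_2^4 \lesssim \log(1/\delta)\,n/R(\Sigma)$ --- this ratio, together with the deviation of $\|G\|_2^2$ from $n$, produces the $n/R(\Sigma)$ and $1/n$ terms --- and the correction factor is $1 + O\bigl(\sqrt{\log(1/\delta)\,\Tr(\Sigma^2)}/\Tr(\Sigma)\bigr) = 1 + O(\log(1/\delta)/\sqrt{R(\Sigma)})$, which, with the concentration of $\|\Sigma^{1/2}H\|_2^2$ around $\Tr(\Sigma)$, produces the $1/\sqrt{R(\Sigma)}$ term. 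Collecting these estimates yields $\|w\|_2^2 \le (1+\epsilon)\,\|\xi\|_2^2/\Tr(\Sigma)$ with $\epsilon \lesssim \log(1/\delta)\bigl(\tfrac1n + \tfrac{1}{\sqrt{R(\Sigma)}} + \tfrac{n}{R(\Sigma)}\bigr)$, and since $\Psi \le \|w\|_2$ on this event, the claim follows after the reduction.

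I expect essentially all of the work to be in this last step: propagating the $(1+o(1))$ factors through the quadratic formula and the square root so that they collapse exactly into the three-term $\epsilon$, and checking that the leading coefficient $\|\Sigma^{1/2}H\|_2^4 - \|\Sigma H\|_2^2\|G\|_2^2$ is positive on the good event --- it is, precisely when $R(\Sigma) \gtrsim n\log(1/\delta)$, which is exactly the range in which the claimed bound is non-vacuous, so outside that range the inequality follows trivially from $\epsilon \gtrsim 1$. (For the reduction to be meaningful one also wants the linear system $Z\Sigma^{1/2}w = \xi$ to be solvable, which is automatic in the applications and is in any case certified with high probability by the construction above.) The argument follows Lemma~8 of \cite{uc-interpolators}, the only change being that $\xi$ is left arbitrary here rather than specialized; a purely random-matrix-theoretic alternative --- bounding $\min_{Z\Sigma^{1/2}w=\xi}\|w\|_2^2 = \xi^{T}(Z\Sigma Z^{T})^{-1}\xi$ via a sub-exponential matrix concentration bound for $Z\Sigma Z^{T} \approx \Tr(\Sigma)I_n$ --- also works and makes the $n/R(\Sigma)$ dependence transparent, but does not match the CGMT framing used elsewhere in the paper.
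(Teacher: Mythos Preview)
Your proposal is correct and takes essentially the same approach as the paper: reduce via Lemma~\ref{lem:norm-gmt-application} to the auxiliary problem, plug in the one-parameter ansatz $w\propto\Sigma^{1/2}H$, and control the feasibility constraint using the same Gaussian concentration lemmas. The only difference is cosmetic: the paper first applies AM--GM to the cross term $-2\langle G,\xi\rangle\|\Sigma^{1/2}w\|_2$ (this is where the $\log(1/\delta)/n$ contribution actually appears) so that the constraint becomes linear in $\alpha^2$, whereas you keep the cross term and solve the resulting quadratic --- both routes lead to the same bound.
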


\begin{proof}
By a union bound, there exists a constant $C > 0$ such that the following events occur together with probability at least $1-\delta/2$:
\begin{enumerate}
    \item Since $\langle G, \xi\rangle \sim \cN(0, \| \xi\|_2^2)$, by the standard Gaussian tail bound $\Pr(|Z| \geq t) \leq 2 e^{-t^2/2}$, we have
    \[
    |\langle G, \xi\rangle| \leq \| \xi \|_2 \sqrt{2 \log(32/\delta)}
    \]
    \item Using subexponential Bernstein's inequality (Theorem 2.8.1 of \citet{vershynin2018high}), requiring $n = \Omega(\log(1/\delta))$, we have
    \[ 
    \| G\|_2^2 \leq 2 n
    \]
    \item Using the first part of Lemma~\ref{lem:sigmah-concentration}, we have
    \[
    \| \Sigma^{1/2} H \|_2^2 \geq \Tr(\Sigma) \left( 1 - C \frac{\log (32/\delta)}{\sqrt{R(\Sigma)}} \right)
    \]
    \item  Using the last part of Lemma~\ref{lem:sigmah-concentration}, requiring $R(\Sigma) \gtrsim \log(32/\delta)^2$
    \[
    \frac{\| \Sigma H \|_2^2}{\| \Sigma^{1/2} H \|_2^2}  \leq C \log(32/\delta) \frac{\Tr(\Sigma^2)}{\Tr(\Sigma) }
    \]
\end{enumerate}

Therefore, by the AM-GM inequality, it holds that
\begin{align*}
    \| G \| \Sigma^{1/2} w \|_2 - \xi \|_2^2 
    &= \| G\|_2^2 \| \Sigma^{1/2} w \|_2^2 + \| \xi \|_2^2 - 2 \langle G, \xi \rangle \| \Sigma^{1/2} w \|_2 \\
    &\leq 2n \| \Sigma^{1/2} w \|_2^2 + \| \xi \|_2^2 + 2 \| \xi \|_2 \sqrt{2 \log(32/\delta)} \| \Sigma^{1/2} w \|_2 \\
    &\leq 3n \| \Sigma^{1/2} w \|_2^2 + \left( 1 + \frac{2 \log(32/\delta)}{n} \right) \| \xi\|_2^2.
\end{align*}
To apply lemma~\ref{lem:norm-gmt-application}, we will consider $w$ of the form $w = \alpha \frac{\Sigma^{1/2} H}{\| \Sigma^{1/2} H\|_2}$ for some $\alpha > 0$. Then we have
\[
\| G \| \Sigma^{1/2} w \|_2 - \xi \|_2^2 
\leq 
3n C \log(32/\delta) \frac{\Tr(\Sigma^2)}{\Tr(\Sigma) } \alpha^2  + \left( 1 + \frac{2 \log(32/\delta)}{n} \right) \| \xi\|_2^2
\]
and 
\[
\langle \Sigma^{1/2} H, w\rangle^2 = \alpha^2 \| \Sigma^{1/2} H \|_2^2 \geq \alpha^2 \Tr(\Sigma) \left( 1 - C \frac{\log (32/\delta)}{\sqrt{R(\Sigma)}} \right).
\]
So it suffices to choose $\alpha$ such that
\begin{align*}
    \alpha^2
    &\geq 
    \frac{\left( 1 + \frac{2 \log(32/\delta)}{n} \right) \| \xi\|_2^2}{\Tr(\Sigma) \left( 1 - C \frac{\log (32/\delta)}{\sqrt{R(\Sigma)}} \right) - 3n C \log(32/\delta) \frac{\Tr(\Sigma^2)}{\Tr(\Sigma) }} \\
    &= \frac{1 + \frac{2 \log(32/\delta)}{n} }{ 1 - C \log(32/\delta) \left(\frac{1}{\sqrt{R(\Sigma)}} + 3 \frac{n}{R(\Sigma)} \right)} \frac{\| \xi\|_2^2}{\Tr(\Sigma)}
\end{align*}
and we are done.
\end{proof}

A challenge for analyzing the minimal norm to interpolate is that the projection matrix $Q$ is not necessarily an orthogonal projection. However, the following lemma suggests that if $\Sigma^{\perp} = Q^T \Sigma Q$ has high effective rank, then we can let $R$ be the orthogonal projection matrix onto the image of $Q$ and $R \Sigma R$ is approximately the same as $\Sigma^{\perp}$ in terms of the quantities that are relevant to the norm analysis.

\begin{lemma} \label{lem:orthgonalize-R}
Consider $Q = I - \sum_{i=1}^k w_i^* (w_i^*)^T \Sigma$ where $\Sigma^{1/2} w_1^*, ..., \Sigma^{1/2} w_k^*$ are orthonormal and we let $R$ be the orthogonal projection matrix onto the image of $Q$. Then it holds that $\rank(R) = d-k$ and  
\[
R\Sigma w_i^* = 0 \quad \text{for any } \, i=1,..., k.
\]
Moreover, we have $QR = R$ and $RQ = Q$, and so
\begin{align*}
    \frac{1}{\Tr(R\Sigma R)}
    &\leq  \left( 1 - \frac{k}{n} - \frac{n}{R(Q^T \Sigma Q)}\right)^{-1} \frac{1}{\Tr(Q^T \Sigma Q)} \\
    \frac{n}{R(R \Sigma R) }
    &\leq  \left( 1 - \frac{k}{n} - \frac{n}{R(Q^T \Sigma Q)}\right)^{-2} \frac{n}{R(Q^T\Sigma Q)}.
\end{align*}
\end{lemma}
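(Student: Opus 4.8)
The plan is to first establish the linear-algebraic relationship between $Q$ and $R$, and then reduce both trace inequalities to a single Cauchy--Schwarz estimate; throughout I write $\Sigma^{\perp} := Q^T \Sigma Q$ and assume $\Sigma$ is positive definite (otherwise one restricts to its range). First I would record that $Q$ is idempotent: expanding $Q^2$ and using $(w_i^*)^T \Sigma w_j^* = \langle \Sigma^{1/2} w_i^*, \Sigma^{1/2} w_j^* \rangle = \delta_{ij}$, the cross terms cancel, so $Q^2 = Q$; the same identity gives $Q w_i^* = 0$, so the independent vectors $w_1^*, \dots, w_k^*$ span a $k$-dimensional subspace of $\ker Q$. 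Writing $Q = \Sigma^{-1/2} P \Sigma^{1/2}$, where $P := I - \sum_{i=1}^k (\Sigma^{1/2} w_i^*)(\Sigma^{1/2} w_i^*)^T$ is an orthogonal projection of rank $d - k$, it follows that $\rank Q = d - k$, hence $\rank R = \dim(\mathrm{Im}\, Q) = d - k$. Transposing, $Q^T \Sigma w_i^* = \Sigma w_i^* - \Sigma \sum_j w_j^* \langle \Sigma^{1/2} w_j^*, \Sigma^{1/2} w_i^* \rangle = 0$, so $\Sigma w_i^* \in \ker Q^T = (\mathrm{Im}\, Q)^{\perp}$, and therefore $R \Sigma w_i^* = 0$. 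For the two operator identities: $R$ is the orthogonal projection onto $\mathrm{Im}\, Q$ and hence fixes that subspace pointwise; since every column of $Q$ lies in $\mathrm{Im}\, Q$, we get $R Q = Q$; and since $Q$ also fixes $\mathrm{Im}\, Q$ pointwise by idempotency (if $v = Q u$ then $Q v = Q^2 u = Q u = v$) while every column of $R$ lies in $\mathrm{Im}\, R = \mathrm{Im}\, Q$, we get $Q R = R$.

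Next I would use these identities to reduce $R \Sigma R$ to $R \Sigma^{\perp} R$: transposing $Q R = R$ gives $R Q^T = R$, so $R \Sigma R = (R Q^T) \Sigma (Q R) = R (Q^T \Sigma Q) R = R \Sigma^{\perp} R$, and therefore $\Tr(R \Sigma R) = \Tr(\Sigma^{\perp} R) = \Tr(\Sigma^{\perp}) - \Tr(\Sigma^{\perp} (I - R))$. Since $I - R$ is an orthogonal projection of rank $k$, Cauchy--Schwarz for the trace inner product gives $\Tr(\Sigma^{\perp}(I - R)) \le \sqrt{\Tr((\Sigma^{\perp})^2)} \cdot \sqrt{\Tr(I - R)} = \sqrt{k \, \Tr((\Sigma^{\perp})^2)} = \Tr(\Sigma^{\perp}) \sqrt{k / R(\Sigma^{\perp})}$, using $R(\Sigma^{\perp}) = \Tr(\Sigma^{\perp})^2 / \Tr((\Sigma^{\perp})^2)$. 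Thus $\Tr(R \Sigma R) \ge (1 - \sqrt{k / R(\Sigma^{\perp})}) \, \Tr(\Sigma^{\perp})$; by AM--GM, $\sqrt{k / R(\Sigma^{\perp})} = \sqrt{(k/n)(n / R(\Sigma^{\perp}))} \le \frac{1}{2}(k/n + n / R(\Sigma^{\perp})) \le k/n + n / R(\Sigma^{\perp})$, so inverting (in the regime $1 - k/n - n/R(\Sigma^{\perp}) > 0$, which is the only one in which the claim has content) gives the first displayed inequality.

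For the effective-rank inequality I would upper bound $\Tr((R \Sigma R)^2) = \Tr(\Sigma^{\perp} R \Sigma^{\perp} R) = \Tr(M^2)$, where $M := (\Sigma^{\perp})^{1/2} R (\Sigma^{\perp})^{1/2}$. Because $0 \preceq R \preceq I$ we have $0 \preceq M \preceq \Sigma^{\perp}$, so $\Tr(M^2) \le \Tr(M \Sigma^{\perp}) \le \Tr((\Sigma^{\perp})^2)$, each inequality using that the trace of a product of two positive semidefinite matrices is nonnegative (applied to $M$ and $\Sigma^{\perp} - M$, then to $\Sigma^{\perp}$ and $\Sigma^{\perp} - M$). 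Combining this with the previous paragraph, $R(R \Sigma R) = \Tr(R \Sigma R)^2 / \Tr((R \Sigma R)^2) \ge (1 - \sqrt{k / R(\Sigma^{\perp})})^2 \, R(\Sigma^{\perp}) \ge (1 - k/n - n/R(\Sigma^{\perp}))^2 \, R(\Sigma^{\perp})$, and dividing $n$ by both sides gives the second inequality. I expect the bulk of the work to be in the first step --- correctly identifying $\mathrm{Im}\, Q$ and $\ker Q^T$ and verifying $Q R = R$ and $R Q = Q$; once the reduction $R \Sigma R = R \Sigma^{\perp} R$ is in hand, the two bounds are short Cauchy--Schwarz and operator-monotonicity computations.
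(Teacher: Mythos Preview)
Your proposal is correct and follows essentially the same approach as the paper: the identification of $\ker Q$, the verification of $QR=R$ and $RQ=Q$, the reduction $R\Sigma R = R\Sigma^{\perp}R$ (the paper writes this as $R=QR=RQ^T$), the Cauchy--Schwarz bound $\Tr(\Sigma^{\perp}(I-R))\le \Tr(\Sigma^{\perp})\sqrt{k/R(\Sigma^{\perp})}$, and the AM--GM step are all identical in substance. Your treatment of the second inequality via $M=(\Sigma^{\perp})^{1/2}R(\Sigma^{\perp})^{1/2}$ and the chain $\Tr(M^2)\le \Tr(M\Sigma^{\perp})\le \Tr((\Sigma^{\perp})^2)$ is a slightly cleaner packaging of the same operator inequality $R\preceq I$ that the paper uses when it bounds $\Tr((R\Sigma^{\perp})R(\Sigma^{\perp}R))\le \Tr((R\Sigma^{\perp})(\Sigma^{\perp}R))\le \Tr((\Sigma^{\perp})^2)$.
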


\begin{proof}
It is obvious that $\rank(R) = \rank(Q)$ and by the rank-nullity theorem, it suffices to show the nullity of $Q$ is $k$. To this end, we observe that
\begin{align*}
    Qw = 0
    & \iff \Sigma^{-1/2} \left( I - \sum_{i=1}^k  (\Sigma^{1/2} w_i^*) (\Sigma^{1/2} w_i^*)^T \right) \Sigma^{1/2} w = 0\\
    & \iff \left( I - \sum_{i=1}^k  (\Sigma^{1/2} w_i^*) (\Sigma^{1/2} w_i^*)^T \right) \Sigma^{1/2} w = 0\\
    & \iff \Sigma^{1/2} w \in \text{span} \{ \Sigma^{1/2} w_1^*, ...,  \Sigma^{1/2} w_k^*\}\\
    & \iff w \in \text{span} \{ w_1^*, ...,  w_k^*\}.
\end{align*}
It is also straightforward to verify that $Q^2 = Q$ and $Q^T \Sigma w_i^* = 0$ for $i = 1,..., k$. For any $v \in \R^d$, $Rv$ lies in the image of $Q$ and so there exists $w$ such that $Rv = Qw$. Then we can check that
\begin{align*}
    v^T R\Sigma w_i^* 
    &= \langle Rv, \Sigma w_i^* \rangle \\
    &= \langle Qw, \Sigma w_i^* \rangle = \langle w, Q^T\Sigma w_i^* \rangle = 0
\end{align*}
and 
\begin{align*}
    (QR) v
    &= Q (Rv) \\
    &= Q(Qw) = Q^2 w\\
    &= Qw = Rv.
\end{align*}
Since the choice of $v$ is arbitrary, it must be the case that $R\Sigma w_i^* = 0$ and $QR= R$. For any $v \in \R^d$, we can check
\[
(RQ)v = R(Qv) = Qv
\]
by the definition of orthogonal projection. Therefore, it must be the case that $RQ = Q$. Finally, we use $R = QR = RQ^T$ to show that
\begin{align*}
    \Tr(R\Sigma R)
    &= \Tr(RQ^T \Sigma QR) = \Tr(Q^T \Sigma QR) \\
    &= \Tr(Q^T \Sigma Q) - \Tr(Q^T \Sigma Q(I-R)) \\
    &\geq \Tr(Q^T \Sigma Q) - \sqrt{\Tr((Q^T\Sigma Q)^2) \Tr((I-R)^2)} \\
    &= \Tr(Q^T \Sigma Q) \left( 1 - \sqrt{\frac{k}{R(Q^T \Sigma Q)}}\right) \\
    &= \Tr(Q^T \Sigma Q) \left( 1 - \frac{k}{n} - \frac{n}{R(Q^T \Sigma Q)}\right)
\end{align*}
and
\begin{align*}
    \Tr((R\Sigma R)^2)
    &= \Tr(\Sigma R\Sigma R) \\
    &= \Tr(\Sigma QRQ^T \Sigma QRQ^T) \\
    &= \Tr((RQ^T\Sigma Q)R(Q^T \Sigma QR)) \\
    &\leq \Tr((RQ^T\Sigma Q)(Q^T \Sigma QR)) = \Tr((Q^T \Sigma Q)^2 R) \\
    &\leq \Tr((Q^T \Sigma Q)^2).
\end{align*}
Rearranging concludes the proof.
\end{proof}

\subsection{Phase Retrieval}

\PrNormBound*

\begin{proof} 
Without loss of generality, we assume that $\mu$ lies in the span of $\{ \Sigma w_1^*, ..., \Sigma w_k^* \}$ because otherwise we can simply increase $k$ by one. Moreover, we can assume that $\{ \Sigma^{1/2} w_1^*, ..., \Sigma^{1/2} w_k^* \}$ are orthonormal because otherwise we let $\tilde{W} = W(W^T \Sigma W)^{-1}$ and conditioning on $W^T (x-\mu)$ is the same as conditioning on $\tilde{W}^T (x-\mu)$. By Lemma~\ref{lem:conditional_dist}, conditioned on 
\[
\begin{pmatrix}
    \eta_1^T \\
    ... \\
    \eta_k^T
\end{pmatrix}
 = [W^T(x_1 - \mu), ..., W^T(x_n - \mu)]
\]
the distribution of $X$ is the same as 
\[
X = 1 \mu^T + \sum_{i=1}^k \eta_i (\Sigma w_i^*)^T + Z\Sigma^{1/2} Q
\]
where $Z$ has i.i.d. standard normal entries. Furthermore, conditioned on $W^T(x-\mu)$ and the noise of variable in $y$ (which is independent of $x$), by the multi-index assumption \ref{assumption:multi-index}, the label $y$ is non-random. Since $Qw^{\sharp} = 0$, we have $w^{\sharp} = \sum_{i=1}^k \langle w_i^*, \Sigma w^{\sharp}\rangle w_i^*$ and so
\[
\langle w^{\sharp}, x \rangle
= \langle w^{\sharp}, \mu \rangle + \sum_{i=1}^k \langle w_i^*, \Sigma w^{\sharp}\rangle \langle w_i^*, x- \mu \rangle.
\]
Therefore, $\langle w^{\sharp}, x \rangle$ also becomes non-random after conditioning. We can let $I = \{ i \in [n]: \langle w^{\sharp}, x_i \rangle \geq 0\}$ and define $\xi \in \R^n$ by
\[
\xi_i
=
\begin{cases}
    y_i - |\langle w^{\sharp}, x_i\rangle| &\text{if } i \in I \\   
   |\langle w^{\sharp}, x_i\rangle| - y_i &\text{if } i \notin I
\end{cases}
\]
and $\xi$ is non-random after conditioning. Following the construction discussed in the main text, for any $w^{\sharp} \in \R^d$, the predictor $w = w^{\sharp} + w^{\perp}$ satisfies $|\langle w, x_i \rangle| = y_i$ where
\[
w^{\perp} 
= \argmin_{\substack{w \in \R^d: \\ Xw = \xi}} \, \| w\|_2
\]    
by the definition of $\xi$. Hence, we have
\[
\min_{w \in \R^d: \forall i \in [n], \langle w, x_i \rangle^2 = y_i^2} \| w\|_2 \leq \| w^{\sharp}\|_2 + \| w^{\perp} \|_2
\]
and it suffices to control $\| w^{\perp} \|_2$.

Let $R$ be the orthogonal projection matrix onto the image of $Q$ and we consider $w$ of the form $Rw$ to upper bound $\| w^{\perp} \|_2$. By Lemma~\ref{lem:orthgonalize-R}, we know $QR = R$ and $R\Sigma w_i^* = 0$. By the assumption that $\mu$ lies in the span of $\{ \Sigma w_1^*, ..., \Sigma w_k^* \}$, we have
\[
\left( 1 \mu^T + \sum_{i=1}^k \eta_i (\Sigma w_i^*)^T + Z\Sigma^{1/2} Q \right) Rw = Z \Sigma^{1/2} Rw.
\]
Since $R$ is an orthogonal projection, it holds that $\| Rw \|_2 \leq \| w\|_2$. Finally, we observe that the distribution of $Z \Sigma^{1/2} R$ is the same as $Z (R \Sigma R)^{1/2}$ and so
\[
\| w^{\perp} \|_2
\leq 
\min_{\substack{w \in \R^d: \\ Z (R \Sigma R)^{1/2} w = \xi }} \| w\|_2.
\]
We are now ready to apply Lemma~\ref{lem:norm-analyze-AO} to the covariance $R \Sigma R$. We are allowed to replace the dependence on $R \Sigma R$ by the dependence on $\Sigma^{\perp}$ by the last two inequalities of Lemma~\ref{lem:orthgonalize-R}. The desired conclusion follows by the observation that $\| \xi\|_2^2 = n \hat{L}_f(w^{\sharp})$ and the assumption that $\hat{L}_f(w^{\sharp}) \leq (1+\rho) L_f(w^{\sharp})$.
\end{proof}

\subsection{ReLU Regression}

The proof of Theorem~\ref{thm:relu-norm-bound} will closely follow the proof of Theorem~\ref{thm:phase-retrieval-norm-bound}.

\ReLUNormBound*

\begin{proof}
We let $I = \{i \in [n]: y_i > 0 \}$ and for any $(w^{\sharp}, b^{\sharp}) \in \R^{d+1}$, we define $\xi \in \R^n$ by
\[
\xi_i
= 
\begin{cases}
    y_i - \langle w^{\sharp}, x_i\rangle - b^{\sharp} &\text{if } i \in I\\
    -\sigma(\langle w^{\sharp}, x_i\rangle + b^{\sharp}) &\text{if } i \notin I.
\end{cases}
\]
By the definition of $\xi$, the predictor $(w,b) = (w^{\sharp} + w^{\perp}, b^{\sharp})$ satisfies $\sigma(\langle w, x_i\rangle + b) = y_i$ where
\[
w^{\perp} 
= \argmin_{\substack{w \in \R^d: \\ Xw = \xi}} \, \| w\|_2.
\]    
Hence, we have
\[
\min_{\substack{(w,b) \in \R^{d+1}: \\ \forall i \in [n], \sigma(\langle w, x_i \rangle + b)= y_i}} \, \| w\|_2 \leq \| w^{\sharp}\|_2 + \| w^{\perp} \|_2
\]
and it suffices to control $\| w^{\perp} \|_2$.

Similar to the proof of Theorem~\ref{thm:phase-retrieval-norm-bound}, we make the simplifying assumption that $\mu$ lies in the span of $\{ \Sigma w_1^*, ..., \Sigma w_k^* \}$ and $\{ \Sigma^{1/2} w_1^*, ..., \Sigma^{1/2} w_k^* \}$ are orthonormal. Conditioned on $W^T(x_i-\mu)$ and the noise variable in $y_i$, both $y_i$ and $\langle w^{\sharp}, x_i \rangle$ are non-random, and so $\xi$ is also non-random. The distribution of $X$ is the same as 
\[
X = 1 \mu^T + \sum_{i=1}^k \eta_i (\Sigma w_i^*)^T + Z\Sigma^{1/2} Q.
\]
If we consider $w$ of the form $Rw$, then we have
\[
\| w^{\perp} \|_2
\leq 
\min_{\substack{w \in \R^d: \\ Z (R \Sigma R)^{1/2} w = \xi }} \| w\|_2.
\]
We are now ready to apply Lemma~\ref{lem:norm-analyze-AO} to the covariance $R \Sigma R$. We are allowed to replace the dependence on $R \Sigma R$ by the dependence on $\Sigma^{\perp}$ by the last two inequalities of Lemma~\ref{lem:orthgonalize-R}. The desired conclusion follows by the observation that $\| \xi\|_2^2 = n \hat{L}_f(w^{\sharp}, b^{\sharp})$ due to the definition \eqref{eqn:relu-loss} and the assumption that $\hat{L}_f(w^{\sharp}) \leq (1+\rho) L_f(w^{\sharp}, b^{\sharp})$.
\end{proof}

\subsection{Low-rank Matrix Sensing}

\MatrixNormBound*

\begin{proof}
Without loss of generality, we will assume that $d_1 \leq d_2$. We will vectorize the measurement matrices and estimator $A_1, ..., A_n, X \in \R^{d_1 \times d_2}$ as $a_1, ..., a_n, x \in \R^{d_1 d_2}$ and define $\| x\|_* = \| X\|_*$. Denote $A = [a_1, ..., a_n]^T \in \R^{n \times d_1 d_2}$. We define the primary problem $\Phi$ by
\[
\Phi :=
\min_{\forall i \in [n], \langle A_i, X \rangle = \xi} \| X\|_*
= \min_{A x = \xi} \| x\|_*.
\]  
By Lemma~\ref{lem:norm-gmt-application}, it suffices to consider the auxiliary problem
\[
\Psi 
:= \min_{\| G\| x\|_2 - \xi\|_2 \leq -\langle H, x\rangle} \| x\|_*.
\]
We will pick $x$ of the form $ x = -\alpha H$ for some $\alpha \geq 0$, which needs to satisfy $\alpha \| H\|_2^2 \geq \| \alpha G \| H \|_2 - \xi \|_2$. By a union bound, the following events occur simultaneously with probability at least $1-\delta/2$:
\begin{enumerate}
    \item by Lemma~\ref{lem:norm-concentration}, it holds that
    \begin{equation*}
    \begin{split}
        \| G\|_2 
        & \leq \sqrt{n} + 2 \sqrt{\log(32/\delta)} \\
        \frac{\| \xi \|_2}{\sigma}
        & \leq \sqrt{n} + 2 \sqrt{\log(32/\delta)} \\
        \| H\|_2 
        & \leq \sqrt{d_1 d_2} + 2 \sqrt{\log(32/\delta)} \\
    \end{split}
    \end{equation*}
    \item Condition on $\xi$, we have $\frac{1}{\| \xi\|}\langle G, \xi \rangle \sim \cN(0, 1 )$ and so by standard Gaussian tail bound $\Pr(|Z| > t)\leq 2e^{-t^2/2}$
    \[
    \frac{|\langle G, \xi \rangle|}{\| \xi\|} \leq \sqrt{2 \log(16/\delta)}
    \]
\end{enumerate}
Then we can use AM-GM inequality to show for sufficiently large $n$
\begin{equation*}
    \begin{split}
        &\| \alpha G \| H \|_2 - \xi \|_2^2 \\
        = \, & \alpha^2 \| G\|_2^2 \| H\|_2^2 + \| \xi\|^2 - 2 \alpha \| H\|_2 \langle G, \xi \rangle \\
        \leq \, & n \alpha^2 \| H\|_2^2 \left( 1 + 2 \sqrt{\frac{\log(32/\delta)}{n}} \right)^2  + \| \xi\|^2 + 2 \sqrt{n} \alpha \| H\|_2 \| \xi \|_2 \sqrt{\frac{2 \log(16/\delta)}{n}}\\
        \leq \, & n \alpha^2 \| H\|_2^2 \left( 1 + 10 \sqrt{\frac{\log(32/\delta)}{n}} \right)  + \left( 1 + \sqrt{\frac{2 \log(16/\delta)}{n}} \right)\| \xi \|_2^2 
    \end{split}
\end{equation*}
and it suffices to let
\[
\alpha^2 \|H\|_2^4 \geq n \alpha^2 \| H\|_2^2 \left( 1 + 10 \sqrt{\frac{\log(32/\delta)}{n}} \right)  + \left( 1 + \sqrt{\frac{2 \log(16/\delta)}{n}} \right)\| \xi \|_2^2.
\]
Rearranging the above inequality, we can choose
\begin{equation*}
    \begin{split}
        \alpha = \left(\frac{1 + 10 \sqrt{\frac{\log(32/\delta)}{n}} }{1 - \frac{n}{d_1 d_2} \left( 1 + 10 \sqrt{\frac{\log(32/\delta)}{n}} \right) \left( 1 + 2 \sqrt{\frac{\log(32/\delta)}{d_1d_2}} \right)^2} \right)^{1/2} \frac{\sqrt{n \sigma^2}}{\| H\|_2^2}
    \end{split}
\end{equation*}
and since $H$ as a matrix can have at most rank $d_1$, by Cauchy-Schwarz inequality on the singular values of $H$, we have $\| H\|_* \leq \sqrt{d_1} \| H\|_2 $ and 
\[
\| x\|_* = \alpha \| H\|_* \leq \alpha \sqrt{d_1} \| H\|_2
\leq (1+\epsilon) \sqrt{\frac{d_1 (n\sigma^2)}{d_1d_2}} = (1+\epsilon) \sqrt{\frac{n\sigma^2}{d_2}} 
\]
for some $\epsilon \lesssim \sqrt{\frac{\log(32/\delta)}{n}} + \frac{n}{d_1 d_2}$. The desired conclusion follows by the observation that $\| X^* \|_* \leq \sqrt{r} \| X^* \|_F$ because $X^*$ has rank $r$.
\end{proof}

\MatrixConsistency*

\begin{proof}
Note that $\langle A, X^*\rangle \sim \cN(0, \| X^*\|_F^2)$ and so by the standard Gaussian tail bound $\Pr(|Z| \geq t) \leq 2 e^{-t^2/2}$, Theorem~\ref{thm:rmt} and a union bound, it holds with probability at least $1-\delta/8$ that
\begin{align*}
    |\langle A, X^*\rangle| &\leq \sqrt{2\log(32/\delta)} \| X^*\|_F \\
    \| A\|_{\op} &\leq \sqrt{d_1} + \sqrt{d_2} + \sqrt{2\log(32/\delta)}.
\end{align*}
Then it holds that
\begin{align*}
    \left\| A - \frac{\langle A, X^* \rangle}{\| X^*\|_F^2} X^* \right\|_{\op}
    &\leq \| A \|_{\op} + \frac{|\langle A, X^* \rangle|}{\| X^*\|_F^2} \| X^*\|_{\op} \\
    &\leq \sqrt{d_1} + \sqrt{d_2} + \sqrt{2\log(32/\delta)} + \frac{\| X^*\|_{\op}}{\| X^* \|_{F}} \sqrt{2\log(32/\delta)} \\
    &\leq \sqrt{d_1} + \sqrt{d_2} + \sqrt{8\log(32/\delta)} .
\end{align*}
Therefore, we can choose $C_{\delta}$ in Theorem~\ref{thm:moreau-envelope-gen} by
\[
C_{\delta}(X) := \left( \sqrt{d_1} + \sqrt{d_2} + \sqrt{8\log(32/\delta)} \right) \| X\|_*
\]
and applying Theorem~\ref{thm:moreau-envelope-gen} and Theorem~\ref{thm:matrix-norm-bound}, we have
\begin{align*}
    &(1-\epsilon) L(\hat{X}) \leq \frac{C_{\delta}(X)^2}{n} \\
    \leq &\frac{\left( \sqrt{d_1} + \sqrt{d_2} + \sqrt{8\log(32/\delta)} \right)^2}{n} \left( \sqrt{r} \| X^* \|_F + (1+\epsilon) \sqrt{\frac{n\sigma^2}{d_1 \vee d_2}} \right)^2 \\
    = & \left( \sqrt{\frac{d_1}{d_1 \vee d_2}} + \sqrt{\frac{d_2}{d_1 \vee d_2}} + \sqrt{\frac{8\log(32/\delta)}{d_1 \vee d_2}} \right)^2 \left( \sqrt{\frac{r(d_1 \vee d_2)}{n}} + (1+\epsilon) \frac{\sigma}{\| X^*\|_F} \right)^2 \| X^* \|_F^2
\end{align*}
where $\epsilon$ is the maximum of the two $\epsilon$ in Theorem~\ref{thm:moreau-envelope-gen} and Theorem~\ref{thm:matrix-norm-bound}. Finally, recall that 
\[
L(\hat{X}) = \sigma^2 + \| \hat{X} - X^* \|_F^2.
\]
Assuming that $d_1 \leq d_2$, then the above implies that
\begin{align*}
    &\frac{\| \hat{X}-X^*\|_F^2 }{\| X^*\|_F^2} \\
    \leq 
    &\, (1-\epsilon)^{-1} (1+\epsilon)^2 \left( 1 +  \sqrt{\frac{d_1}{d_2}} + \sqrt{\frac{8\log(32/\delta)}{d_2}} \right)^2 \left( \sqrt{\frac{r(d_1 + d_2)}{n}} +  \frac{\sigma}{\| X^*\|_F} \right)^2 - \frac{\sigma^2}{\| X^*\|_F^2} \\
    \lesssim
    &\, \frac{r(d_1 + d_2)}{n} + \sqrt{\frac{r(d_1 + d_2)}{n}}\frac{\sigma}{\| X^* \|_F} + \left( \sqrt{\frac{d_1}{d_2}} + \frac{n}{d_1 d_2}\right) \frac{\sigma^2}{\| X^* \|_F^2}
\end{align*}
and we are done.
\end{proof}

\section{Counterexample to Gaussian Universality}  \label{appendix:universality}

By assumption~\ref{assumption:counterexample-feature}, we can write $x_{i|d-k} = h(x_{i|k}) \cdot \Sigma^{1/2}_{|d-k} z_i$ where $z_i \sim \cN(0, I_{d-k})$. We will denote the matrix $Z = [z_1, ..., z_n]^T \in \R^{n \times (d-k)}$. Following the notation in section~\ref{sec:gaussian-universality}, we will also write $X = [X_{|k}, X_{|d-k}]$ where $X_{|k} \in \R^{n \times k}$ and $X_{|d-k} \in \R^{n \times (d-k)}$. The proofs in this section closely follows the proof of Theorem~\ref{thm:single-index-gen}. 

\begin{restatable}{theorem}{CounterExampleGen} \label{thm:counter-example-gen}
Consider dataset $(X, Y)$ drawn i.i.d. from the data distribution $\cD$ according to \ref{assumption:counterexample-feature} and \ref{assumption:counterexample-multi-index}, and fix any $f: \R \times \cY \to \R_{\geq 0}$ such that $\sqrt{f}$ is 1-Lipschitz for any $y \in \cY$. Fix any $\delta > 0$ and suppose there exists $\epsilon_{\delta} < 1$ and $C_{\delta}: \R^{d-k} \to [0, \infty]$ such that 
\begin{enumerate}[label = (\roman*)]
    \item with probability at least $1-\delta/2$ over $(X,Y)$ and $G \sim \cN(0, I_n)$, it holds uniformly over all $w_{|k} \in \R^{k}$ and $\| w_{|d-k} \|_{\Sigma_{|d-k}} \in \R_{\geq 0}$ that
    \begin{equation*}
        \frac{1}{n} \sum_{i=1}^n \frac{f(\langle w_{|k}, x_{i|k} \rangle + h(x_{i|k}) \| w_{|d-k} \|_{\Sigma_{|d-k}} G_i, y_i)}{h(x_{i|k})^2} 
        \geq (1-\epsilon_{\delta}) \E_{\cD} \left[ \frac{f(\langle w, x \rangle, y)}{h(x_{|k})^2}  \right]
    \end{equation*}
    \item with probability at least $1-\delta/2$ over $z_{|d-k} \sim \cN(0, \Sigma_{|d-k})$, it holds uniformly over all $w_{|d-k} \in \R^{d-k}$ that
    \begin{equation}
        \langle w_{|d-k}, z_{|d-k}\rangle \leq C_{\delta}(w_{|d-k})
    \end{equation}
\end{enumerate}
then with probability at least $1-\delta$, it holds uniformly over all $w \in \R^d$ that
\begin{equation}
     (1-\epsilon_{\delta})\E \left[ \frac{f(\langle w, x \rangle, y)}{h(x_{|k})^2}  \right]
    \leq 
     \left( \frac{1}{n} \sum_{i=1}^n \frac{f(\langle w, x_i\rangle, y_i)}{h(x_{i|k})^2}  + \frac{C_{\delta}(w_{|d-k})}{\sqrt{n}} \right)^2.
\end{equation}
\end{restatable}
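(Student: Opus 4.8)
The plan is to run the argument of Appendix~\ref{appendix:optimistic-rate} (Lemmas~\ref{lem:main-gen-PO}--\ref{lem:main-bound}) with the conditioning adapted to assumptions~\ref{assumption:counterexample-feature}--\ref{assumption:counterexample-multi-index}: instead of conditioning on a low-dimensional linear image of $x$, condition on the signal block $x_{|k}$ and the noise $\xi$, and apply the Gaussian Minimax Theorem only to the conditionally Gaussian tail. Concretely, write $x_{i|d-k} = h(x_{i|k})\,\Sigma_{|d-k}^{1/2} z_i$ with $z_1,\dots,z_n\sim\cN(0,I_{d-k})$ i.i.d., so that $Z=(z_1,\dots,z_n)^T$ has i.i.d.\ standard Gaussian entries. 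After conditioning on $X_{|k}$ and $\xi$, each $y_i=g(x_{i|k},\xi_i)$, each weight $h(x_{i|k})$, and each $\langle w_{|k},x_{i|k}\rangle$ is deterministic, while $\langle w_{|d-k},x_{i|d-k}\rangle = h(x_{i|k})(Zv)_i$ with $v:=\Sigma_{|d-k}^{1/2}w_{|d-k}$. As in Lemma~\ref{lem:main-gen-PO}, I would introduce slack variables $u_i=(Zv)_i$ and a multiplier $\lambda\in\R^n$ to write, for an arbitrary continuous deterministic $F$, the quantity $\sup_w\big(F(w)-\frac1n\sum_i f(\langle w,x_i\rangle,y_i)/h(x_{i|k})^2\big)$ as a Primary Optimization problem whose only bilinear term is $\langle\lambda,Zv\rangle$ (the variables $w_{|k}$ and $u$ enter only the deterministic part).

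Next I would apply Theorem~\ref{thm:gmt-mod} together with the truncation Lemmas~\ref{lem:truncation} and \ref{lem:truncation2}, exactly as in Lemma~\ref{lem:main-gen-gmt-app}, to pass to the Auxiliary Optimization problem in which $\langle\lambda,Zv\rangle$ is replaced by $\|v\|_2\langle G,\lambda\rangle+\|\lambda\|_2\langle H,v\rangle$ for fresh $G\sim\cN(0,I_n)$ and $H\sim\cN(0,I_{d-k})$. Optimizing out $\lambda$ produces the constraint $\| \|v\|_2 G-u\|_2\le\langle H,v\rangle$; absorbing $\|v\|_2 G$ into $u$ and separating the optimization over $u$ turns the AO into $\sup_w F(w)-\frac1n\inf_{\|u\|_2\le\langle H,v\rangle}\sum_i f(a_i+h(x_{i|k})u_i,y_i)/h(x_{i|k})^2$, where $a_i:=\langle w_{|k},x_{i|k}\rangle+h(x_{i|k})\|w_{|d-k}\|_{\Sigma_{|d-k}}G_i$.

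The crucial step — and the only genuinely new calculation — is the lower bound on this inner infimum by weak duality with a multiplier $\lambda\ge0$ on $\|u\|_2^2\le\langle H,v\rangle^2$: performing the per-coordinate change of variables $t_i=h(x_{i|k})u_i$ makes the $i$-th term equal to $h(x_{i|k})^{-2}\inf_{t_i}\big(f(a_i+t_i,y_i)+\lambda t_i^2\big)=h(x_{i|k})^{-2}f_\lambda(a_i,y_i)$, so the constraint on the \emph{unweighted} slack and the weight appearing \emph{inside} $f$ conspire to reproduce a bona fide Moreau envelope of the re-weighted loss. Since $\sqrt f$ is $1$-Lipschitz, Proposition~\ref{thm:sqrt-lipschitz-equivalence} gives $f_\lambda\ge\frac{\lambda}{\lambda+1}f$; then hypothesis~(ii), via $\langle H,v\rangle=\langle\Sigma_{|d-k}^{1/2}H,w_{|d-k}\rangle$ with $\Sigma_{|d-k}^{1/2}H\sim\cN(0,\Sigma_{|d-k})$, bounds $\langle H,v\rangle^2\le C_\delta(w_{|d-k})^2$, and hypothesis~(i), which is precisely a lower bound on $\frac1n\sum_i f(a_i,y_i)/h(x_{i|k})^2$, gives $\frac1n\sum_i f(a_i,y_i)/h(x_{i|k})^2\ge(1-\epsilon_\delta)\E_\cD[f(\langle w,x\rangle,y)/h(x_{|k})^2]$. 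Evaluating $\sup_{\lambda\ge0}$ with Lemma~\ref{lem:some-algebra} (with $H=1$) yields, on the intersection of these two events and after a union bound, $\Psi\le\sup_w\big(F(w)-(\sqrt{(1-\epsilon_\delta)\E_\cD[f(\langle w,x\rangle,y)/h(x_{|k})^2]}-C_\delta(w_{|d-k})/\sqrt n)_+^2\big)$. Choosing $F$ equal to that last $(\cdot)_+^2$ term forces $\Psi\le0$ on an event of probability at least $1-\delta$, so the GMT comparison gives $\sup_w\big(F(w)-\frac1n\sum_i f(\langle w,x_i\rangle,y_i)/h(x_{i|k})^2\big)\le0$ with the same probability conditionally on $(X_{|k},\xi)$, hence unconditionally by the tower law; rearranging the $(\cdot)_+^2$ inequality exactly as at the end of Lemma~\ref{lem:main-bound} gives the stated bound.

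I expect the main obstacle to be the bookkeeping of the weight $h(x_{i|k})$ through the three places it appears — the conditional covariance of the tail, the $\ell_2$ constraint on the slack $u$ (constrained in its \emph{unweighted} norm, since that is what the bilinear form in GMT produces), and the Moreau-envelope step — and in verifying that the substitution $t_i=h(x_{i|k})u_i$ really does collapse everything back to $f_\lambda(a_i,y_i)/h(x_{i|k})^2$. Everything else (the truncation lemmas, the GMT invocation, and the final algebra) is word-for-word the same as in Appendix~\ref{appendix:optimistic-rate}, with $H_\theta\equiv1$.
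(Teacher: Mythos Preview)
Your proposal is correct and matches the paper's proof essentially step for step: the paper also conditions on $X_{|k}$ and $\xi$, introduces $u=Z\Sigma_{|d-k}^{1/2}w_{|d-k}$ and a Lagrangian $\lambda$, applies GMT with the truncation lemmas, and then uses weak duality together with the same per-coordinate substitution (your $t_i=h(x_{i|k})u_i$) to recover $h(x_{i|k})^{-2}f_\lambda(a_i,y_i)$ before invoking Proposition~\ref{thm:sqrt-lipschitz-equivalence} and Lemma~\ref{lem:some-algebra}. The only cosmetic difference is that the paper phrases the reweighting as applying the general machinery to the auxiliary loss $f(\hat y,y,x_{|k})=h(x_{|k})^{-2}\tilde f(\hat y,y)$, whereas you keep the weight explicit throughout; the calculations are identical.
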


\begin{proof}
Note that 
\[
\langle w_{|d-k}, x_{i|d-k} \rangle = h(x_{i|k}) \cdot \langle w_{|d-k}, \Sigma^{1/2}_{|d-k} z_i \rangle
\]
and so for any $f: \R \times \cY \times \R^k \to \R$, we can write
\begin{align*}
    & \Phi := \sup_{w \in \R^d} \, F(w) - \frac{1}{n} \sum_{i=1}^n f(\langle w, x_i\rangle, y_i, x_{i|k}) \\
    = & \sup_{\substack{w \in \R^d, u \in \R^n \\ u = Z \Sigma^{1/2}_{|d-k} w_{|d-k}}} \, F(w) - \frac{1}{n} \sum_{i=1}^n f(\langle w_{|k}, x_{i|k} \rangle + h(x_{i|k}) u_i, y_i, x_{i|k}) \\
    = & \sup_{w \in \R^d, u \in \R^n} \inf_{\lambda \in \R^n} \, 
    \langle \lambda, Z \Sigma^{1/2}_{|d-k} w_{|d-k} - u \rangle + F(w) - \frac{1}{n} \sum_{i=1}^n f(\langle w_{|k}, x_{i|k} \rangle + h(x_{i|k}) u_i, y_i, x_{i|k}).
\end{align*}
By the same truncation argument used in Lemma~\ref{lem:main-gen-gmt-app}, it suffices to consider the auxiliary problem:
\begin{align*}
    \Psi := \sup_{w \in \R^d, u \in \R^n} \inf_{\lambda \in \R^n} \, 
    & \| \lambda \|_2 \langle H, \Sigma^{1/2}_{|d-k} w_{|d-k} \rangle +  \langle G\| \Sigma^{1/2}_{|d-k} w_{|d-k} \|_2 - u, \lambda \rangle \\
    & + F(w) - \frac{1}{n} \sum_{i=1}^n f(\langle w_{|k}, x_{i|k} \rangle + h(x_{i|k}) u_i, y_i, x_{i|k}) \\
    = \sup_{w \in \R^d, u \in \R^n} \inf_{\lambda \geq 0} \, 
    & \lambda \left( \langle H, \Sigma^{1/2}_{|d-k} w_{|d-k} \rangle -  \norm{ G\| \Sigma^{1/2}_{|d-k} w_{|d-k} \|_2 - u}_2 \right)\\
    & + F(w) - \frac{1}{n} \sum_{i=1}^n f(\langle w_{|k}, x_{i|k} \rangle + h(x_{i|k}) u_i, y_i, x_{i|k})
\end{align*}
Therefore, it holds that
\begin{align*}
    \Psi 
    &= \sup_{\substack{w \in \R^d, u \in \R^n \\  \langle H, \Sigma^{1/2}_{|d-k} w_{|d-k} \rangle \geq \norm{ G\| \Sigma^{1/2}_{|d-k} w_{|d-k} \|_2 - u}_2}} F(w) - \frac{1}{n} \sum_{i=1}^n f(\langle w_{|k}, x_{i|k} \rangle + h(x_{i|k}) u_i, y_i, x_{i|k}) \\
    &= \sup_{w \in \R^d} F(w) - \frac{1}{n} \inf_{\substack{u \in \R^n \\ \langle H, \Sigma^{1/2}_{|d-k} w_{|d-k} \rangle \geq \norm{ G\| \Sigma^{1/2}_{|d-k} w_{|d-k} \|_2 - u}_2}} \sum_{i=1}^n f(\langle w_{|k}, x_{i|k} \rangle + h(x_{i|k}) u_i, y_i, x_{i|k}).
\end{align*}
Next, we analyze the infimum term:
\begin{align*}
    & \inf_{\substack{u \in \R^n \\ \langle H, \Sigma^{1/2}_{|d-k} w_{|d-k} \rangle \geq \norm{ G\| \Sigma^{1/2}_{|d-k} w_{|d-k} \|_2 - u}_2}} \sum_{i=1}^n f(\langle w_{|k}, x_{i|k} \rangle + h(x_{i|k}) u_i, y_i, x_{i|k}) \\
    = &\inf_{\substack{u \in \R^n \\ \| u \|_2 \leq \langle H, \Sigma^{1/2}_{|d-k} w_{|d-k} \rangle }} \sum_{i=1}^n f(\langle w_{|k}, x_{i|k} \rangle + h(x_{i|k}) \left( u_i + \| \Sigma^{1/2}_{|d-k} w_{|d-k} \|_2 G_i \right), y_i, x_{i|k})\\
    = &\inf_{u \in \R^n} \sup_{\lambda \geq 0} \, \lambda (\| u\|^2 - \langle H, \Sigma^{1/2}_{|d-k} w_{|d-k} \rangle^2 ) \\
    & \qquad \qquad + \sum_{i=1}^n f(\langle w_{|k}, x_{i|k} \rangle + h(x_{i|k}) \left( u_i + \| \Sigma^{1/2}_{|d-k} w_{|d-k} \|_2 G_i \right), y_i, x_{i|k}) \\
    \geq & \sup_{\lambda \geq 0} \inf_{u \in \R^n} \, \lambda (\| u\|^2 - \langle H, \Sigma^{1/2}_{|d-k} w_{|d-k} \rangle^2 ) \\
    & \qquad \qquad + \sum_{i=1}^n f(\langle w_{|k}, x_{i|k} \rangle + h(x_{i|k}) \left( u_i + \| \Sigma^{1/2}_{|d-k} w_{|d-k} \|_2 G_i \right), y_i, x_{i|k}) \\
    = & \sup_{\lambda \geq 0} - \lambda \langle H, \Sigma^{1/2}_{|d-k} w_{|d-k} \rangle^2 \\
    &\qquad + \sum_{i=1}^n \inf_{u_i \in \R} f(\langle w_{|k}, x_{i|k} \rangle + u_i + \| \Sigma^{1/2}_{|d-k} w_{|d-k} \|_2 h(x_{i|k}) G_i, y_i, x_{i|k}) + \frac{\lambda}{h(x_{i|k})^2} u_i^2.
\end{align*}
Now suppose that $f$ takes the form $f(\hat{y},y,x_{|k}) = \frac{1}{h(x_{|k})^2} \tilde{f}(\hat{y}, y)$ for some 1 square-root Lipschitz $\tilde{f}$ and by a union bound, it holds with probability at least $1-\delta$ that
\begin{align*}
     \langle \Sigma^{1/2}_{|d-k}H,  w_{|d-k} \rangle^2 
     &\leq C_{\delta}(w_{|d-k})^2 \\
     \frac{1}{n} \sum_{i=1}^n \frac{1}{h(x_{i|k})^2} \tilde{f}(\langle w_{|k}, x_{i|k} \rangle + \| \Sigma^{1/2}_{|d-k} w_{|d-k} \|_2 h(x_{i|k}) G_i, y_i)
     &\geq (1-\epsilon_{\delta}) \E \left[ \frac{1}{h(x_{|k})^2} \tilde{f}(\langle w, x \rangle, y) \right],
\end{align*}
then the above becomes
\begin{align*}
    & \sup_{\lambda \geq 0} - \lambda \langle \Sigma^{1/2}_{|d-k}H,  w_{|d-k} \rangle^2 + \sum_{i=1}^n \frac{1}{h(x_{i|k})^2} \tilde{f}_{\lambda}(\langle w_{|k}, x_{i|k} \rangle + \| \Sigma^{1/2}_{|d-k} w_{|d-k} \|_2 h(x_{i|k}) G_i, y_i) \\
    \geq & \sup_{\lambda \geq 0} - \lambda \langle \Sigma^{1/2}_{|d-k} H, w_{|d-k} \rangle^2 + \frac{\lambda}{\lambda+1}\sum_{i=1}^n \frac{1}{h(x_{i|k})^2} \tilde{f}(\langle w_{|k}, x_{i|k} \rangle + \| \Sigma^{1/2}_{|d-k} w_{|d-k} \|_2 h(x_{i|k}) G_i, y_i) \\
    \geq & \sup_{\lambda \geq 0} - \lambda C_{\delta}(w_{|d-k})^2 + \frac{\lambda}{\lambda+1} (1-\epsilon)n \E \left[ \frac{1}{h(x_{|k})^2} \tilde{f}(\langle w, x \rangle, y) \right] \\
    \geq & n \left( \sqrt{(1-\epsilon_{\delta}) \E \left[ \frac{1}{h(x_{|k})^2} \tilde{f}(\langle w, x \rangle, y) \right] } - \frac{C_{\delta}(w_{|d-k})}{\sqrt{n}} \right)_+^2
\end{align*}
where we apply Lemma~\ref{lem:some-algebra} in the last step. Then if we take 
\[
F(w) = \left( \sqrt{(1-\epsilon_{\delta}) \E \left[ \frac{1}{h(x_{|k})^2} \tilde{f}(\langle w, x \rangle, y) \right] } - \frac{C_{\delta}(w_{|d-k})}{\sqrt{n}} \right)_+^2
\]
then we have $\Psi \leq 0$. To summarize, we have shown
\[
\left( \sqrt{(1-\epsilon_{\delta}) \E \left[ \frac{1}{h(x_{|k})^2} \tilde{f}(\langle w, x \rangle, y) \right] } - \frac{C_{\delta}(w_{|d-k})}{\sqrt{n}} \right)_+^2 - \frac{1}{n} \sum_{i=1}^n \frac{1}{h(x_{i|k})^2}\tilde{f}(\langle w, x_i\rangle, y_i) \leq 0
\]
which implies 
\[
 \E \left[ \frac{1}{h(x_{|k})^2} \tilde{f}(\langle w, x \rangle, y) \right]
\leq 
(1-\epsilon_{\delta})^{-1} \left( \frac{1}{n} \sum_{i=1}^n \frac{1}{h(x_{i|k})^2}\tilde{f}(\langle w, x_i\rangle, y_i) + \frac{C_{\delta}(w_{|d-k})}{\sqrt{n}} \right)^2. \qedhere
\]
\end{proof}

\begin{restatable}{theorem}{CounterExampleNormBound} \label{thm:counter-example-norm-bound}
Under assumptions~\ref{assumption:counterexample-feature} and \ref{assumption:counterexample-multi-index}, fix any $w_{|k}^* \in \R^k$ and suppose for some $\rho \in (0,1)$, it holds with probability at least $1-\delta/8$ 
\begin{equation}
    \frac{1}{n} \sum_{i=1}^n \left( \frac{y_i - \langle w_{|k}^*, x_{i|k} \rangle}{h(x_{i|k})} \right)^2
    \leq 
    (1+\rho) \cdot  \E \left[ \left( \frac{y - \langle w_{|k}^*, x_{|k} \rangle}{h(x_{|k})} \right)^2 \right].
\end{equation}
Then with probability at least $1-\delta$, for some $\epsilon \lesssim \rho + \log\left( \frac{1}{\delta}\right) \left( \frac{1}{\sqrt{n}} + \frac{1}{\sqrt{R(\Sigma_{|d-k})}} + \frac{n}{R(\Sigma_{|d-k})} \right)$, it holds that
\begin{equation}
    \min_{w \in \R^d: \forall i, \langle w, x_i \rangle = y_i} \| w\|_2^2
    \leq 
    \| w_{|k}^*\|_2^2 + (1+\epsilon) \frac{n \E \left[ \left( \frac{y - \langle w_{|k}^*, x_{|k} \rangle}{h(x_{|k})} \right)^2 \right]}{\Tr(\Sigma_{|d-k})}
\end{equation}
\end{restatable}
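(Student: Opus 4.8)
The plan is to follow the proof of Theorem~\ref{thm:phase-retrieval-norm-bound}: split an interpolating predictor into a fixed low-dimensional piece plus a tail piece whose norm is controlled through the Gaussian Minimax Theorem (via Lemma~\ref{lem:norm-analyze-AO}), applied after a conditioning step that renders the relevant residuals deterministic. Write $w=(w_{|k},w_{|d-k})$. By Assumption~\ref{assumption:counterexample-feature}, $\langle w,x_i\rangle=\langle w_{|k},x_{i|k}\rangle+h(x_{i|k})\langle\Sigma_{|d-k}^{1/2}w_{|d-k},z_i\rangle$, so fixing $w_{|k}=w_{|k}^*$, the predictor interpolates $y_i$ exactly when $(Z\Sigma_{|d-k}^{1/2}w_{|d-k})_i=\xi_i$, where $\xi_i:=(y_i-\langle w_{|k}^*,x_{i|k}\rangle)/h(x_{i|k})$ and $Z=[z_1,\dots,z_n]^T$. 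Since $x_{|k}$ and $x_{|d-k}$ occupy disjoint coordinate blocks, $\|w\|_2^2=\|w_{|k}^*\|_2^2+\|w_{|d-k}\|_2^2$, hence
\[
\min_{w:\,\forall i,\ \langle w,x_i\rangle=y_i}\|w\|_2^2
\ \le\ \|w_{|k}^*\|_2^2+\min_{w_{|d-k}:\,Z\Sigma_{|d-k}^{1/2}w_{|d-k}=\xi}\|w_{|d-k}\|_2^2 .
\]
Unlike the phase retrieval and ReLU regression cases, this decomposition is genuinely orthogonal, so the oblique-projection bookkeeping of Lemma~\ref{lem:orthgonalize-R} is not needed here.

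The key observation is that $\xi$ is a deterministic function of $X_{|k}$ and of the noise variables appearing in Assumption~\ref{assumption:counterexample-multi-index}, all of which are independent of $Z$. Conditioning on $X_{|k}$ and those noise variables, the vector $\xi$ is non-random while $Z$ still has i.i.d.\ $\cN(0,1)$ entries, so Lemma~\ref{lem:norm-analyze-AO} applies with covariance $\Sigma_{|d-k}$: on an event of conditional probability at least $1-\delta/2$ (implicitly requiring $n\gtrsim\log(1/\delta)$ and $R(\Sigma_{|d-k})\gtrsim\log(1/\delta)^2$),
\[
\min_{w_{|d-k}:\,Z\Sigma_{|d-k}^{1/2}w_{|d-k}=\xi}\|w_{|d-k}\|_2^2
\ \le\ (1+\epsilon')\,\frac{\|\xi\|_2^2}{\Tr(\Sigma_{|d-k})},\qquad
\epsilon'\lesssim\log(1/\delta)\Big(\tfrac1n+\tfrac1{\sqrt{R(\Sigma_{|d-k})}}+\tfrac{n}{R(\Sigma_{|d-k})}\Big).
\]
Because this holds for every realization of the conditioning variables, it holds unconditionally with probability $1-\delta/2$ by the tower rule.

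Finally, $\|\xi\|_2^2=\sum_{i=1}^n\big((y_i-\langle w_{|k}^*,x_{i|k}\rangle)/h(x_{i|k})\big)^2$ is exactly $n$ times the empirical weighted square loss of $w_{|k}^*$, so the hypothesis of the theorem gives $\|\xi\|_2^2\le n(1+\rho)\,\E[((y-\langle w_{|k}^*,x_{|k}\rangle)/h(x_{|k}))^2]$ with probability $1-\delta/8$. A union bound combines this with the event above, and using $(1+\rho)(1+\epsilon')=1+O(\rho+\epsilon')$ and absorbing $1/n$ into $1/\sqrt n$ yields the claim with $\epsilon\lesssim\rho+\log(1/\delta)(\tfrac1{\sqrt n}+\tfrac1{\sqrt{R(\Sigma_{|d-k})}}+\tfrac{n}{R(\Sigma_{|d-k})})$. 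There is no serious obstacle here; the one point requiring care is the conditioning step — verifying that after conditioning $\xi$ is deterministic and $Z$ remains a fresh standard Gaussian matrix (so that the non-random-$\xi$ hypothesis of Lemma~\ref{lem:norm-analyze-AO} is met), and then lifting the conditional high-probability statement back to an unconditional one via the tower rule and a union bound with the assumed concentration of the empirical weighted loss.
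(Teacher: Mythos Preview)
Your proposal is correct and follows essentially the same approach as the paper: decompose $w$ into the fixed $w_{|k}^*$ and a tail $w_{|d-k}$, rewrite the interpolation constraint as $Z\Sigma_{|d-k}^{1/2}w_{|d-k}=\xi$ with $\xi_i=(y_i-\langle w_{|k}^*,x_{i|k}\rangle)/h(x_{i|k})$, condition so that $\xi$ is deterministic while $Z$ stays standard Gaussian, and then bound the tail norm via the CGMT-based auxiliary-problem analysis. The only cosmetic difference is that you invoke Lemma~\ref{lem:norm-analyze-AO} as a black box after conditioning, whereas the paper unrolls the same CGMT argument inline (choosing $w_{|d-k}=-\alpha\Sigma_{|d-k}^{1/2}H/\|\Sigma_{|d-k}^{1/2}H\|_2$ and redoing the concentration-plus-AM--GM computation), folding the assumed empirical-loss concentration into the same union bound rather than combining it afterward.
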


\begin{proof}
Fix any $w_{|k}^* \in \R^k$, we observe that
\begin{align*}
\min_{w \in \R^d: \forall i, \langle w, x_i \rangle = y_i} \| w\|_2^2
& = \min_{w \in \R^d: \forall i, \langle w_{|k}, x_{i|k} \rangle + \langle w_{|d-k}, x_{i|d-k} \rangle = y_i} \| w_{|k}\|_2^2 + \| w_{|d-k}\|_2^2 \\
& \leq \| w_{|k}^*\|_2^2 + \min_{\substack{w_{|d-k} \in \R^{d-k}: \\ \forall i, \langle w_{|d-k}, x_{i|d-k} \rangle = y_i - \langle w_{|k}^*, x_{i|k} \rangle}}  \| w_{|d-k}\|_2^2.
\end{align*}
Therefore, it is enough analyze 
\begin{align*}
    \Phi :=
    &\min_{\substack{w_{|d-k} \in \R^{d-k}: \\ \forall i, \langle w_{|d-k}, x_{i|d-k} \rangle = y_i - \langle w_{|k}^*, x_{i|k} \rangle}} \| w_{|d-k}\|_2 
    = \min_{\substack{w_{|d-k} \in \R^{d-k}: \\ \forall i, \langle w_{|d-k}, \Sigma_{|d-k}^{1/2} z_i \rangle = \frac{y_i - \langle w_{|k}^*, x_{i|k} \rangle}{h(x_{i|k})}}} \| w_{|d-k}\|_2.
\end{align*}
By introducing the Lagrangian, we have
\begin{align*}
    \Phi 
    = & \min_{w_{|d-k} \in \R^{d-k}} \max_{\lambda \in \R^n} \, \sum_{i=1}^n \lambda_i \left( \langle \Sigma_{|d-k}^{1/2} w_{|d-k},  z_i \rangle - \frac{y_i - \langle w_{|k}^*, x_{i|k} \rangle}{h(x_{i|k})} \right) +  \| w_{|d-k}\|_2 \\
    = & \min_{w_{|d-k} \in \R^{d-k}} \max_{\lambda \in \R^n} \, \langle \lambda, Z \Sigma_{|d-k}^{1/2} w_{|d-k} \rangle - \sum_{i=1}^n \lambda_i \left( \frac{y_i - \langle w_{|k}^*, x_{i|k} \rangle}{h(x_{i|k})} \right) +  \| w_{|d-k}\|_2.
\end{align*}
Similarly, the above is only random in $Z$ after conditioning on $X_{|k} w^*_{|k}$ and $\xi$ and the distribution of $Z$ remains unchanged after conditioning because of the independence. By the same truncation argument as before and CGMT, it suffices to consider the auxiliary problem:
\begin{align*}
    \min_{w_{|d-k} \in \R^{d-k}} \max_{\lambda \in \R^n} \, &
    \| \lambda \|_2 \langle H, \Sigma_{|d-k}^{1/2} w_{|d-k} \rangle + 
    \sum_{i=1}^n \lambda_i \left( \| \Sigma_{|d-k}^{1/2} w_{|d-k} \|_2 G_i - \frac{y_i - \langle w_{|k}^*, x_{i|k} \rangle}{h(x_{i|k})} \right) \\
    & +  \| w_{|d-k}\|_2 \\
    = \min_{w_{|d-k} \in \R^{d-k}} \max_{\lambda \in \R^n} \, &
    \| \lambda \|_2 \left( \langle H, \Sigma_{|d-k}^{1/2} w_{|d-k} \rangle + 
    \sqrt{\sum_{i=1}^n \left( \| \Sigma_{|d-k}^{1/2} w_{|d-k} \|_2 G_i - \frac{y_i - \langle w_{|k}^*, x_{i|k} \rangle}{h(x_{i|k})} \right)^2} \right) \\
    & +  \| w_{|d-k}\|_2
\end{align*}
and so we can define
\[
\Psi :=
\min_{\substack{w_{|d-k} \in \R^{d-k}: \\  
\sqrt{\sum_{i=1}^n \left( \| \Sigma_{|d-k}^{1/2} w_{|d-k} \|_2 G_i - \frac{y_i - \langle w_{|k}^*, x_{i|k} \rangle}{h(x_{i|k})} \right)^2}\leq \langle -\Sigma_{|d-k}^{1/2} H,  w_{|d-k} \rangle}} \| w_{|d-k}\|_2.
\]
To upper bound $\Psi$, we consider $w_{|d-k}$ of the form $-\alpha \frac{\Sigma_{|d-k}^{1/2} H}{\| \Sigma_{|d-k}^{1/2} H \|_2}$, then we just need
\[
\sum_{i=1}^n \left( \alpha \frac{\|\Sigma_{|d-k} H\|_2}{\| \Sigma_{|d-k}^{1/2} H \|_2}  G_i - \frac{y_i - \langle w_{|k}^*, x_{i|k} \rangle}{h(x_{i|k})} \right)^2 \leq \alpha^2 \| \Sigma_{|d-k}^{1/2} H \|_2^2.
\]
By a union bound, the following occur together with probability at least $1 - \delta/2$ for some absolute constant $C > 0$:
\begin{enumerate}
    \item Using the first part of Lemma~\ref{lem:sigmah-concentration}, we have
    \[
    \| \Sigma_{|d-k}^{1/2} H \|_2^2 \geq \Tr(\Sigma_{|d-k}) \left( 1 - C \frac{\log (32/\delta)}{\sqrt{R(\Sigma_{|d-k})}} \right)
    \]
    \item  Using the last part of Lemma~\ref{lem:sigmah-concentration}, requiring $R(\Sigma_{|d-k}) \gtrsim \log(32/\delta)^2$
    \[
    \frac{\| \Sigma_{|d-k} H \|_2^2}{\| \Sigma_{|d-k}^{1/2} H\|_2^2}  \leq C \log(32/\delta) \frac{\Tr(\Sigma_{|d-k}^2)}{\Tr(\Sigma_{|d-k}) }
    \]
    \item Using subexponential Bernstein's inequality (Theorem 2.8.1 of \citet{vershynin2018high}), requiring $n = \Omega(\log(1/\delta))$,
    \[ 
    \frac{1}{n}\sum_{i=1}^n G_i^2\leq 2 
    \]
    \item Using standard Gaussian tail bound $\Pr(|Z| \geq t) \leq 2 e^{-t^2/2}$, we have
    \begin{align*}
        \left| \frac{1}{n}\sum_{i=1}^n \frac{G_i(y_i - \langle w_{|k}^*, x_{i|k} \rangle)}{h(x_{i|k})} \right| 
        \leq 
        \sqrt{\frac{1}{n} \sum_{i=1}^n \left( \frac{y_i - \langle w_{|k}^*, x_{i|k} \rangle}{h(x_{i|k})} \right)^2 } \, \sqrt{\frac{2 \log(32/\delta)}{n}} 
    \end{align*}
    \item By assumption, it holds that
    \[
    \frac{1}{n} \sum_{i=1}^n \left( \frac{y_i - \langle w_{|k}^*, x_{i|k} \rangle}{h(x_{i|k})} \right)^2
    \leq 
    (1+\rho) \cdot  \E \left[ \left( \frac{y - \langle w_{|k}^*, x_{|k} \rangle}{h(x_{|k})} \right)^2 \right].
    \]
\end{enumerate}
Then we use the above and the AM-GM inequality to show that
\begin{align*}
    & \frac{1}{n} \sum_{i=1}^n \left( \alpha \frac{\|\Sigma_{|d-k} H\|_2}{\| \Sigma_{|d-k}^{1/2} H \|_2}  G_i - \frac{y_i - \langle w_{|k}^*, x_{i|k} \rangle}{h(x_{i|k})} \right)^2 \\
    \leq \, & 2\alpha^2 \frac{\|\Sigma_{|d-k} H\|_2^2}{\| \Sigma_{|d-k}^{1/2} H \|_2^2} + (1+\rho) \cdot  \E \left[ \left( \frac{y - \langle w_{|k}^*, x_{|k} \rangle}{h(x_{|k})} \right)^2 \right]\\
    &+ 2 \frac{\alpha\|\Sigma_{|d-k} H\|_2}{\| \Sigma_{|d-k}^{1/2} H \|_2} \sqrt{(1+\rho) \cdot  \E \left[ \left( \frac{y - \langle w_{|k}^*, x_{|k} \rangle}{h(x_{|k})} \right)^2 \right]} \, \sqrt{\frac{2 \log(32/\delta)}{n}}\\
    \leq \, & C \log(32/\delta)  \left( 2 + \sqrt{\frac{2 \log(32/\delta)}{n}} \right)\alpha^2 \frac{\Tr(\Sigma_{|d-k}^2)}{\Tr(\Sigma_{|d-k}) } \\
    &+ \left( 1 + \sqrt{\frac{2 \log(32/\delta)}{n}} \right) (1+\rho) \cdot  \E \left[ \left( \frac{y - \langle w_{|k}^*, x_{|k} \rangle}{h(x_{|k})} \right)^2 \right].
\end{align*}
After some rearrangements, it is easy to see that we can choose
\[
\alpha^2 = \frac{\left( 1 + \sqrt{\frac{2 \log(32/\delta)}{n}} \right) (1+\rho) }{  1 - C \frac{\log (32/\delta)}{\sqrt{R(\Sigma_{|d-k})}}  - C \log(32/\delta)  \left( 2 + \sqrt{\frac{2 \log(32/\delta)}{n}} \right) \frac{n}{R(\Sigma_{|d-k}) } } \frac{n \E \left[ \left( \frac{y - \langle w_{|k}^*, x_{|k} \rangle}{h(x_{|k})} \right)^2 \right]}{\Tr(\Sigma_{|d-k})}.
\]
and the proof is complete.
\end{proof}

\end{document}